\documentclass{article}




\usepackage[final]{neurips_2022}


\usepackage[utf8]{inputenc} 
\usepackage[T1]{fontenc}    
\usepackage{hyperref}       
\usepackage{url}            
\usepackage{booktabs}       
\usepackage{amsfonts}       
\usepackage{nicefrac}       
\usepackage{microtype}      
\usepackage{xcolor}         

\title{Symmetry Teleportation for Accelerated Optimization}

%

\author{
  Bo Zhao \\
  University of California, San Diego \\
  \texttt{bozhao@ucsd.edu} \\
  \And
  Nima Dehmamy \\
  IBM Research\\
  \texttt{nima.dehmamy@ibm.com} \\
  \And
  Robin Walters \\
  Northeastern University \\
  \texttt{r.walters@northeastern.edu} \\
  \And
  Rose Yu \\
  University of California, San Diego \\
  \texttt{roseyu@ucsd.edu} \\
}

\usepackage{amsthm}
\usepackage{subfigure}

\usepackage{amsmath,amssymb,amsfonts,amscd}
\usepackage{mathrsfs}  

\usepackage[all,cmtip]{xy}
\usepackage{enumerate}
\usepackage{latexsym}
\usepackage{natbib}
\setcitestyle{authoryear,open={(},close={)}}
\usepackage{xcolor}
\usepackage{fullpage}
\usepackage{url}
\usepackage{hyperref}
\usepackage{marvosym}
\usepackage{wrapfig}
\usepackage{graphicx}
\usepackage{bm}

\newcommand{\R}{{\mathbb R}}   

\newcommand{\eat}[1]{}
\newcommand{\out}{\eat}

\newtheorem{theorem}{Theorem}[section]
\newtheorem{proposition}[theorem]{Proposition}
\newtheorem{assumption}[theorem]{Assumption}

\newtheorem{lemma}[theorem]{Lemma}

\newtheorem{corollary}[theorem]{Corollary}


%


\newcommand{\ryedit}[1]{{\color{magenta} #1}}
\newcommand{\ry}[1]{\ryedit{[RY: #1]}}
\newcommand{\nd}[1]{{\color{orange}[ND: #1]} }
\newcommand{\bz}[1]{{\color{blue} BZ: #1}}

\usepackage[ruled, lined, linesnumbered, commentsnumbered, longend]{algorithm2e}

\makeatletter

\newcommand{\Rmnum}[1]{\expandafter\@slowromancap\romannumeral #1@}
\makeatother

\def\vv{{\bm{v}}}
\def\vw{{\bm{w}}}

\def\eps{{\varepsilon}}
\DeclareMathOperator{\Tr}{Tr}
\newcommand{\ro}{\partial}

\renewcommand{\L}{\mathcal{L}}
\newcommand{\pa}[1]{\left(#1\right)}
\newcommand{\br}[1]{\left[#1\right]}
\newcommand{\bk}[1]{\left<#1\right>}
\newcommand{\grad}{\nabla}

\newenvironment{manualproposition}[1]{%
  \manualtheoreminner
}{\endmanualtheoreminner}

\DeclareMathOperator*{\argmax}{arg\,max}
\DeclareMathOperator*{\argmin}{arg\,min}

\begin{document}

\maketitle

\begin{abstract}
Existing gradient-based optimization methods update parameters locally, in a direction that minimizes the loss function. 
We study a different approach, symmetry teleportation, that allows parameters to travel a large distance on the loss level set, in order to improve the convergence speed in subsequent steps. 
Teleportation exploits symmetries in the loss landscape of optimization problems. 
We derive loss-invariant group actions for test functions in optimization and multi-layer neural networks, and prove a necessary condition for teleportation to improve convergence rate. 
We also show that our algorithm is closely related to second order methods. 
Experimentally, we show that teleportation improves the convergence speed of gradient descent and AdaGrad for several optimization problems including test functions, multi-layer regressions, and MNIST classification. 
Our code is available at \url{https://github.com/Rose-STL-Lab/Symmetry-Teleportation}.



\end{abstract}
\section{Introduction}
Consider the optimization problem of finding $\text{argmin}_\vw \L(\vw)$, where $\L$ is a loss function and $\vw $ 
are the parameters. 
While finding global minima of $\L(\vw)$ is hard for non-convex problems, we can use gradient descent (GD) to find local minima. 
In GD we apply the following update rule at every step:
\begin{align}
    \vw_{t+1} \leftarrow \vw_t - \eta \grad \L,
\end{align}
where $\eta$ is the learning rate. Gradient descent is a first-order method that uses only gradient information. 
It is easy to compute but suffers from slow convergence. 
Second-order methods such as Newton's method use the second derivative to account for the landscape geometry. 
These methods enjoy faster convergence, but calculating the second derivative (Hessian) can be computationally expensive over high dimensional spaces \citep{hazan2019lecture}. 


\begin{figure}[ht!]
\centering
\subfigure{\label{fig:GD-first}\includegraphics[width=0.32\textwidth]{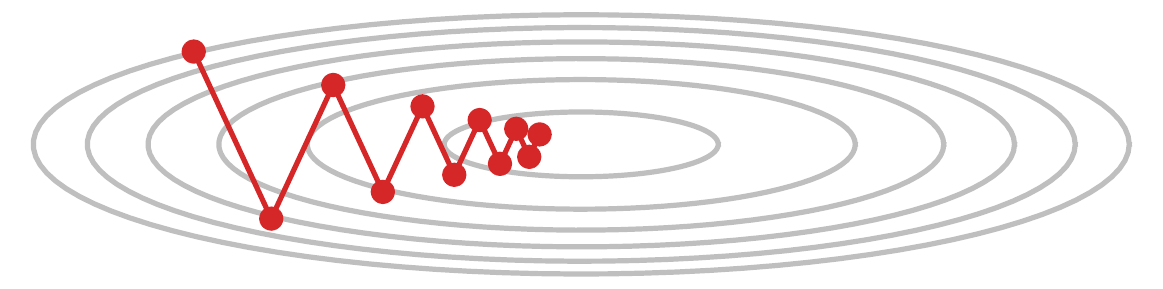}}
\subfigure{\label{fig:GD-second}\includegraphics[width=0.32\textwidth]{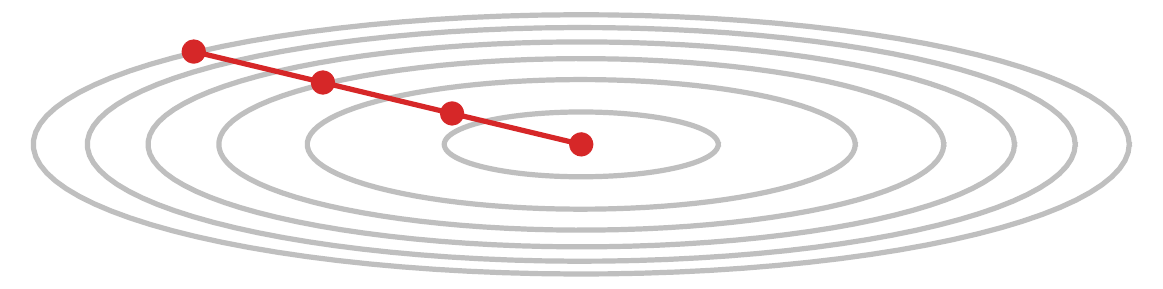}}
\subfigure{\label{fig:GD-proposed}\includegraphics[width=0.32\textwidth]{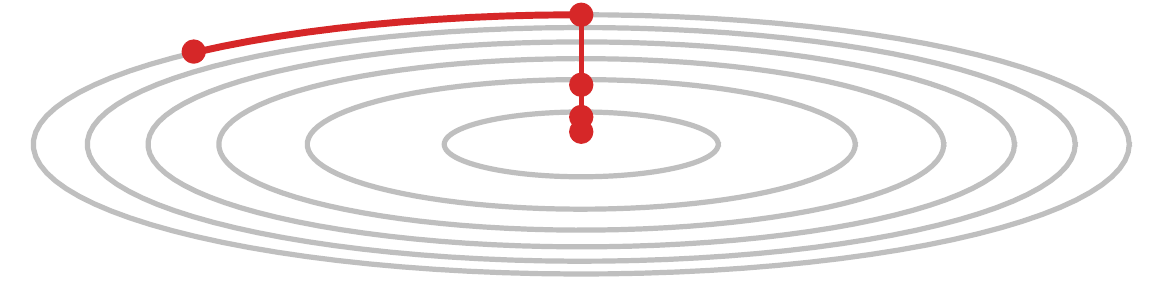}}
\caption{Left to right: gradient descent, second-order methods, proposed method.}
\label{fig:sales_pic}
\end{figure}

In this paper, we propose a new optimization method based on parameter space symmetries, which are group actions on the parameter space that leave the loss unchanged. Our algorithm takes advantage of higher-order landscape geometry but uses only gradient information in most steps, thus avoiding the computational cost of second-order methods.

Specifically, we look beyond local optimization and ask: \textit{what if we allow the parameters to make a big jump once in a while?} 
As shown in Figure \ref{fig:sales_pic}, during optimization, we teleport the parameters to another point with steeper gradients using a loss-invariant symmetry transformation.
After teleportation, the loss stays the same, but the gradient and hence the rate of loss decay changes. The increased magnitude of the gradient can reduce the number of steps required to converge, leading to acceleration of the gradient descent.


The locality of gradient descent is reflected in its formulation in terms of a proximal term; our method circumvents this locality by teleporting to new locations with the same loss.
A step of GD is 
equivalent to the following proximal mapping \citep{combettes2011proximal}. Let $\bk{\cdot, \cdot}$ denote the inner product, we have
\begin{align}
    \vw_{t+1} = \text{argmin}_{\vw} \left\{\eta \langle (\grad \L)|_{\vw_t}, \vw \rangle + \frac{1}{2}\|\vw - \vw_t\|_2^2 \right\}.
\end{align}
The term $\frac{1}{2}\|\vw - \vw_t\|^2$ is the proximal term that keeps $\vw_{t+1}$ close to $\vw_t$. 
Adaptive gradient methods define the proximal term using the Mahalanobis distance $\|\vw - \vw_t\|_{G^{-1/2}}^2$ to account for landscape geometry. 
For example, in AdaGrad, $G$ is the sum of the outer product of gradients \citep{duchi2011adaptive}. 
Our proposed teleportation technique relaxes the requirement from the proximal term. 
We teleport to points on the same level set of $\L(\vw_t)$, but allow $\vw$  to be far from $\vw_t$ in Euclidean distance. 
\out{
For example, the update rule in AdaGrad replaces the Euclidean distance by the Mahalanobis distance $\|\vw - \vw_t\|_{G^{-1/2}}^2$ \cite{duchi2011adaptive}.
In our case,  we define the  $NS$ as the quotient space \ry{$NS =V/S$ ?? } 
\nd{This requires more careful definition of $S$ and whether $NS$ is a metric space. It may be too much to discuss here.}
\begin{align}
    \vw_{t+1} = \text{argmin}_{\vw \in NS } \left\{\eta \langle (\grad \L)|_{\vw_t}, \vw \rangle + \frac{1}{2}\|\vw - \vw_t\|_2^2 \right\}.
\end{align}
}

In summary, our main contributions are:
\begin{itemize}
    \item We propose \textit{symmetry teleportation}, an accelerated gradient-based optimization algorithm that exploits symmetries in the loss landscape.
    \item We derive loss-invariant group actions of test functions and multi-layer neural networks.
    \item We provide necessary conditions and examples of when teleportation improves convergence rate of the current and subsequent steps.
    \item We show that our algorithm is closely related to second-order methods. 
    \item Experimentally, we show that teleportation improves the convergence speed of gradient descent and AdaGrad in various optimization problems. 
\end{itemize}
\section{Related Work}

Continuous parameter space  symmetries have been identified in neural networks with homogeneous \citep{badrinarayanan2015symmetry,du2018algorithmic} and radial activation functions \citep{ganev2021qr}.
The effect of symmetry transformations on gradients has been examined for translation, scale, and rescale symmetries in \cite{kunin2021neural}.  We contribute to this line of work by deriving loss-invariant group actions in multi-layer neural networks with invertible activation functions.

Several works exploit symmetry to facilitate optimization. For example, Path-SGD \citep{neyshabur2015path-sgd} improves optimization in ReLU networks by path regularization. $\mathcal{G}$-SGD \citep{meng2019mathcal} performs weight updates in a scale-invariant space, which also exploits the symmetry in ReLU networks.
\cite{van2017l2} analyzes the effect of L2 regularization in batch normalization which gives scale-invariant functions. 
\citet{bamler2018improving} transforms parameters in the symmetry orbits to address slow movement along directions of weakly broken symmetry and find the optimal $g \in G$ that minimizes $\L(g \cdot \vw)$. In comparison, we search within $G$-orbits for  points which maximize $|d\L(g\cdot \vw)/dt| = \|\nabla \L (g\cdot\vw)\|^2 $. 

Natural gradient \citep{amari1998natural}, adaptive gradient methods \citep{duchi2011adaptive, kingma2015adam}, and their approximations \citep{martens2018kfac, gupta2018shampoo} improve the direction of parameter updates instead of directly transforming parameters. If the group acts transitively on the level set, and we teleport to the point with maximum gradient, then our update direction is the same as that in Newton's method. 
We prove that our algorithm is connected to second-order optimization methods and show that teleportation can be used to improve these methods empirically. 

The concept of neural teleportation was first explored using quiver representation theory \citep{armenta2021representation, armenta2020neural}. These works provide a way to explore level curves of the loss of neural networks and show that random teleportation speeds up gradient descent experimentally and theoretically. Our algorithm improves neural teleportation by searching for teleportation destinations that lead to the largest improvement in the magnitude of gradient. 

Several works study how parameter initialization affects convergence rate \citep{saxe2014exact, tarmoun2021understanding, min2021explicit}. If we apply a group transformation only at initialization, our method is similar to that of \cite{tarmoun2021understanding}. We do not guarantee that the transformed parameters lead to faster convergence rate throughout the entire training. However, we accelerate convergence at least for a short time after initialization. Additionally, our method is not restricted to initialization and can be applied at any time during training.

Most contemporary neural networks are overparametrized.  While this has been shown to improve generalization \citep{belkin2019reconciling}, an important question is how overparametrization affects optimization. A series of works starting from \cite{arora2018optimization} show that overparametrization resulting from the depth of a neural network accelerates convergence. Another view is that the symmetry created by overparametrization poses constraints on trajectories in the form of conserved quantities \citep{gluch2021noether}. Additionally, the symmetry generates additional trajectories. When the trajectories created by overparametrization are equivalent, model compression by removing symmetry reduces training time \citep{ganev2021qr}. However, when trajectories are not equivalent, gradient flows on some paths are faster than others. We search for the better paths created by overparametrization. 



\section{Symmetry Teleportation}
We propose \textit{symmetry teleportation}, an accelerated gradient-based optimization algorithm that exploits symmetries in the loss landscape. Symmetry teleportation searches for the best gradient descent trajectory by teleporting parameters to a point with larger gradients using a group action. The resulting algorithm requires only  gradient computations but is able to account for the global landscape geometry, leading to faster loss decay. 

Let the group $G$ be a set of symmetries which leave the loss function $\L$ invariant: $\L(g \cdot(\vw, X)) = \L(\vw, X)$, where $g \in G$. We perform gradient descent for $t_{max}$ steps. We define an index set $K \subseteq \{0, 1, 2, ..., t_{max}-1\}$ as a teleportation schedule. At epochs that are in the schedule, we transform parameters using group element $g \in G$ to the location where the gradient is largest, then continue with gradient descent. Algorithm \ref{alg:teleport} describes the details of this procedure.  Note that the loss does not change after teleportation (Line 2-5) since $\L$ is $G$-invariant. 

\vspace{-3mm}
\begin{algorithm}
    \SetKwInput{Input}{Input}
    \SetKwInput{Output}{Output}

    \Input{Loss function $\L(\vw)$, learning rate $\eta$, number of epochs $t_{max}$, initialized parameters $\vw_0$, symmetry group $G$, teleportation schedule $K$.}
    \Output{$\vw_{t_{max}}$.}

    \For{$t \leftarrow 0$ \KwTo $t_{max}-1$}{
        \If{$t \in K$}{
            $g \leftarrow \text{argmax}_{g \in G} \| (\grad \L)|_{g \cdot \vw_t} \|^2$ \\
            $\vw_t \leftarrow g \cdot \vw_t$
        }
        $\vw_{t+1} \leftarrow \vw_t - \eta (\grad \L)|_{\vw_t}$
    }

    \KwRet{$\vw_{t_{max}}$}
    \caption{Symmetry Teleportation}
    \label{alg:teleport}
\end{algorithm}
\vspace{-3mm}
Algorithm \ref{alg:teleport} can be generalized to apply teleportation to stochastic gradient descent (Appendix \ref{appendix:algorithm-sgd}). We discuss some design choices below and provide detailed analysis in the next two sections. 

When the action of $G$ is continuous, teleportation can be implemented by parameterizing and performing gradient ascent on $g$. 
For example, the $\mathrm{SO}(2)$ group can be parameterized by the rotation angle $\theta$. 
Small transformations $g\in\mathrm{GL}_d(\R)$ (general linear group) can be parameterized as $g \approx I+\eps T$ where $\eps\ll 1$ and $T$ are arbitrary $d\times d$ matrices. 
For discrete groups, search algorithms or random sampling can be used to find a group element that improves the magnitude of the gradient.

Although $g \cdot (\vw, X)$ does not change $X$ in the cases we consider in this paper, Algorithm \ref{alg:teleport} can be extended to allow transformations on both parameters and data. The group actions on data during teleportations can be precomposed and applied to the input data at inference time. The  algorithm extension that allows transformation on the input can be found in Appendix \ref{appendix:algorithm-data-transform}.

The teleportation schedule $K$ is a hyperparameter that determines when to perform teleportation. We define $K$ as a set to allow flexible teleportation schedules, such as with non-fixed frequencies or teleporting only at the earlier epochs.

\section{Symmetry Groups of Certain Optimization Problems}
We give a few practical examples to demonstrate how teleportation can be used to accelerate optimization. 
Specifically, we first consider two test functions 
which are often used to evaluate optimization algorithms \citep{back1996evolutionary}. 
We then derive the symmetries of   multi-layer  neural networks. 

\subsection{Test functions}
\paragraph{Rosenbrock function.}
The Rosenbrock function originally introduced by  \cite{rosenbrock1960automatic} has a characteristic global minimum that is inside a long, narrow, parabolicly-shaped flat valley. Finding the valley is easy but reaching the minimum is difficult. On a 2-dimensional space, the Rosenbrock function has the following form: 
\begin{align}
    \L_{r}(x_1, x_2) = 100(x_1^2 - x_2)^2 + (x_1 - 1)^2.
\label{eqn:rosenbrock}
\end{align}

\paragraph{Booth function.}
The Booth function \citep{jamil2013literature} is also defined on $\mathbb{R}^2$ and has one global minimum at $(1,3)$ where the function evaluates to $0$:
\begin{align}
    \L_{b}(x_1, x_2) = (x_1 + 2x_2 - 7)^2 + (2x_1 + x_2 - 5)^2.
\label{eqn:booth}
\end{align}
The following proposition identifies the symmetry of these two test functions. 
\begin{proposition}
The Rosenbrock and Booth functions have rotational symmetry. In other words, there exist action maps $a_{r}, a_{b}: \mathrm{SO}(2) \times \R^2 \xrightarrow[]{} \R^2$, such that for all $g \in \mathrm{SO}(2)$, 
\[\L_{r} (x_1, x_2) = \L_{r} (a_{r}(g, [x_1, x_2])) \quad \text{and} \quad  \L_{b} (x_1, x_2) = \L_{b} (a_{b}(g, [x_1, x_2])).\]
\end{proposition}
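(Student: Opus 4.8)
The plan is to exploit the fact that both loss functions are sums of two squares, so each can be written as $\L(x_1,x_2) = \|\Phi(x_1,x_2)\|^2$ for a suitable map $\Phi\colon\R^2\to\R^2$, and then to transport the standard norm-preserving rotation action on the target copy of $\R^2$ back to the parameter space through $\Phi$. For the Booth function I would take $\Phi_b(x_1,x_2) = (x_1+2x_2-7,\; 2x_1+x_2-5)$, which is the affine map $x\mapsto Ax-b$ with $A=\left(\begin{smallmatrix}1&2\\2&1\end{smallmatrix}\right)$ and $b=(7,5)$; since $\det A = -3\neq 0$, this is an invertible affine bijection of $\R^2$. For the Rosenbrock function I would take $\Phi_r(x_1,x_2) = (10(x_1^2-x_2),\; x_1-1)$, whose inverse is given explicitly by $x_1 = u_2+1$ and $x_2 = (u_2+1)^2 - u_1/10$ for a target point $(u_1,u_2)$, so $\Phi_r$ is a (nonlinear) bijection of $\R^2$ as well.

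Given these maps, I would define each action by conjugation,
\[
a(g, [x_1,x_2]) = \Phi^{-1}\bigl(g\cdot\Phi(x_1,x_2)\bigr),
\]
taking $\Phi=\Phi_r$ for $a_r$ and $\Phi=\Phi_b$ for $a_b$, where $g\in\mathrm{SO}(2)$ acts on the target $\R^2$ in the usual way by left multiplication by a rotation matrix. The next step is to check that this is a genuine group action: the identity rotation gives $a(e,x) = \Phi^{-1}(\Phi(x)) = x$, and compatibility $a(g_1 g_2, x) = a(g_1, a(g_2, x))$ holds because the inner $\Phi\circ\Phi^{-1}$ cancels. Loss invariance is then immediate from the sum-of-squares form together with the fact that rotations preserve the Euclidean norm: $\L(a(g,x)) = \|\Phi(a(g,x))\|^2 = \|g\cdot\Phi(x)\|^2 = \|\Phi(x)\|^2 = \L(x)$.

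The only real subtlety, and the step I would treat most carefully, is establishing that each $\Phi$ is invertible on all of $\R^2$, since the action map must be defined everywhere on the parameter space. For Booth this reduces to the nonvanishing of $\det A$, and for Rosenbrock to exhibiting the explicit inverse above; both are routine once the sum-of-squares decomposition is spotted. It is worth noting that the construction generalizes: any loss of the form $\L = \sum_{i} f_i^2$ whose component map $\Phi = (f_1,\dots,f_n)$ is a bijection carries a pulled-back $\mathrm{SO}(n)$ symmetry, and the two test functions are simply the $n=2$ instances of this phenomenon.
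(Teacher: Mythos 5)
Your proposal is correct and takes essentially the same route as the paper: the paper's proof uses exactly your maps ($h_r(x_1,x_2)=(10(x_1^2-x_2),\,x_1-1)$ and $h_b(x_1,x_2)=(x_1+2x_2-7,\,2x_1+x_2-5)$), verifies they are bijections with explicit inverses, and defines the action by conjugation $g\cdot(x_1,x_2)=h^{-1}\bigl(\rho(g)\,h(x_1,x_2)\bigr)$, with invariance following since rotations preserve the Euclidean norm. Your explicit check of the group-action axioms and the remark about the general $\mathrm{SO}(n)$ pullback for sum-of-squares losses are harmless additions consistent with the paper's own generalization to the $2N$-variable Rosenbrock function.
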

The exact forms of the action maps are deferred to Appendix \ref{app:rosenbrock_symmetry} and \ref{app:booth_symmetry}.
During the teleportation step, our goal is to maximize the gradient within a level set of the loss:
$
    \max_{ g \in \mathrm{SO}(2)} \| \frac{d \L(g \cdot (x_1, x_2))}{dt} \| 
$.

\out{
\subsection{Two-layer neural network}
\bz{To be removed.}

Given a two-layer neural network with Leaky ReLU as the activation function.  $X \in \R^{d \times n}, Y \in \R^{d \times n}$ are the input and output. The objective function for regression is
\begin{align}
    \L(X, Y, U, V) = \| Y - U \sigma(VX) \|_2^2
    \label{eq:two-layer-obj}
\end{align}
where $U \in \R^{d \times k}, V \in \R^{k\times d}$, and $\sigma: \R^{k\times d} \xrightarrow[]{} \R^{k\times d}$ is the element-wise Leaky ReLU. As shown in section 2 in this document, $\L$ has a $GL_8(\R)$ symmetry. For $g \in G$, the following action leaves $\L$ unchanged:
\begin{align}
    g \cdot (U, V) = (g \cdot U, g \cdot V) = (Ug^{-1}, \sigma^{-1} (g \sigma (VX)) X^{-1})
    \label{eq:two-layer-group-action}
\end{align}

During the symmetry transform step, our goal is to maximize the learning rate within a level set of the loss:
$
    \max_{g \in GL(\R)} \frac{d \L(g \cdot (U, V))}{dt} 
    \label{eq:two-layer-group-obj}
$
One approach is to find an element $x$ in the Lie algebra of $G$, such that transforming $U,V$ by the group element $g=\exp(x)$ improves the gradient $\frac{d\L}{dt}$. 
}


\subsection{Multi-layer Neural Networks}
Next, we consider feed-forward neural networks.
Denote the output of the $m$th layer by $h_m \in \R^{d_m \times n}$, where $d_m$ is the hidden dimension and $n$ is the number of samples. 
Denote the input by $h_0 = X \in \R^{d_0\times n}$. 
In terms of the previous layer output $\tilde{h}_m = W_m h_{m-1}$ where $W_m\in \R^{d_{m}\times d_{m-1}}$ (we absorb biases into $W_m$ by adding an extra row of ones in $\tilde{h}_{m-1}$), the output $h_m$ is 
\begin{align}
    h_m &= \sigma(\tilde{h}_m) = \sigma \pa{W_m h_{m-1} }.
\end{align}

We assume the activation functions $\sigma: \R\to \R $ are bijections. 
For instance, Leaky-ReLU is bijective. 
For other activation, such Sigmoid or Tanh, we analytically extend them to  bijective functions (e.g. $\tanh(x) + e^{x-N} - e^{-x+N}$ for $N\gg 1$). 
For linear activations, we have linear symmetries: 

\begin{proposition}
\label{thm:mult-layer}
A linear network is invariant under all groups $G_m \equiv \mathrm{GL}_{d_m}$ acting as 
\begin{align}
    g \cdot (W_{m}, W_{m-1}) &= (W_{m}g^{-1}, g W_{m-1}), &
    g \cdot W_{k} = W_k, \quad \forall k \notin \{m,m-1\}. 
\end{align}
\end{proposition}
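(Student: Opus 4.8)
The plan is to reduce the claim to a single transparent cancellation. Since the network has linear activations, $\sigma = \mathrm{id}$, so the forward pass collapses: writing $M$ for the depth, the network output is
\begin{align}
    h_M = W_M W_{M-1} \cdots W_1 X,
\end{align}
and the loss $\L$ depends on the weights $\{W_k\}$ only through this output, hence only through the end-to-end matrix $W \equiv W_M W_{M-1}\cdots W_1$. It therefore suffices to show that the proposed action leaves the product $W$ unchanged; invariance of $\L$ then follows immediately, because $h_M$, and thus every quantity the loss sees, is unchanged while $X$ is left untouched.

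Next I would isolate the two adjacent factors affected by the action. In the product $W$, the matrices $W_m$ and $W_{m-1}$ appear only as the consecutive block $\cdots W_{m+1}\,W_m\,W_{m-1}\,W_{m-2}\cdots$, and all other $W_k$ are fixed. Substituting $W_m \mapsto W_m g^{-1}$ and $W_{m-1}\mapsto g W_{m-1}$ replaces this block by
\begin{align}
    \cdots W_{m+1}\,(W_m g^{-1})(g W_{m-1})\,W_{m-2}\cdots = \cdots W_{m+1}\,W_m\,(g^{-1}g)\,W_{m-1}\,W_{m-2}\cdots,
\end{align}
and $g^{-1}g = I$ returns the original product. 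Here one checks dimensional consistency: since $W_m\in\R^{d_m\times d_{m-1}}$ and $W_{m-1}\in\R^{d_{m-1}\times d_{m-2}}$, the element $g$ acts on the shared inner dimension contracted in $W_m W_{m-1}$, which is exactly what makes both $W_m g^{-1}$ and $g W_{m-1}$ well defined. The same cancellation works verbatim at the boundary layers (when $m=M$ there is no $W_{m+1}$ factor, and when $m-1=1$ the first weight still carries the left multiplication, $W_1\mapsto gW_1$, and $(W_2 g^{-1})(gW_1)X = W_2 W_1 X$), so no special case is needed.

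Finally I would confirm that the formula genuinely defines a left group action, since ``invariant under the group'' presupposes this. The identity acts trivially, and for the homomorphism property I would compute
\begin{align}
    g_1\cdot\big(g_2\cdot(W_m,W_{m-1})\big) = g_1\cdot(W_m g_2^{-1},\, g_2 W_{m-1}) = \big(W_m g_2^{-1}g_1^{-1},\, g_1 g_2 W_{m-1}\big) = (g_1 g_2)\cdot(W_m,W_{m-1}),
\end{align}
so that the right-multiplication by $g^{-1}$ (rather than $g$) is precisely what yields a left action rather than an anti-action. I expect no deep obstacle: the entire content is the observation that $\L$ factors through the weight product together with the telescoping insertion of $g^{-1}g=I$. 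The only points demanding care are the bookkeeping of which hidden dimension $g$ operates on, so the two multiplications are compatible, and the placement of the inverse, so the composition law gives a genuine left action; these are the two steps I would write out explicitly.
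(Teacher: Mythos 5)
Your proof is correct and rests on exactly the same cancellation as the paper's: the paper inserts $g^{-1}g = I$ at the level of the layer outputs, writing $g\cdot h_m = W_m g^{-1} g h_{m-1} = h_m$ and noting that all downstream outputs (hence the loss) are then unchanged, which is just the local form of your telescoping of the end-to-end product $W_M\cdots W_1$. Your explicit verification of the left-action axioms (identity and compatibility, with the inverse placed so the composition law holds) is a small bonus the paper only carries out for the nonlinear case, but it does not change the substance of the argument.
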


Similar, in the nonlinear case, we want to find $g\cdot W_m$ that keep the outputs $h_k$ for $k\ne m-1$ invariant. 
With nonlinearity, $\tilde{h}_m = W_m \sigma(W_{m-1} h_{m-2})$. 
This network has a
$\mathrm{GL}_{d_{m-1}}$ symmetry 
given by 
\begin{align}
    g \cdot (W_{m}, h_{m-1}) = (W_{m}g^{-1}, g h_{m-1}), \quad
    g\cdot h_m = W_{m}g^{-1} g h_{m-1} = h_m.
    \label{eq:g-act-W-h-0}
\end{align}
\out{
First, we note that \eqref{eq:g-act-W-h-0} yields a nonlinear action on $\tilde{h}_{m-1}=W_{m-1}h_{m-2}$
\begin{align}
    g \cdot \{W_{m}, \tilde{h}_{m-1}\}&= \left\{W_{m}g^{-1}, \sigma^{-1}\pa{g \sigma( \tilde{h}_{m-1})}\right\}.
\end{align}
}

Thus, we have the following proposition regarding symmetries of nonlinear networks: 
%
\begin{proposition}
\label{thm:mult-layer-nonlinear}
Assume that $h_{m-2}$ is invertible.
A multi-layer network with bijective activation $\sigma$ has a $\mathrm{GL}_{d_{m-1}}$ symmetry.
For $g_m \in G_m = \mathrm{GL}_{d_{m-1}}(\R)$ the following group action keeps $h_p$ with $p\geq m$ invariant 
\begin{align}
    g_m \cdot W_k = \left\{
    \begin{array}{lc}
        W_m g_m^{-1} & k = m \\
        \sigma^{-1}\pa{g_m \sigma\pa{W_{m-1} h_{m-2}} } h_{m-2}^{-1} & k = m-1 \\
        W_k & k \not\in \{m, m-1\}
    \end{array}
    \right.
    \label{eq:Wm-g-action-main}
\end{align}
\end{proposition}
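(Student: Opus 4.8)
The plan is to show directly that the output $h_p$ is unchanged for every $p \ge m$ under the transformation in \eqref{eq:Wm-g-action-main}; since the loss depends on the parameters only through the final output, invariance of $\L$ follows immediately. The computation proceeds layer by layer and rests on exactly three facts, which correspond to the three invertibility hypotheses in play: $\sigma$ is a bijection applied entrywise, so $\sigma$ and $\sigma^{-1}$ cancel on matrices of the appropriate shape; $h_{m-2}$ is invertible by assumption; and $g_m \in \mathrm{GL}_{d_{m-1}}$ is invertible.

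First I would note that $g_m$ fixes every $W_k$ with $k \le m-2$, so the hidden states $h_0,\dots,h_{m-2}$ are untouched; in particular $h_{m-2}$ is a frozen matrix under the action, which is what makes the formula for $g_m \cdot W_{m-1}$ well defined (it depends only on $g_m$ and the lower layers). Writing $W_{m-1}' := g_m\cdot W_{m-1}$ and $W_m' := W_m g_m^{-1}$, and using $h_{m-2}'=h_{m-2}$, the transformed output of layer $m-1$ is
\begin{align*}
h_{m-1}' = \sigma(W_{m-1}' h_{m-2}) = \sigma\!\big(\sigma^{-1}(g_m\sigma(W_{m-1}h_{m-2}))\, h_{m-2}^{-1} h_{m-2}\big) = g_m\,\sigma(W_{m-1}h_{m-2}) = g_m h_{m-1},
\end{align*}
where the two key steps are the cancellation $h_{m-2}^{-1}h_{m-2}=I$ inside the argument and the entrywise identity $\sigma\circ\sigma^{-1}=\mathrm{id}$.

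Next I would push this through layer $m$, where the stray factor $g_m$ is exactly absorbed by the $g_m^{-1}$ carried into $W_m$:
\begin{align*}
h_m' = \sigma(W_m' h_{m-1}') = \sigma(W_m g_m^{-1}\, g_m h_{m-1}) = \sigma(W_m h_{m-1}) = h_m.
\end{align*}
Since $h_m$ is restored and all weights $W_k$ with $k>m$ are fixed, a one-line induction gives $h_p'=h_p$ for every $p\ge m$, hence $\L$ is invariant.

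Finally, to justify calling this a genuine $\mathrm{GL}_{d_{m-1}}$ action and not merely a loss-preserving family, I would verify the group axioms. The identity $g_m=I$ recovers $W_{m-1}'=\sigma^{-1}(\sigma(W_{m-1}h_{m-2}))h_{m-2}^{-1}=W_{m-1}$, and compatibility $a\cdot(b\cdot W_k)=(ab)\cdot W_k$ follows from the same telescoping seen above: one checks $(b\cdot W_{m-1})h_{m-2}=\sigma^{-1}(b\,\sigma(W_{m-1}h_{m-2}))$, so that applying $a$ yields the product $ab$ inside $\sigma^{-1}$, while on $W_m$ the relation is the transparent $W_m b^{-1}a^{-1}=W_m(ab)^{-1}$. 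I do not expect any deep obstacle here; the only thing requiring care is the bookkeeping of invertibility, since the construction collapses unless $h_{m-2}$ is square and invertible and $\sigma$ is a true bijection, which is precisely why the statement carries those assumptions.
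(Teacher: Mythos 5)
Your proof is correct and takes essentially the same route as the paper's: the same three cancellations ($h_{m-2}^{-1}h_{m-2}=I$, $\sigma\circ\sigma^{-1}=\mathrm{id}$, $g_m^{-1}g_m=I$) establish invariance of $h_m$, and your check of the identity and compatibility axioms mirrors the paper's appendix computation almost line for line. The only difference is direction — the paper \emph{derives} the formula for $g_m\cdot W_{m-1}$ by solving the invariance equation $\sigma(W'_{m-1}h_{m-2}) = g_m\sigma(W_{m-1}h_{m-2})$, whereas you \emph{verify} the stated formula forward — which is a purely presentational distinction.
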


Proofs can be found in Appendix \ref{appendix:mult-layer}. Note that this group action depends on the input to the network as well as the current weights of all the lower layers. 
Yet, since the action keeps the output of upper and lower layers invariant, multiple $G_m$ for different $m$ applied at the same time still keep the network output invariant. 
The proposition assumes that $h_{m-2}$ is square and full-rank.
When $n < d_{m-2}$ and $h_{m-2}$ has rank $n$, \eqref{eq:Wm-g-action-main} (with left inverse of $h_{m-2}$) keeps the loss invariant but does not satisfy the identity axiom of a group action. 
\section{Theoretical Analysis}
\subsection{What symmetries help accelerate optimization}
We now discuss the conditions that need to be satisfied for teleportation to accelerate GD. 
For brevity, we denote all trainable parameters (e.g. $ \left\{W_1,\cdots W_p \right\}$ for the $p$-layer neural network) by a single flattened vector $\vw \in \R^n$. 
Consider a symmetry $G$ of the loss function $\L(\vw)$, meaning 
for all $ g \in G$, $\L(\vw) = \L(g \cdot \vw)$. 
We quantify how teleportation by $G$ changes the rate of loss decay, given by 
\begin{align}
    {d\L(\vw)\over dt}& = \bk{{\ro \L \over \ro \vw },{d\vw \over dt} } = - \br{\grad \L}^T\eta \grad\L = - \|\grad\L\|^2_\eta,
    \label{eq:dLdt-0}
\end{align}
where $\eta$ is the matrix of learning rates (which must be positive semi-definite) and $\|v \|_\eta^2=v^T\eta v$ is the Mahalanobis norm. A constant learning rate means $\eta=I$.

The following proposition provides the condition a symmetry needs to satisfy to accelerate optimization. 
(Proofs can be found in Appendix \ref{appendix:speedup-cond-prop}.)

\begin{proposition}
\label{prop:speedup-cond}
Let $\vw'=g\cdot \vw$ be a point we teleport to. 
Let $J= \ro \vw' /\ro \vw $ be the Jacobian. 
Symmetry teleportation using $g$ accelerates the rate of decay in $\L$ if it satisfies
\begin{align}
    \left\|\br{J^{-1}}^T\grad \L(\vw)\right\|^2_\eta > \left\|\grad \L(\vw)\right\|^2_\eta.
    \label{eq:dL-w-cond-0}
\end{align}

\end{proposition}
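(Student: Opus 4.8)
The plan is to compare the instantaneous rate of loss decay at the original point $\vw$ against the rate at the teleported point $\vw' = g\cdot\vw$, using the decay formula \eqref{eq:dLdt-0}. The entire argument reduces to one computation: determining how the gradient $\grad\L$ transforms under the action map, which I would obtain by differentiating the invariance identity rather than by any geometric reasoning about the orbit.

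First I would exploit that the invariance $\L(\vw) = \L(g\cdot\vw)$ holds as an identity in $\vw$ for fixed $g$. Writing the action map as $\phi_g(\vw) = g\cdot\vw = \vw'$ and differentiating the identity $\L\circ\phi_g = \L$ by the chain rule gives $J^T (\grad\L)(\vw') = (\grad\L)(\vw)$, where $J = \ro\vw'/\ro\vw$ is the Jacobian of the action at $\vw$. Because $g$ is a group element, $\phi_g$ is invertible and hence $J$ is nonsingular, so I can solve for the gradient at the new point: $(\grad\L)(\vw') = \br{J^{-1}}^T (\grad\L)(\vw)$.

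Next I would substitute this into the decay rate of \eqref{eq:dLdt-0}. Continuing gradient descent from $\vw'$ yields rate $-\N{(\grad\L)(\vw')}_\eta^2 = -\N{\br{J^{-1}}^T(\grad\L)(\vw)}_\eta^2$, whereas the pre-teleportation rate is $-\N{(\grad\L)(\vw)}_\eta^2$. Teleportation accelerates the decay in $\L$ exactly when the new rate is more negative, i.e. when $\N{\br{J^{-1}}^T(\grad\L)(\vw)}_\eta^2 > \N{(\grad\L)(\vw)}_\eta^2$, which is precisely condition \eqref{eq:dL-w-cond-0}.

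The step I expect to require the most care is the bookkeeping of where each gradient is evaluated: the identity $(\grad\L)(\vw) = J^T(\grad\L)(\vw')$ compares the gradient of $\L$ at the two distinct points $\vw$ and $\vw'$, and it is easy to conflate the total derivative of the composite $\L\circ\phi_g$ (which equals $\grad\L$ by invariance) with the gradient of $\L$ itself evaluated at $\vw'$. I would also flag the invertibility of $J$ as the one structural hypothesis, which is automatic here since the action is by a group, and note that $\eta$ being positive semi-definite is what makes $\N{\cdot}_\eta^2$ well defined in the comparison.
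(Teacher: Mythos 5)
Your proposal is correct and follows essentially the same route as the paper's proof: both arguments differentiate the loss-invariance identity to obtain the key relation $\grad\L(\vw') = \br{J^{-1}}^T \grad\L(\vw)$ and then substitute this into the decay-rate formula \eqref{eq:dLdt-0} to read off the condition \eqref{eq:dL-w-cond-0}. The only cosmetic difference is directional: you differentiate $\L\circ\phi_g = \L$ with respect to $\vw$ (getting $J^T\grad\L(\vw') = \grad\L(\vw)$ and then inverting), whereas the paper applies the chain rule with respect to $\vw'$ through the inverse map, which yields $(J^{-1})^T$ directly---the same computation in mirror image.
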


If the action of the symmetry group $G\subset GL_n$ is linear we have $\vw' = g \cdot \vw = g \vw$ and $J=g$.
It follows that if $G$ is a subgroup of the orthogonal group, $d\L/dt$ will be invariant: 
\begin{corollary}
\label{prop:corollary-cond}
Let $O_\eta $ denote the orthogonal group of invariances of the inverse of the learning rate, $\eta^{-1}$, meaning for $g\in O_\eta$, $ g^T\eta^{-1} g= \eta^{-1}$. 
Then
\begin{align}
    \forall g &\in O_\eta, \qquad {d\L (g\cdot \vw)\over dt } = {\L(\vw)\over dt } .
\end{align}
\end{corollary}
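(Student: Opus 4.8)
The plan is to read off the post-teleportation decay rate from Proposition~\ref{prop:speedup-cond} and show that, for $g \in O_\eta$ acting linearly, this rate is left unchanged. Recall that the loss-invariance $\L(\vw) = \L(g\cdot\vw)$ together with the chain rule gives the gradient transformation law underlying \eqref{eq:dL-w-cond-0}: writing $\vw' = g\cdot\vw$ with Jacobian $J = \ro\vw'/\ro\vw$, one has $\grad\L(\vw') = \br{J^{-1}}^T\grad\L(\vw)$, so that by \eqref{eq:dLdt-0} the decay rate after teleportation is
\begin{align}
    {d\L(\vw')\over dt} = -\left\|\br{J^{-1}}^T\grad\L(\vw)\right\|^2_\eta.
    \label{eq:cor-decay}
\end{align}
Since the elements of $O_\eta$ act linearly on $\vw$ (they are treated as matrices $g$ satisfying $g^T\eta^{-1}g = \eta^{-1}$), we have $J = g$, and it remains only to show that the Mahalanobis norm in \eqref{eq:cor-decay} equals $\|\grad\L(\vw)\|^2_\eta$.

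First I would expand the norm explicitly. Setting $v = \grad\L(\vw)$ and using $\br{g^{-1}}^T = \br{g^T}^{-1}$, the quadratic form becomes $\left\|\br{g^{-1}}^T v\right\|^2_\eta = v^T g^{-1}\eta\br{g^{-1}}^T v$. Next I would convert the defining condition of $O_\eta$ into exactly the factor appearing here: taking the inverse of both sides of $g^T\eta^{-1}g = \eta^{-1}$ yields $g^{-1}\eta\br{g^{-1}}^T = \eta$. Substituting this identity collapses the expanded form to $v^T\eta v = \|v\|^2_\eta$, and therefore $d\L(g\cdot\vw)/dt = -\|\grad\L(\vw)\|^2_\eta = d\L(\vw)/dt$, which is the claim.

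The only genuine step is the algebraic manipulation that turns orthogonality with respect to $\eta^{-1}$ into invariance of the $\eta$-norm under the inverse-transpose action, and this is nothing more than inverting both sides of the defining identity; I do not anticipate a real obstacle. The one point requiring care is the bookkeeping of transposes and inverses, in particular the identity $\br{g^{-1}}^T = \br{g^T}^{-1}$, since an order or placement error there would break the cancellation that produces $v^T\eta v$.
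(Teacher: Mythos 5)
Your proof is correct and follows essentially the same route as the paper's: both set $J=g$ for the linear action, express the post-teleportation rate as $-\grad\L^T g^{-1}\eta\br{g^{-1}}^T\grad\L$ via Proposition~\ref{prop:speedup-cond}, and collapse it using $g^{-1}\eta\br{g^{-1}}^T = \br{g^T\eta^{-1}g}^{-1} = \eta$. The only cosmetic difference is that you invert the defining identity of $O_\eta$ up front, whereas the paper rewrites the quadratic form as $\grad\L^T\br{g^T\eta^{-1}g}^{-1}\grad\L$ before substituting; these are the same manipulation.
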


In the simple case where the learning rate is a constant, $O_\eta = O(n)$ becomes the classic orthogonal group (e.g. rotations)
with $g^Tg = I $. 
In general, when $g$ preserves the norm of the gradient, symmetry teleportation has no effect on the convergence rate. 


\out{
\begin{proposition}
Let   $ M&\equiv {\ro \L\over \ro \vw } \br{{\ro \L\over \ro \vw }}^T$, and $J = \partial (g\cdot \vw) / \partial \vw$ and $J^{-1} = \partial \vw / \partial (g\cdots \vw)$, then symmetry teleportation accelerates optimization if it satisfies
\[ \Tr\br{ M [J^{-1}]^TJ^{-1} } > \Tr[M] \]
\bz{equivalently, \[\left\| J^{-1} \frac{\ro \L(\vw)}{\ro \vw} \right\|^2 > \left\| \frac{\ro \L(\vw)}{\ro \vw} \right\|^2\]
which means the Jacobian of the group action increases the gradient norm.} \ry{simplify and explain}
\end{proposition}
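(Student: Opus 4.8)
The plan is to compare the instantaneous rate of loss decay \eqref{eq:dLdt-0} evaluated at the teleported point $\vw' = g\cdot\vw$ against the rate at the original point $\vw$. The only new ingredient needed is a chain-rule relation expressing $\grad\L(\vw')$ in terms of $\grad\L(\vw)$ and the Jacobian $J$; once this is in hand, the stated condition \eqref{eq:dL-w-cond-0} follows by a direct substitution into the decay formula.

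First I would exploit $G$-invariance not as a statement at a single point but as an identity of functions of $\vw$: since $\L(\vw) = \L(g\cdot\vw)$ holds for \emph{every} $\vw$, I may differentiate both sides with respect to $\vw$. Writing $\grad\L$ for the column vector $\ro\L/\ro\vw$ and using $J = \ro\vw'/\ro\vw$, the chain rule applied to the right-hand side yields
\[
\grad\L(\vw) = J^T\,\grad\L(\vw'),
\]
where $\grad\L(\vw')$ denotes the gradient of $\L$ evaluated at the teleported point. Because $g$ is a group element, its inverse $g^{-1}$ induces the inverse transformation whose Jacobian is $J^{-1}$; in particular $J$ is invertible, so I can solve for the post-teleportation gradient:
\[
\grad\L(\vw') = \br{J^{-1}}^T\grad\L(\vw).
\]

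Next I would substitute this into \eqref{eq:dLdt-0}. At $\vw'$ the rate of loss decay under the gradient flow is $d\L(\vw')/dt = -\|\grad\L(\vw')\|_\eta^2 = -\|\br{J^{-1}}^T\grad\L(\vw)\|_\eta^2$, whereas at $\vw$ it is $-\|\grad\L(\vw)\|_\eta^2$. Teleportation accelerates optimization precisely when the post-teleportation rate is more negative, i.e. when $\|\br{J^{-1}}^T\grad\L(\vw)\|_\eta^2 > \|\grad\L(\vw)\|_\eta^2$, which is exactly \eqref{eq:dL-w-cond-0}.

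The computation is short, so the main obstacle is not difficulty but bookkeeping: I must get the transpose placement in the chain rule right (and hence confirm the condition features $\br{J^{-1}}^T$ rather than $J^{-1}$), and I must justify the invertibility of $J$ from the group structure rather than assuming it. For a linear action $\vw' = g\vw$ one has $J = g$, recovering the simplified form mentioned after the proposition; I would note this special case as a sanity check once the general gradient-transformation rule is established.
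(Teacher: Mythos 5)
Your proposal is correct and takes essentially the same approach as the paper's own proof: both derive the gradient transformation rule $\grad \L(g\cdot\vw) = \br{J^{-1}}^T \grad \L(\vw)$ from the $G$-invariance of $\L$ via the chain rule, then substitute it into the rate-of-decay formula \eqref{eq:dLdt-0} to read off the condition \eqref{eq:dL-w-cond-0}. The only cosmetic difference is the direction of the chain rule: you differentiate the identity $\L(\vw) = \L(g\cdot\vw)$ with respect to $\vw$ and invert $J^{T}$, while the paper differentiates $\L(\vw)$ with respect to $\vw'$ directly using $\ro \vw_j / \ro \vw'_i = J^{-1}_{ji}$ --- the two computations are equivalent.
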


\begin{proof}
Let $\vw' = g \cdot \vw$. 
Denote the Jacobian as $J$, where $J_i^j = \ro \vw_i' / \ro \vw_j$. 
Then the inverse of $J$ has entries $[J^{-1}]_i^j = \ro \vw_i / \ro \vw_j'$.
The gradient at $\vw'$ is
\begin{align}
    \frac{\ro \L(\vw')}{\ro \vw'_i} 
    &= \frac{\ro \L(\vw)}{\ro \vw'_i} 
    = {\ro \vw_j \over \vw'_i} \frac{\ro \L(\vw)}{\ro \vw_j} \cr 
    & = [J^{-1}]_j^i\frac{\ro \L(\vw)}{\ro \vw_j} 
    = \br{J^{-1} \frac{\ro \L(\vw)}{\ro \vw} }^i
\end{align}
The rate of change of $\L$ during gradient flow $d\vw /dt = -\eta \ro \L/\ro \vw $ is 
\begin{align}
    \frac{d \L(\vw')}{dt} 
    &= \sum_i \frac{\ro \L(\vw')}{\ro \vw_i'} \frac{d \vw_i'}{d t}
    = -\eta \sum_i \frac{\ro \L(\vw')}{\ro \vw_i'} \frac{\ro \L(\vw')}{\ro \vw_i'} \cr 
    &= -\eta \left\| J^{-1} \frac{\ro \L(\vw)}{\ro \vw} \right\|^2 
    = -\eta \Tr\br{ M [J^{-1}]^TJ^{-1} }
    \cr
    M&\equiv {\ro \L\over \ro \vw } \br{{\ro \L\over \ro \vw }}^T
\end{align}
\nd{ derive the full AdaGrad version }
Thus we will have a speedup if 
\begin{align}
    \mbox{if }\Tr\br{ M [J^{-1}]^TJ^{-1} } > \Tr[M]
    \label{eq:MJ-sppedup-cond}
\end{align}
\end{proof}
}


From Theorem 3.2 in \cite{hazan2019lecture}, assuming that $\L$ is $\beta$-smooth and is bounded by $|\L| \leq M$, the gradient norm in gradient descent converges as
$\frac{1}{2\beta} \sum_{t=1}^T \| (\grad \L)|_{\vw_t} \| \leq 2M$. Teleportation increases $(\grad \L)|_{\vw_t}$, therefore requiring less time to reach convergence.

\subsection{Improvement of subsequent steps }
Since teleportation moves the parameters to a point with a larger  gradient, and subsequent GD steps are local, we would expect that teleportation improves the magnitude of the gradient for a few future steps as well. The following results formalize this intuition (proofs in Appendix \ref{appendix:lipschitz}.).
\begin{assumption}[Lipschitz Continuity]
\label{thm:lipschitz}
The l2 norm of the gradient is Lipschitz continuous with constant $L \in \R^{\geq 0}$ , which is
\begin{align}
    \left| \left\| \frac{\partial \L}{\partial \vw_{1}} \right\|_2 - \left\| \frac{\partial \L}{\partial \vw_{2}} \right\|_2 \right| \leq L \| \vw_1 - \vw_2 \|_2,
\end{align}
where $\vw_1, \vw_2$ are two points in the parameter space and $L$ is the Lipschitz constant.
\end{assumption}
\begin{proposition}
\label{prop:lipschitz}
Consider the gradient descent with a $G$-invariant loss $\L(\vw)$ and learning rate $\eta \in \R^+$. Let $\vw_t$ be the parameter at time $t$ and $\vw_t' = g \cdot \vw_t$ the parameter teleported by $g\in G$. Let $\vw_{t+T}$ and $\vw_{t+T}'$ be the parameters after $T$ more steps of gradient descent from $\vw_t$ and $\vw_t'$ respectively. Under Assumption \ref{thm:lipschitz}, if $\eta L < 1$, and 
\begin{align}
    \frac{\left\| \partial \L / \partial \vw_{t}' \right\|_2}{\left\| \partial \L / \partial \vw_{t} \right\|_2} \geq \frac{(1 + \eta L)^T}{(1 - \eta L)^T},
\end{align}
then 
\begin{align}
    \left\| \frac{\partial \L}{\partial \vw_{t+T}'} \right\|_2 \geq \left\| \frac{\partial \L}{\partial \vw_{t+T}} \right\|_2.
\end{align}
\end{proposition}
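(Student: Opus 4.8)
The plan is to convert the Lipschitz hypothesis into a two-sided bound on how the gradient norm changes under a single gradient-descent step, and then iterate that bound $T$ times along each of the two trajectories, applying the lower bound to the teleported one and the upper bound to the original one.

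First I would analyze one step. Since the update is $\vw_{k+1} = \vw_k - \eta \, \partial \L / \partial \vw_k$, the step length is $\|\vw_{k+1} - \vw_k\|_2 = \eta \left\| \partial \L / \partial \vw_k \right\|_2$. Substituting $\vw_1 = \vw_{k+1}$ and $\vw_2 = \vw_k$ into Assumption \ref{thm:lipschitz} gives $\big| \, \|\partial\L/\partial\vw_{k+1}\|_2 - \|\partial\L/\partial\vw_k\|_2 \, \big| \le \eta L \,\|\partial\L/\partial\vw_k\|_2$, which rearranges into the sandwich
\[
(1 - \eta L)\left\|\frac{\partial\L}{\partial\vw_k}\right\|_2 \le \left\|\frac{\partial\L}{\partial\vw_{k+1}}\right\|_2 \le (1 + \eta L)\left\|\frac{\partial\L}{\partial\vw_k}\right\|_2.
\]
The hypothesis $\eta L < 1$ is precisely what keeps the lower factor $1 - \eta L$ positive, so that chaining the lower bound stays nonnegative and preserves the direction of the inequality.

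Then I would iterate. Applying the lower bound $T$ times along the teleported trajectory and the upper bound $T$ times along the original trajectory yields
\[
\left\|\frac{\partial\L}{\partial\vw_{t+T}'}\right\|_2 \ge (1-\eta L)^T \left\|\frac{\partial\L}{\partial\vw_t'}\right\|_2, \qquad \left\|\frac{\partial\L}{\partial\vw_{t+T}}\right\|_2 \le (1+\eta L)^T \left\|\frac{\partial\L}{\partial\vw_t}\right\|_2.
\]
It then suffices to show that the right-hand lower bound dominates the right-hand upper bound, i.e.\ $(1-\eta L)^T \|\partial\L/\partial\vw_t'\|_2 \ge (1+\eta L)^T \|\partial\L/\partial\vw_t\|_2$; dividing through, this is exactly the assumed ratio condition $\|\partial\L/\partial\vw_t'\|_2 / \|\partial\L/\partial\vw_t\|_2 \ge (1+\eta L)^T/(1-\eta L)^T$, which closes the argument.

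The calculation is routine; the one point that genuinely needs care is the bookkeeping of which direction each bound points. The lower bound must be reserved for the primed (teleported) trajectory, whose gradient we want to remain large, and the upper bound for the original trajectory, whose gradient we want it to dominate---swapping the two would give a vacuous conclusion. Beyond that, the only structural ingredient doing real work is $\eta L < 1$, which legitimizes raising $1 - \eta L$ to the $T$-th power while preserving the inequality.
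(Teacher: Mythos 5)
Your proof is correct and follows essentially the same route as the paper's: derive the one-step sandwich $(1-\eta L)\left\|\partial\L/\partial\vw_k\right\|_2 \leq \left\|\partial\L/\partial\vw_{k+1}\right\|_2 \leq (1+\eta L)\left\|\partial\L/\partial\vw_k\right\|_2$ from the Lipschitz assumption and the GD update, unroll it $T$ times, then apply the lower bound to the teleported trajectory and the upper bound to the original one so that the hypothesized ratio condition closes the chain. Your added remarks on why $\eta L < 1$ is needed to chain the lower bounds and on the direction-of-bounds bookkeeping are accurate and match the implicit content of the paper's argument.
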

The proposition provides a sufficient condition for teleportation to improve $T$ future steps. The condition is met when $L$ is small, $\eta$ is small, or the initial improvement, $\left\| \frac{\partial \L}{\partial \vw_{t}'} \right\|_2 / \left\| \frac{\partial \L}{\partial \vw_{t}} \right\|_2$, is large.

\subsection{Convergence analysis for convex quadratic functions}
Teleportation improves the magnitude of gradient for the current step. We have also shown that the magnitude of gradient stays large for a few subsequent steps (Proposition \ref{prop:lipschitz}). In this section, we analyze a class of functions where teleporting once guarantees optimality at all future times. 
We consider a trajectory optimal if for every point on the trajectory, the magnitude of gradient is at a local maximum in the loss level set that contains the point. 

Consider a positive definite quadratic form $\L_A(\vw) = \vw^T A \vw$, where $\vw \in \mathbb{R}^n$ is the parameter and $A \in \mathbb{R}^{n \times n}$ is a positive definite matrix. The gradient of $\L_A$ is $\grad \L_A = 2A \vw$, and the Hessian of $\L_A$ is $H = 2A$. Since $A$ is defined to be positive definite, $\L_A$ is convex. The function $\L_A$ has global minimum at a single point $\vw^* = 0$.


Let $\rho$ be a representation of $O(n)$ acting on $\R^n$. For $g \in O(n)$, we define the following group action:
\begin{align}
    g \cdot \vw = A^{-\frac{1}{2}} \rho(g) A^{\frac{1}{2}} \vw.
\end{align}
Then it can be shown that $\L_A(\vw)$ admits a $O(n)$ symmetry:
\begin{align}
    \L_A(g \cdot \vw) = \vw^T {A^{\frac{1}{2}}}^T \rho(g)^T {A^{-\frac{1}{2}}}^T A A^{-\frac{1}{2}} \rho(g) A^{\frac{1}{2}} \vw = \vw^T A \vw = \L_A(\vw).
\end{align}

Let $S_c = \{\vw: \L_A(\vw) = c\}$ be a level set of $\L_A$. 
We show that after a teleportation, every point on the gradient flow trajectory is optimal in its level set (with full proof in Appendix \ref{appendix:convex-quadratic}). 
\begin{proposition}
\label{prop:convex-quadratic-global-opt}
If at point $\vw$, $\| \grad \L_A|_\vw \|^2$ is at a maximum in $S_{\L_A(\vw)}$, then for any point $\vw'$ on the gradient flow trajectory starting from $\vw$, $\| \grad \L_A|_{\vw'} \|^2$ is at a maximum in $S_{\L_A(\vw')}$.
\end{proposition}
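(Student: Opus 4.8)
The plan is to reduce the ``maximal gradient on a level set'' condition to a statement about the eigenvectors of $A$, and then exploit the fact that gradient flow on a quadratic only rescales the eigencomponents of $\vw$, so it can never leave the distinguished eigendirection.

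First I would write the gradient flow in closed form. Since $\grad \L_A = 2A\vw$, the flow $d\vw/dt = -\grad\L_A$ integrates to $\vw(t) = e^{-2At}\vw$. Because $e^{-2At}$ is a function of $A$, it commutes with $A$ and maps each eigenspace of $A$ into itself; the trajectory therefore never mixes eigencomponents.

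Next I would characterize the maximizers of the squared gradient norm $\|\grad\L_A\|^2 = 4\vw^T A^2\vw$ on a level set $S_c = \{\vw : \vw^T A\vw = c\}$. Setting $\vu = A^{1/2}\vw$ turns the constraint into the sphere $\|\vu\|^2 = c$ and the objective into the Rayleigh form $4\vu^T A\vu$. Its maximum over the sphere equals $4c\mu_1$, with $\mu_1$ the largest eigenvalue of $A$, and is attained exactly when $\vu$ lies in the top eigenspace $E_1$ of $A$. Since $A^{-1/2}$ commutes with $A$, pulling back shows that $\|\grad\L_A\|^2$ is maximal on $S_c$ precisely when $\vw \in E_1$ (scaled so that $\vw^T A\vw = c$). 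Equivalently, via Lagrange multipliers the stationarity condition $4A^2\vw = \lambda A\vw$ reduces to $A\vw = (\lambda/4)\vw$, so every critical point is an eigenvector and the maximum is the top one.

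Finally I would combine the two observations: if $\|\grad\L_A|_\vw\|^2$ is maximal on $S_{\L_A(\vw)}$ then $\vw \in E_1$; any later point $\vw' = e^{-2At}\vw$ again lies in $E_1$, hence is a maximizer of the squared gradient norm on its own level set $S_{\L_A(\vw')}$, which is the claim. The main subtlety, and the step I would write most carefully, is the characterization of ``a maximum'': I must verify that local maxima on the ellipsoid coincide with the global maximum, i.e.\ the top eigenspace, since the intermediate eigenvectors are saddle points of the Rayleigh quotient rather than local maxima. I would also treat the degenerate case in which $\mu_1$ has multiplicity greater than one, where the maximizing set is $E_1 \cap S_c$; this set is still invariant under the flow because $e^{-2At}$ preserves $E_1$, so the argument goes through unchanged.
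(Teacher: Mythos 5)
Your proposal is correct and takes essentially the same route as the paper: both characterize the maximizers of the gradient norm on a level set as (scaled) eigenvectors of $A$ for its largest eigenvalue --- the paper via Lagrange multipliers in a preceding lemma, you via the substitution $\vu = A^{1/2}\vw$ and the Rayleigh quotient, with Lagrange multipliers as an equivalent check --- and both then observe that gradient flow preserves the top eigenspace, so every point on the trajectory remains a maximizer on its own level set. Your write-up is somewhat more careful than the paper's, which phrases the flow-invariance step informally (``every point has the same direction as $\vw$'') and does not explicitly address the local-versus-global maximum distinction or the case of a repeated top eigenvalue, but these are refinements of the same argument rather than a different one.
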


Finally, we observe that for $\L_A$, teleportation moves the parameters closer to the global minimum in Euclidean distance. In other words, maximizing the magnitude of gradient minimizes the distance to $\vw^*$ in a loss level set.
\begin{proposition}
\label{prop:convex-quadratic-dist-minimum}
Consider a point $\vw$ in the parameter space. Let $g = \argmax\nolimits_{g \in G} \|\grad \L_A |_{g \cdot \vw}\|^2_2.$ Then $g \cdot \vw = \argmin\nolimits_{\vw' \in S_{\L_A(\vw)}} \|\vw' - \vw^*\|^2_2$. 
\end{proposition}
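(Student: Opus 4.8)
The plan is to reduce both optimization problems to the same constrained problem on the level set and then show they share the same maximizer. Since $\vw^* = 0$, the distance to be minimized is simply $\|\vw'\|_2^2 = \vw'^T\vw'$, and both it and the squared gradient norm $\|\grad\L_A|_{\vw'}\|_2^2 = \|2A\vw'\|_2^2 = 4\,\vw'^T A^2 \vw'$ are to be optimized as $\vw'$ ranges over $S_{\L_A(\vw)} = \{\vw' : \vw'^T A \vw' = c\}$ with $c = \L_A(\vw)$. First I would observe that the action $g\cdot\vw = A^{-\frac12}\rho(g)A^{\frac12}\vw$ keeps $\vw$ on $S_c$ (as already verified in the text) and, after the change of variables $\vy = A^{\frac12}\vw'$, becomes the standard orthogonal action $\vy\mapsto\rho(g)\vy$ on the sphere $\{\|\vy\|_2^2 = c\}$; hence the orbit of $\vw$ fills all of $S_c$, so $\argmax_{g\in G}\|\grad\L_A|_{g\cdot\vw}\|_2^2$ coincides with the maximizer of $\vw'^TA^2\vw'$ over the entire level set.

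With both problems posed on $S_c$, I would solve them by Lagrange multipliers. For the gradient maximization, $\grad(\vw'^TA^2\vw') = \mu\,\grad(\vw'^TA\vw')$ gives $A^2\vw' = \mu A\vw'$, i.e. $A\vw' = \mu\vw'$, so every stationary point is an eigenvector of $A$; if $\vw'$ has eigenvalue $\lambda_i$, the constraint forces $\|\vw'\|_2^2 = c/\lambda_i$ and the objective equals $\lambda_i^2\|\vw'\|_2^2 = \lambda_i c$, which is largest for the maximal eigenvalue $\lambda_{\max}$. For the distance minimization, the stationarity condition $\vw' = \nu A\vw'$ again forces $\vw'$ to be an eigenvector, with objective $\|\vw'\|_2^2 = c/\lambda_i$, minimized again by $\lambda_{\max}$. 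Thus both problems are solved by the same top eigenvector of $A$, scaled to lie on $S_c$, which establishes $g\cdot\vw = \argmin_{\vw'\in S_{\L_A(\vw)}}\|\vw' - \vw^*\|_2^2$.

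An equivalent and perhaps cleaner bookkeeping step is to diagonalize $A = Q\,\mathrm{diag}(\lambda_1,\dots,\lambda_n)\,Q^T$, set $\vu = Q^T\vw'$, and substitute $v_i = \lambda_i u_i^2 \ge 0$. The constraint becomes $\sum_i v_i = c$, the gradient objective becomes the linear function $\sum_i \lambda_i v_i$, and the distance objective becomes $\sum_i v_i/\lambda_i$. Maximizing the former and minimizing the latter over the simplex $\{v_i\ge 0,\ \sum_i v_i = c\}$ both place all mass on the coordinate with the largest $\lambda_i$, confirming that the two maximizers coincide.

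I expect the main obstacle to be the group-theoretic reduction rather than the calculus: one must verify that $\rho$ acts transitively on the sphere, so that the maximum of the gradient over the $G$-orbit truly equals its maximum over the full level set $S_c$, and one must address non-uniqueness when the top eigenvalue of $A$ is degenerate or when the two antipodal eigenvectors $\pm\sqrt{c/\lambda_{\max}}\,Q e_1$ both occur. In the degenerate case the set of maximizers is a sphere inside the top eigenspace, and the statement should be read as an equality of solution sets; I would flag this explicitly and, if desired, assume a simple top eigenvalue to obtain a maximizer that is unique up to sign.
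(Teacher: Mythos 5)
Your proposal is correct and follows essentially the same route as the paper: establish that the $G$-orbit of $\vw$ is the entire level set (the paper proves this transitivity by explicitly constructing orthogonal maps, you via the change of variables $\vy = A^{1/2}\vw'$, which is equivalent), then solve both constrained problems by Lagrange multipliers and observe that each stationarity condition forces $A\vw' = \lambda \vw'$, with the constraint and objective selecting the top eigenvalue in both cases. Your simplex reformulation and your caveat about degenerate top eigenvalues (where the statement holds as an equality of solution sets) are refinements the paper does not spell out, but they do not change the substance of the argument.
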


\subsection{Relation to second-order optimization methods}
Since our algorithm involves optimizing the gradients, symmetry teleportation is closely related to second-order optimization methods. At the target point to teleport to, gradient descent becomes equivalent to Newton's method (proof in Appendix \ref{appendix:newton-direction-prop}).

\begin{proposition}
Let $S_{\L_0} = \{\vw: \L(\vw) = \L_0\}$ be a level set of $\L$. 
If at a particular $\vw \in S_{\L_0}$ we have $ \|\grad\L(\vw)\|_2\geq  \|\grad\L(\vw')\|_2 $  for all $\vw'$ in a small neighborhood of $\vw$ in $S_{\L_0}$, then the gradient $\grad \L(\vw)$ has the same direction as the direction from Newton's method, $H^{-1} \grad\L(\vw) $.
\label{prop:newton-direction}
\end{proposition}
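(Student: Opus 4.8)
The plan is to recognize the hypothesis as a first-order optimality condition for a constrained problem and then read off the Newton direction from the associated Lagrange condition. The assumption states exactly that $\vw$ is a local maximizer of $f(\vw) := \|\grad\L(\vw)\|_2^2 = (\grad\L)^T\grad\L$ subject to the constraint $\L(\vw)=\L_0$. First I would handle the degenerate case $\grad\L(\vw)=0$ separately (the conclusion is then vacuous, and $\vw$ is already a critical point). Otherwise $\grad\L(\vw)\neq 0$, so $\L_0$ is a regular value and $S_{\L_0}$ is a smooth hypersurface near $\vw$; this supplies the constraint qualification needed to apply the method of Lagrange multipliers.

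Next I would invoke Lagrange multipliers: at a constrained local maximum there is a scalar $\lambda$ with $\grad f(\vw)=\lambda\,\grad\L(\vw)$. The one computation required is $\grad f = \grad\big((\grad\L)^T\grad\L\big) = 2H\grad\L$, where $H$ is the Hessian and I use $H=H^T$. Substituting yields $2H\grad\L(\vw)=\lambda\,\grad\L(\vw)$, i.e.\ $\grad\L(\vw)$ is an eigenvector of $H$ with eigenvalue $\lambda/2$. Since $\grad\L(\vw)\neq 0$ and $H$ is invertible (as required for $H^{-1}\grad\L$ to be defined), we have $\lambda\neq 0$, and applying $H^{-1}$ gives $H^{-1}\grad\L(\vw)=(2/\lambda)\,\grad\L(\vw)$. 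Thus the Newton direction is a nonzero scalar multiple of $\grad\L(\vw)$, which is the asserted collinearity.

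The step I expect to be the main obstacle is pinning down the sign: the computation above shows the two vectors are parallel, but ``same direction'' additionally requires $\lambda>0$, i.e.\ that $\grad\L$ is an eigenvector of $H$ with a positive eigenvalue. Stationarity alone does not force this; I would extract it from the genuine maximization content of the hypothesis (not merely criticality). Concretely, restricting $f$ to $S_{\L_0}$ and comparing values across the eigendirections of $H$ --- as in the positive-definite quadratic case of Section 4.3, where the maximizer selects the largest, hence positive, eigenvalue --- should show that a local max of the gradient norm on the level set corresponds to a positive eigenvalue. I would therefore either invoke positive-definiteness of $H$ in the regime of interest or carry out the second-order analysis of the constrained problem to secure $\lambda>0$, and conclude that $\grad\L(\vw)$ and $H^{-1}\grad\L(\vw)$ point in the same direction.
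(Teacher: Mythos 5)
Your proof follows essentially the same route as the paper's: the paper's Lemma \ref{lemma:newton-direction} shows that vanishing of the directional derivatives of $\|\grad\L\|_2^2$ along directions tangent to the level set forces $2H\grad\L$ to be parallel to $\grad\L$, which is precisely your Lagrange condition $\grad f=\lambda\,\grad\L$ with $\grad f=2H\grad\L$; both arguments conclude that $\grad\L$ is an eigenvector of $H$, hence of $H^{-1}$, and read off collinearity with the Newton direction.

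Your third paragraph raises a genuine point, and it is one the paper's own proof silently skips: being an eigenvector of $H^{-1}$ gives only collinearity, while ``same direction'' in the oriented sense requires the eigenvalue to be positive. However, your hope of extracting $\lambda>0$ from the second-order (maximization) content of the hypothesis cannot succeed in general. For a concave quadratic such as $\L(\vw)=-(w_1^2+2w_2^2)$, the maximizer of $\|\grad\L\|_2^2$ on a level set is $w_1=0$, $w_2=\pm\sqrt{-c/2}$, where $\grad\L$ is an eigenvector of $H=\mathrm{diag}(-2,-4)$ with eigenvalue $-4$, and $H^{-1}\grad\L$ points \emph{opposite} to $\grad\L$. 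So a constrained local maximum of the gradient norm is fully compatible with a negative eigenvalue, and the analogy with the positive-definite case of Section 4.3 does not transfer. Positivity must therefore be assumed (e.g.\ $H\succ 0$ in the region of interest, the regime where Newton's method is meaningful), not derived; under that reading, your argument, like the paper's, is complete, and otherwise the conclusion should be weakened to ``parallel up to sign.''
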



Proposition \ref{prop:newton-direction} provides an alternative way to interpret teleportation. Instead of computing the Newton's direction, we search within the loss level set for a point where the gradient has the same direction as Newton's direction. 
However, symmetry teleportation does not require computing the full Hessian matrix. The second derivative required for optimizing over continuous groups is obtained by taking derivatives with respect to parameters and then with respect to the group element, as opposed to taking the derivative with respect to parameters twice. This makes the computation significantly more feasible than Newton's method on neural networks with large number of parameters.

In practice, however, the group action used for teleportation is usually not transitive. Additionally, we do not teleport using the optimal group element since it can be unbounded. We also do not apply teleportation in every gradient descent step. Therefore, Proposition \ref{prop:newton-direction} serves as an intuition instead of an exact formulation of how teleportation works. We provide empirical evidence in Section 6 that these approximations do not erase the benefits of teleportation completely, and leave theoretical investigations of the connection to second-order methods under approximations as future work. 




\section{Experiments}

\subsection{Acceleration through symmetry teleportation}
\label{sec:experiment}
We examine the effect of symmetry teleportation on optimization. 
We illustrate teleportation in the parameter space on two test functions and show a speedup in regression and classification problems using multilayer neural networks. 
For test functions, we compared with GD for illustration purposes.  
For multi-layer neural networks, we also include AdaGrad as a more competitive baseline.
\begin{figure}[t!]
\centering
a\hfill b \hfill c\hfill d \hfill ~ \\
\includegraphics[width=0.24\textwidth, trim=35pt 20pt 0 0, clip]{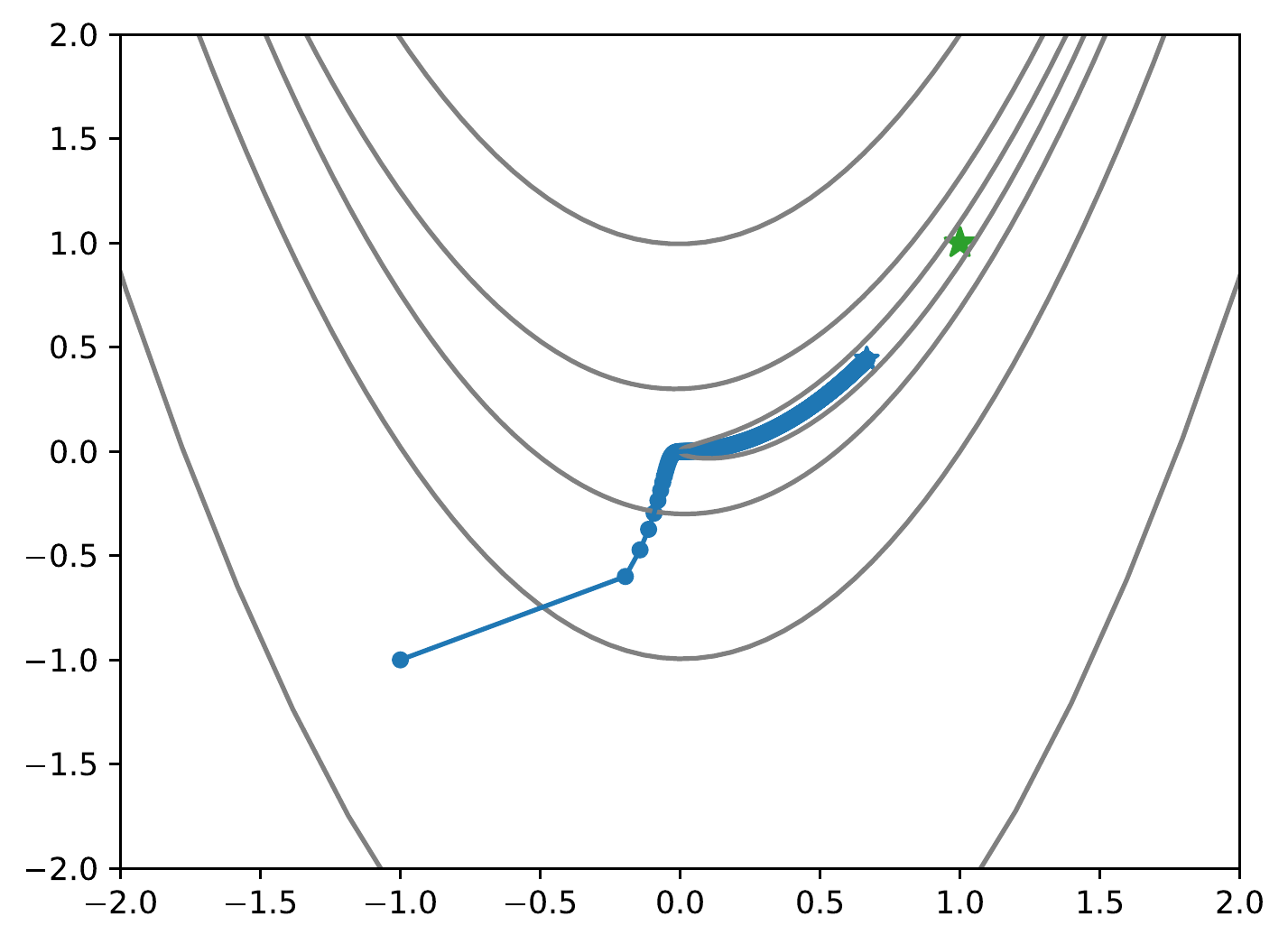}
\includegraphics[width=0.24\textwidth, trim=35pt 20pt 0 0, clip]{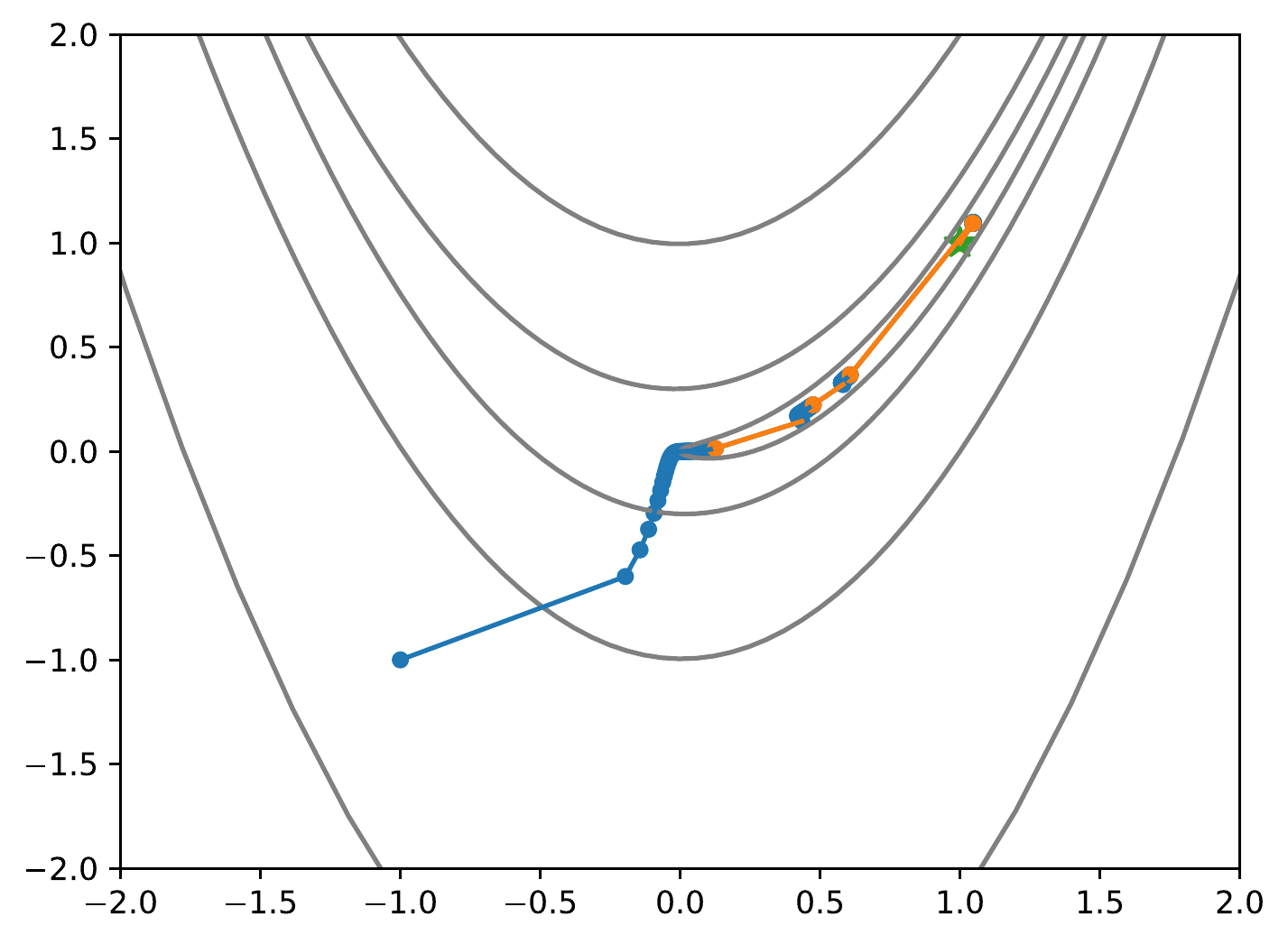}
\includegraphics[width=0.24\textwidth]{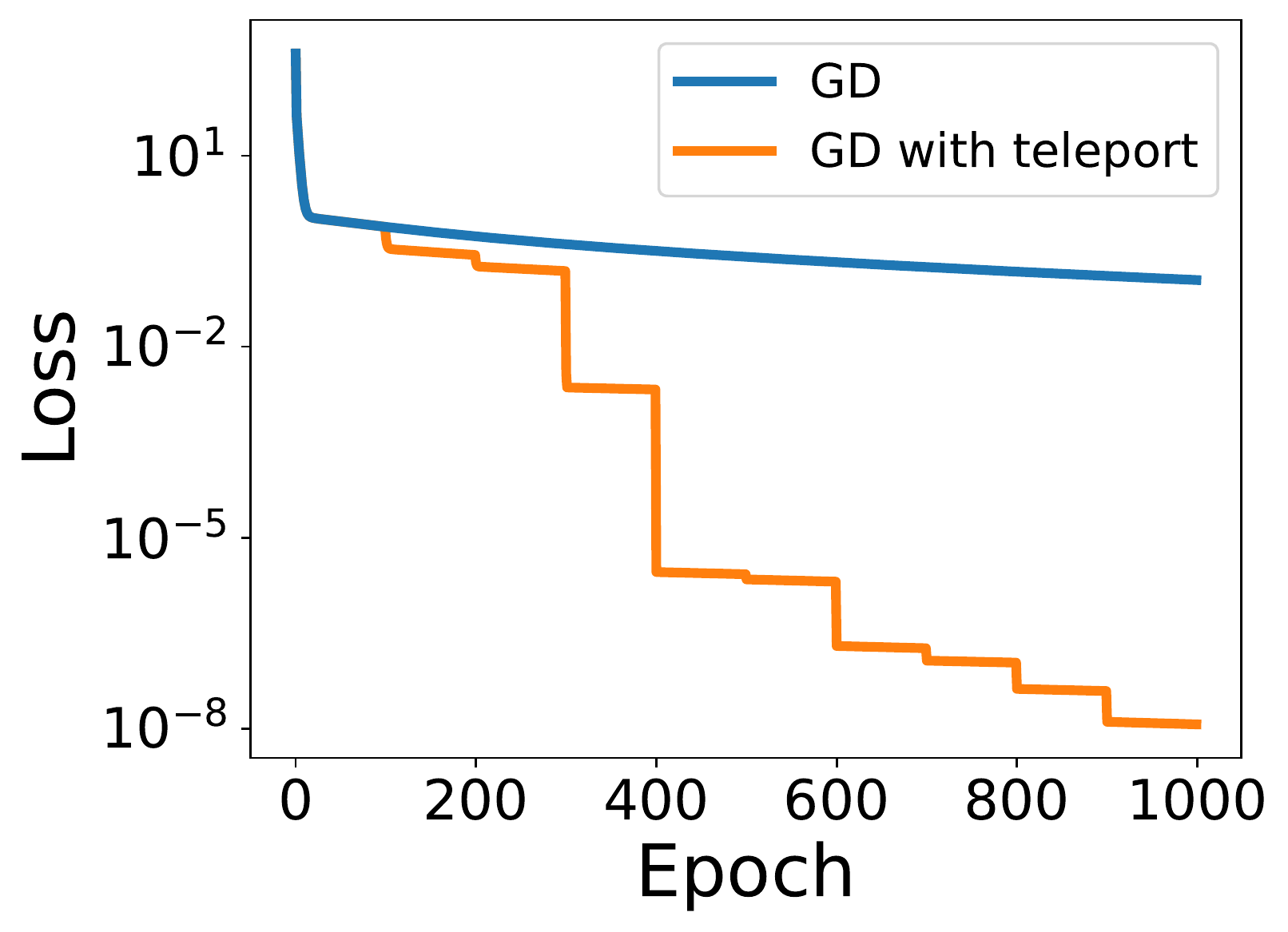}
\includegraphics[width=0.24\textwidth]{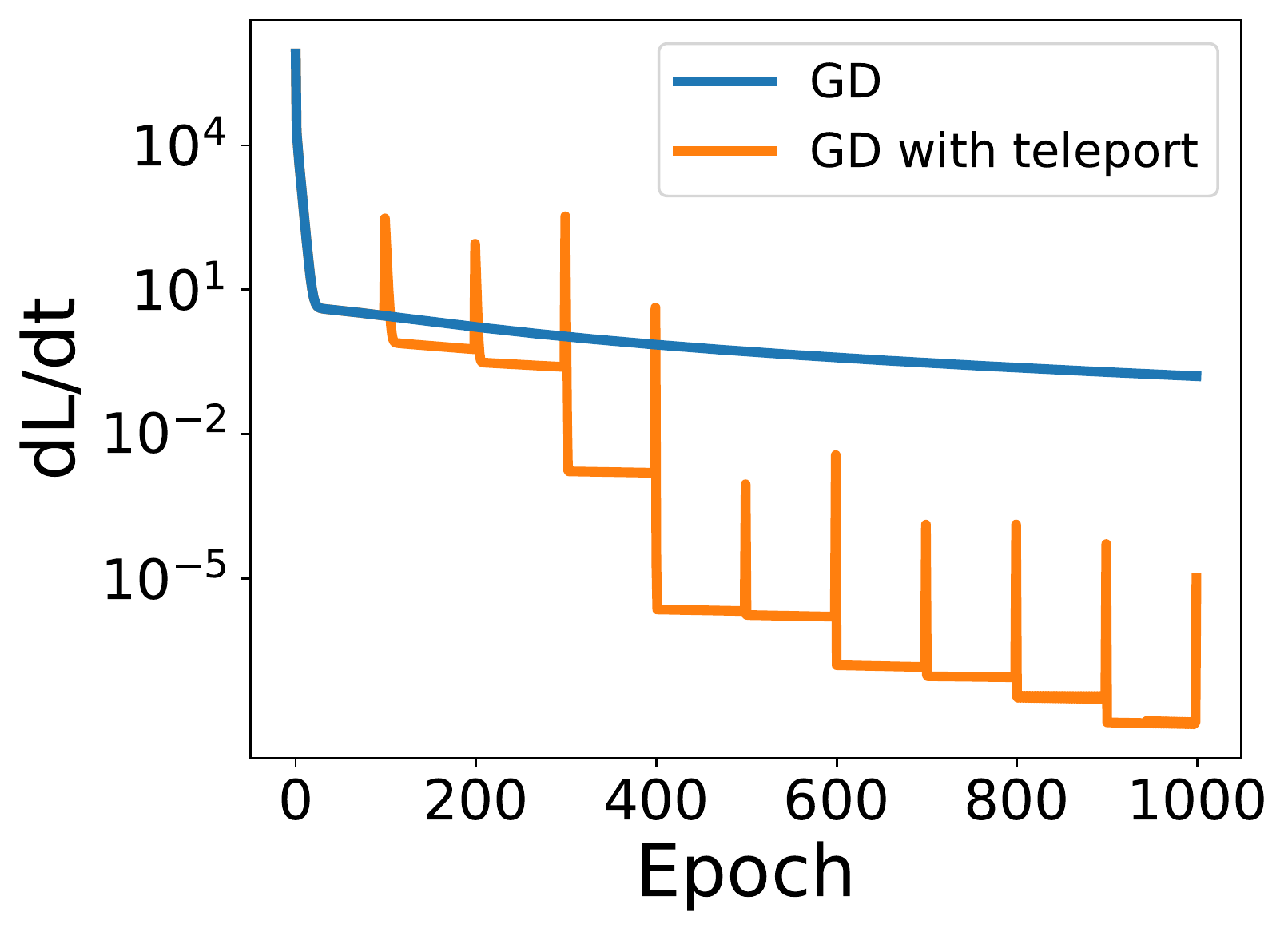}\\
e\hfill f \hfill g\hfill h \hfill ~ \\
\includegraphics[width=0.24\textwidth, trim=40pt 20pt 0 0, clip]{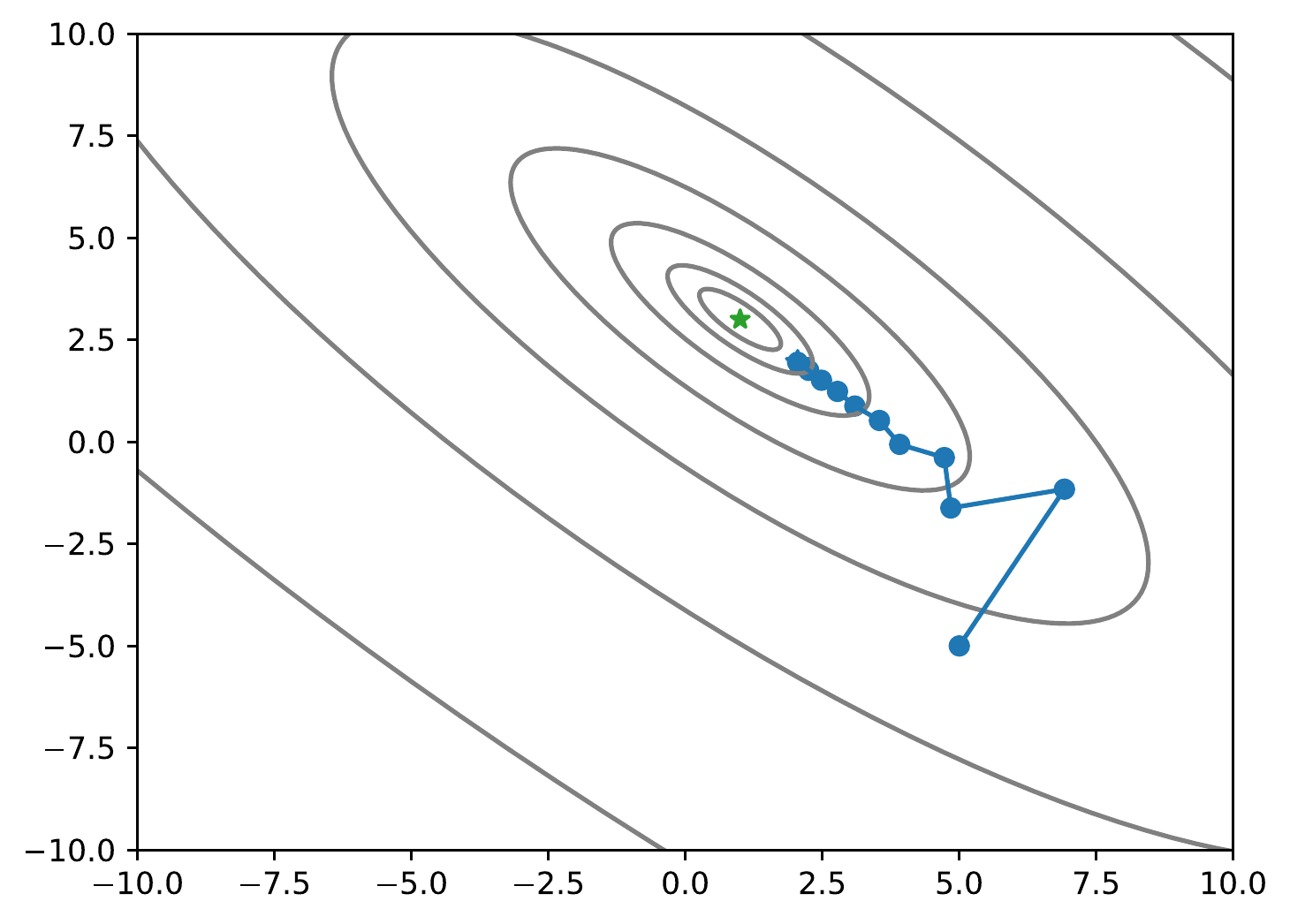}
\includegraphics[width=0.24\textwidth, trim=40pt 20pt 0 0, clip]{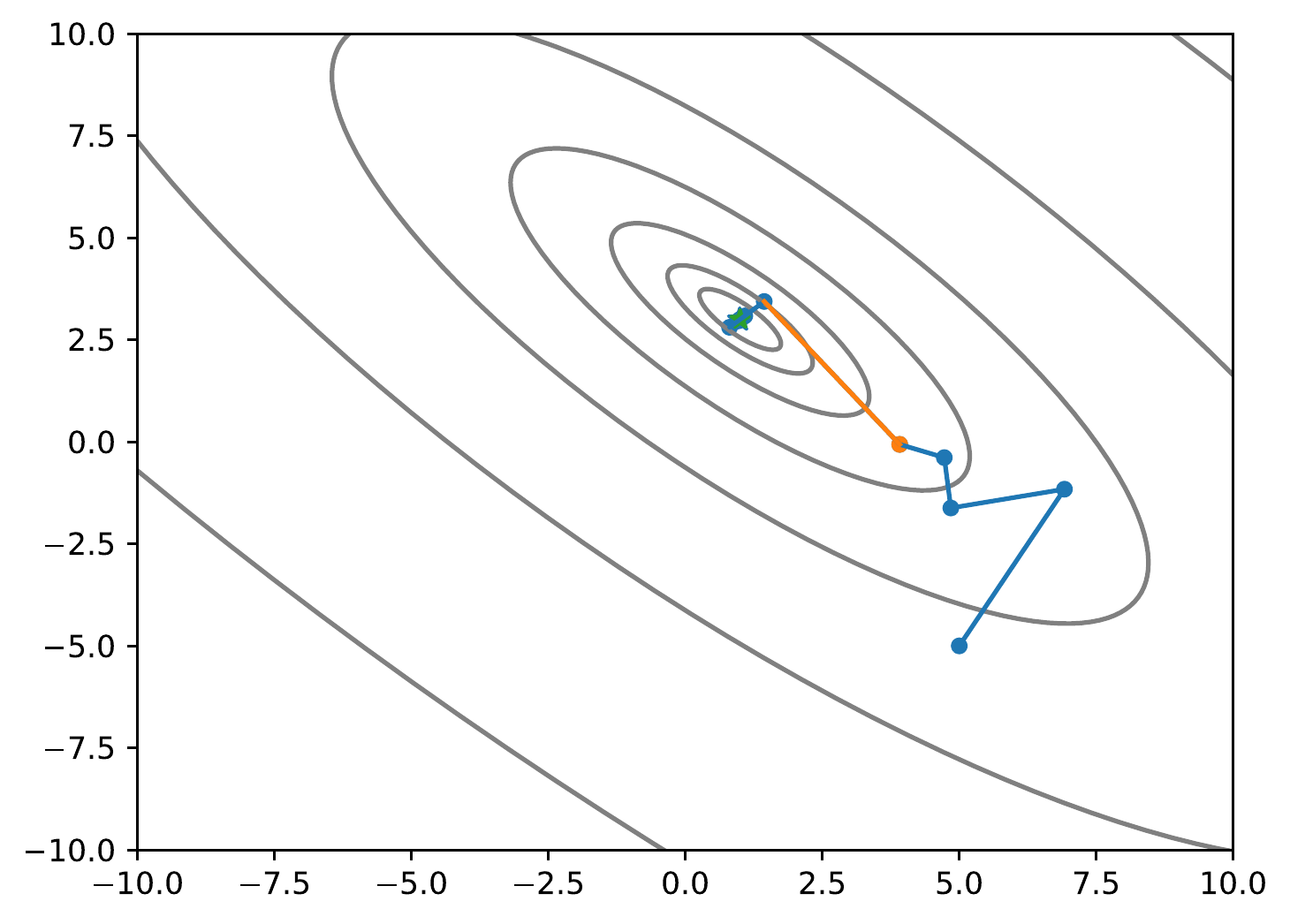}
\includegraphics[width=0.24\textwidth]{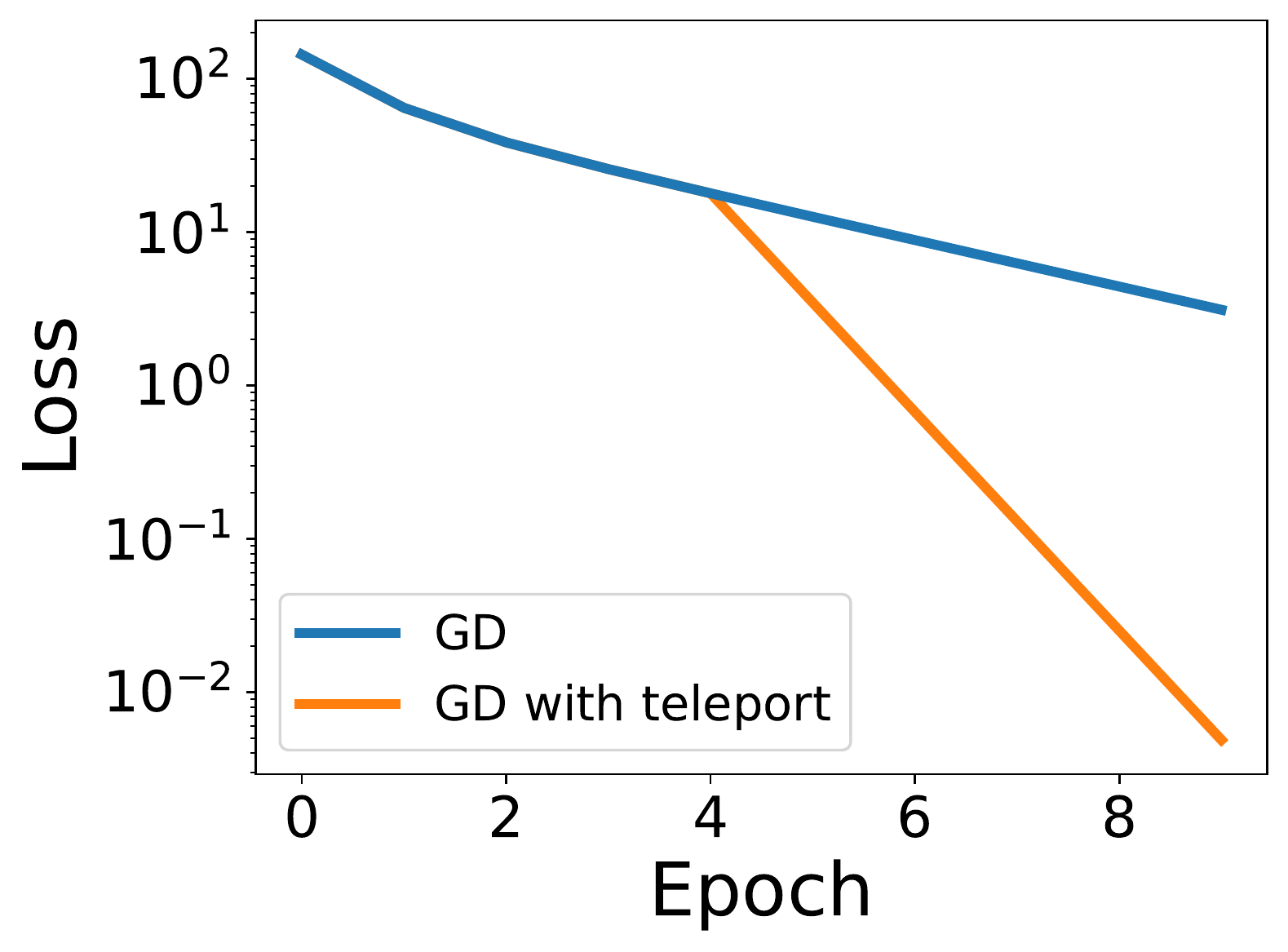}
\includegraphics[width=0.24\textwidth]{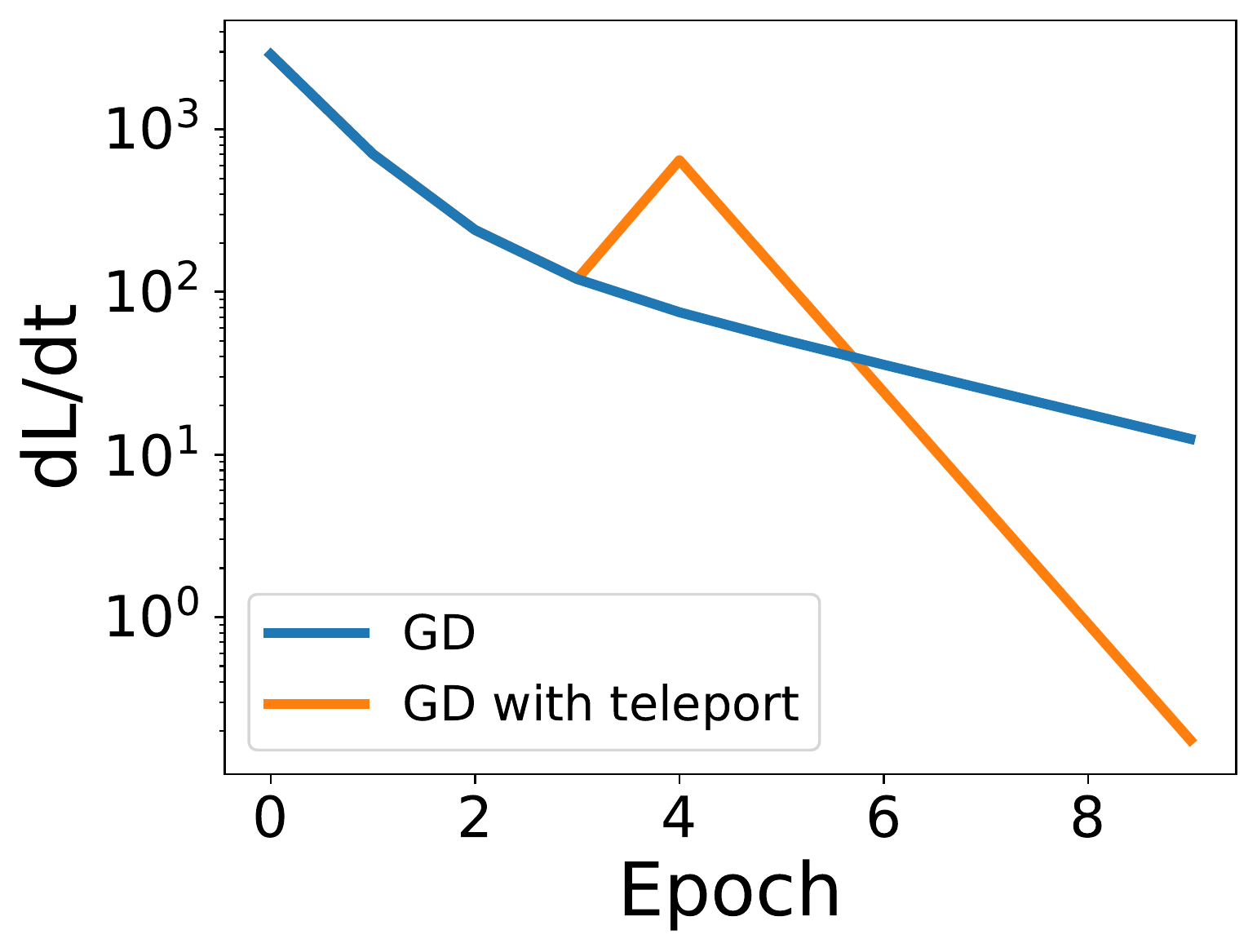}
\caption{Optimization of the Rosenbrock function (top row) and Booth function (bottom row) using (a) gradient descent and (b) the proposed algorithm. Contours represent the level sets of the loss function. Loss $\L$ and convergence rate $d\L/dt$ are shown in (c) and (d). Teleportation helps move the parameters towards the target. }
\label{fig:test_functions}
\end{figure}
\out{
\begin{figure}[t!]
\centering
\subfigure[]{\includegraphics[width=0.24\textwidth]{figures/test_functions/Rosenbrock_level_set_GD.pdf}}
\subfigure[]{\includegraphics[width=0.24\textwidth]{figures/test_functions/Rosenbrock_level_set_teleport.pdf}}
\subfigure[]{\includegraphics[width=0.24\textwidth]{figures/test_functions/Rosenbrock_loss.pdf}}
\subfigure[]{\includegraphics[width=0.24\textwidth]{figures/test_functions/Rosenbrock_loss_gradient.pdf}}
\subfigure[]{\includegraphics[width=0.24\textwidth]{figures/test_functions/Booth_level_set_GD.pdf}}
\subfigure[]{\includegraphics[width=0.24\textwidth]{figures/test_functions/Booth_level_set_teleport.pdf}}
\subfigure[]{\includegraphics[width=0.24\textwidth]{figures/test_functions/Booth_loss.pdf}}
\subfigure[]{\includegraphics[width=0.24\textwidth]{figures/test_functions/Booth_loss_gradient.pdf}}
\caption{Optimization of the Rosenbrock function (top row) and Booth function (bottom row). using (a) gradient descent and (b) the proposed algorithm. Contours represent the level sets of the loss function. Loss $\L$ and convergence rate $d\L/dt$ are shown in (c) and (d). Teleportation helps move the parameters towards the target. }
\label{fig:test_functions}
\vspace{-3mm}
\end{figure}
}
\paragraph{Rosenbrock function.} We apply symmetry teleportation to optimize the 2-variable Rosenbrock function \eqref{eqn:rosenbrock}. The parameters $x_1, x_2$ are initialized to $(-1, -1)$. Each algorithm is run 1000 steps with learning rate $10^{-3}$. We teleport the parameters every 100 steps. The group elements are found by gradient ascent on $\theta$, the parameter for the $\mathrm{SO}(2)$ group, for 10 steps with learning rate $10^{-1}$. 

The trajectory of parameters and the loss level sets are plotted in Figure \ref{fig:test_functions}a,b. The blue star denotes the final position of parameters, the green star denotes the target, and orange dots are the positions from which symmetry transforms start. While gradient descent is not able to reach the target in 1000 steps, teleportation allows large steps and reaches the target much earlier. Figure \ref{fig:test_functions}d shows that teleportation improves the norm of gradients in the following step, and \ref{fig:test_functions}c shows its effect on the loss value. 
Teleportations clearly reduce the number of steps needed for convergence. 

\paragraph{Booth function.}
We also test symmetry teleportation on the Booth function defined in  Eqn. \eqref{eqn:booth}. We initialize the parameters $x_1, x_2$  to $(5, -5)$. Each algorithm is run 10 steps with learning rate $0.08$. We perform symmetry teleportation on the parameters once, before epoch 5. The group elements are found by gradient ascent on $\theta$ for 10 steps with learning rate $0.001$. $\theta$ is initialized uniformly at random over $[0, \pi)$. Similar to the Rosenbrock function, teleportation moves the parameters to a trajectory with a larger convergence rate (Figure \ref{fig:test_functions} bottom row). 

\paragraph{Multilayer neural network regression.}
We further evaluated our method on a three-layer neural network with a  regression loss   $\min_{W_1,W_2,W_3}\| Y - W_3 \sigma(W_2 \sigma(W_1 X))\|_2$. The dimension of weight matrices are $W_3 \in \R^{8 \times 7}$, $W_2 \in \R^{7 \times 6}$, and $W_1 \in \R^{6 \times 5}$. $X \in \R^{5 \times 4}$ is the data, $Y \in \R^{8 \times 4}$ is the target, and $\sigma$ is the LeakyReLU activation with slope coefficient 0.1. Additional hyperparameter details can be found in Appendix \ref{appendix:exp-multilayer}. 

Teleportation of the $\mathrm{GL}(\R)$ group can be performed by finding an element $x$ in its Lie algebra, such that transforming $\vw$ by the group element $g=\exp(x)$ improves the gradient $\frac{d\L}{dt}$. We use the first order approximation of the exponential map. Between each pair of weight matrices, we replace $g_m$ by $I+T$ and $g_m^{-1}$ by $I-T$ in \eqref{eq:Wm-g-action-main}, where $T \in \R^{d_m \times d_m}$ is initialized to 0. Then we perform gradient ascent steps on $T$ with objective defined in Line 3 of Algorithm \ref{alg:teleport} and update the pair of weights. 

Figure \ref{fig:two-layer}a and \ref{fig:two-layer}b show the training curves plotted against epochs and time. Shaded area denotes one standard deviation from 5 runs. Since GD and AdaGrad use different learning rates, they are not directly comparable. However, the addition of teleportation clearly improves both algorithms. Figure \ref{fig:two-layer}c and \ref{fig:two-layer}d shows the squared norm of gradient from a single run. Teleportation increases the magnitude of gradient, and the trajectory with teleportation has a larger $d\L/dt$ value at the same loss values, which demonstrates that teleportation finds a better trajectory.

\begin{figure}[t!]
\centering
\ \ \ a\hfill b \hfill c\hfill d \hfill ~ \\
\includegraphics[width=0.245\textwidth]{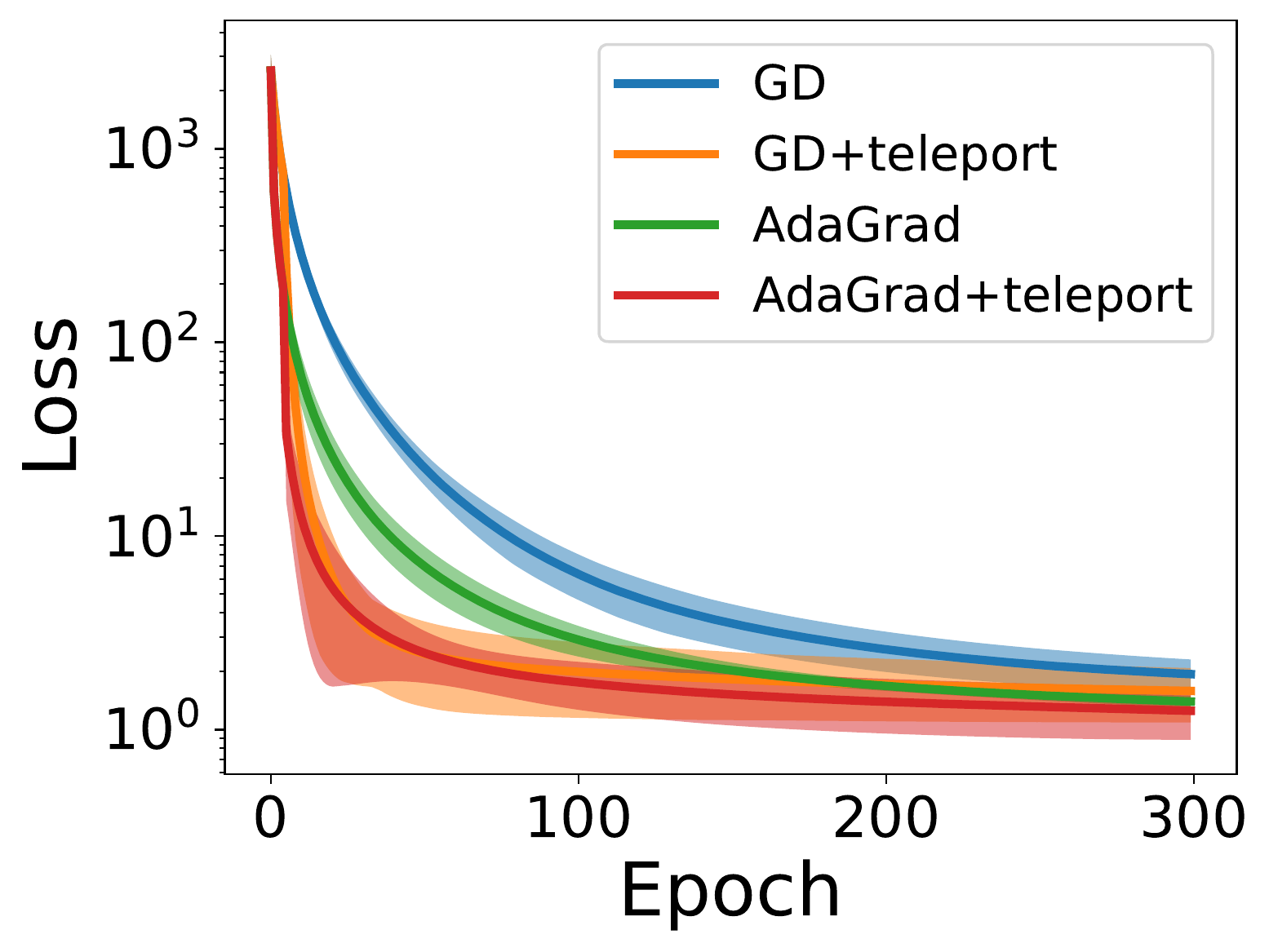}
\includegraphics[width=0.245\textwidth]{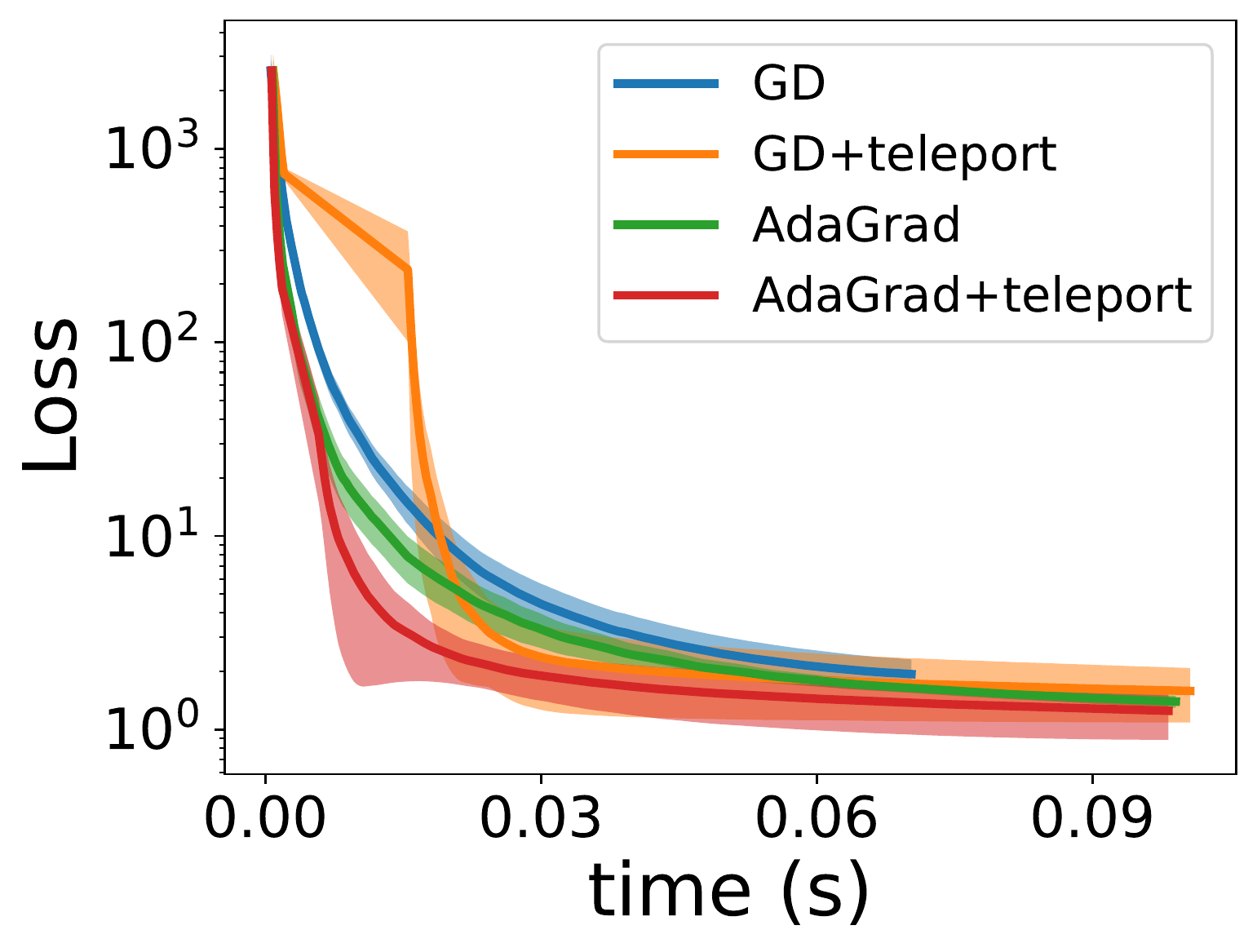}
\includegraphics[width=0.245\textwidth]{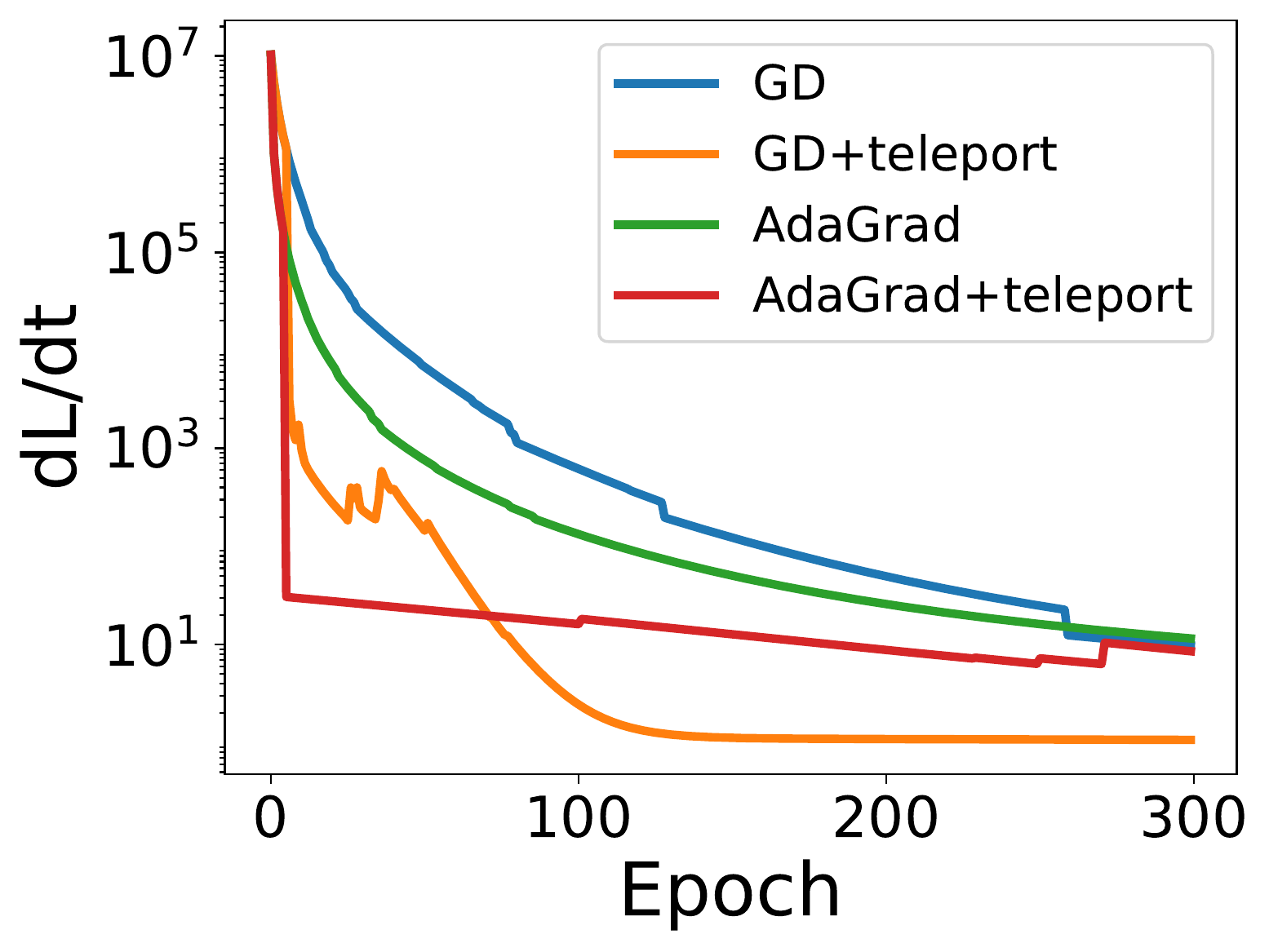}
\includegraphics[width=0.245\textwidth]{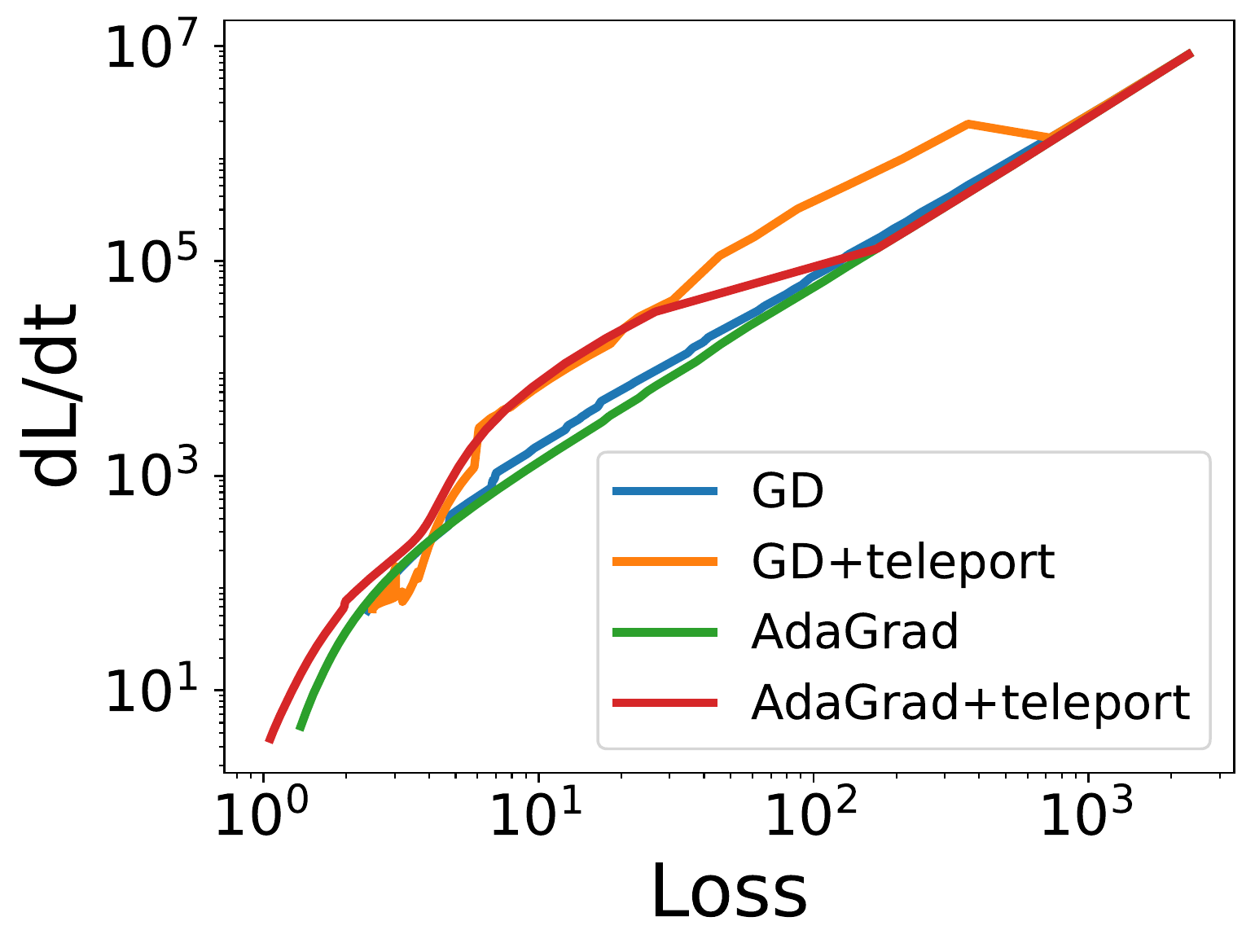}
\caption{Multi-layer network optimization  using gradient descent with and without teleportation. 
Teleportation reduces both the number of epochs and the total computational time required to reach convergence.
At the same loss values, the teleported version has a larger gradient.  }
\label{fig:two-layer}
\end{figure}

\out{
\begin{figure}[t!]
\centering
\ \ \ a\hfill b \hfill c\hfill ~ \\
\includegraphics[width=0.32\textwidth]{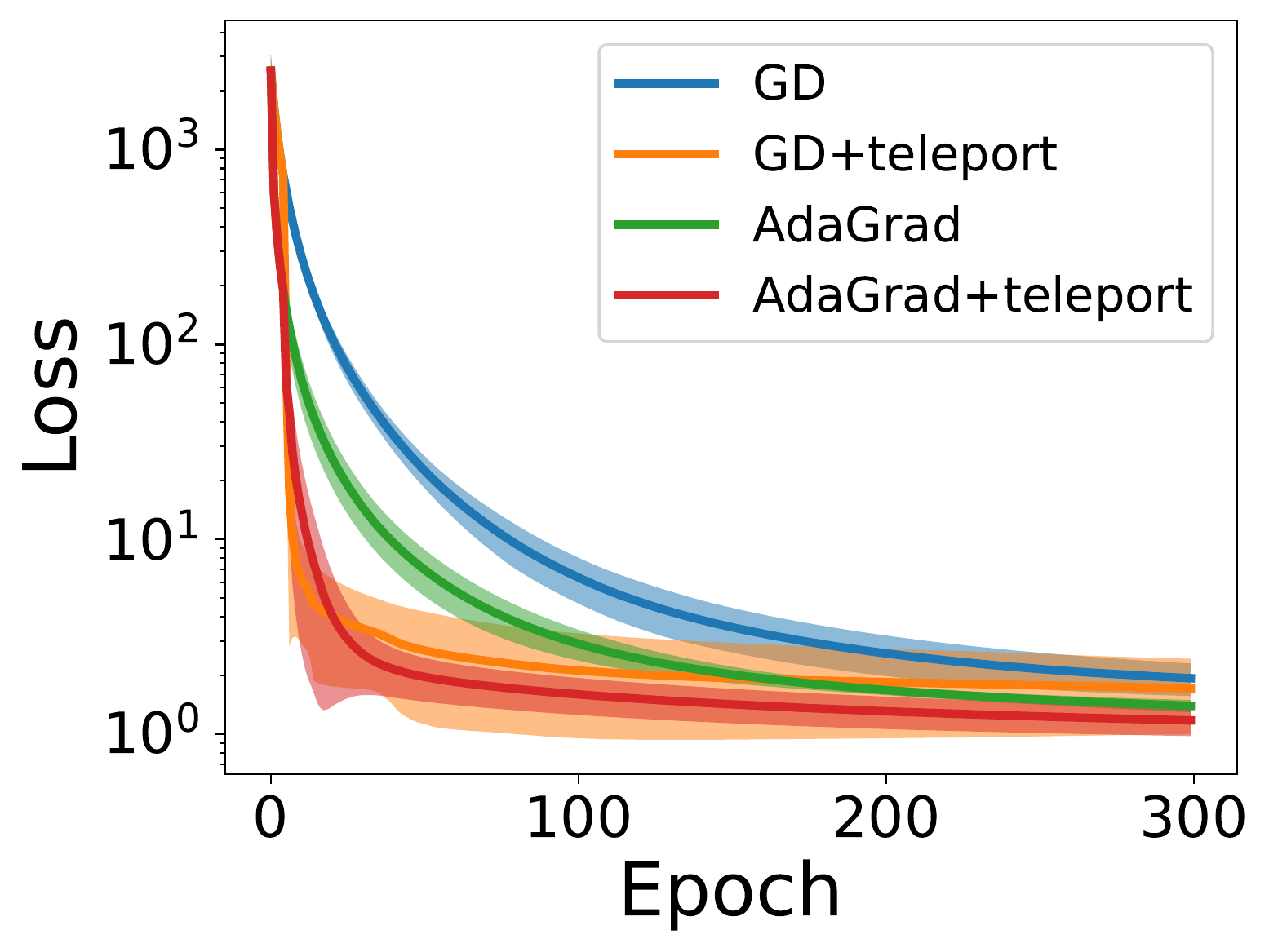}
\includegraphics[width=0.32\textwidth]{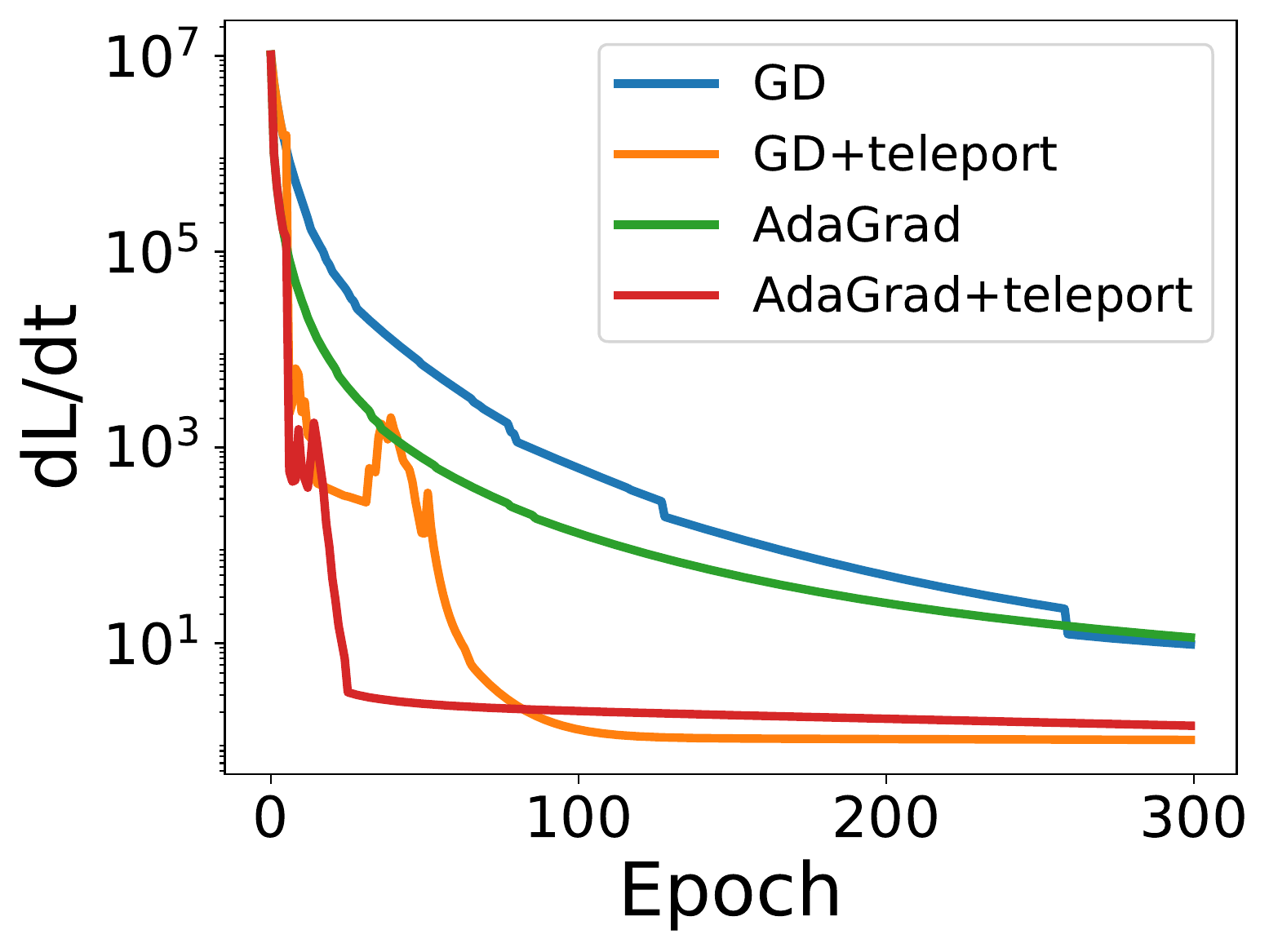}
\includegraphics[width=0.32\textwidth]{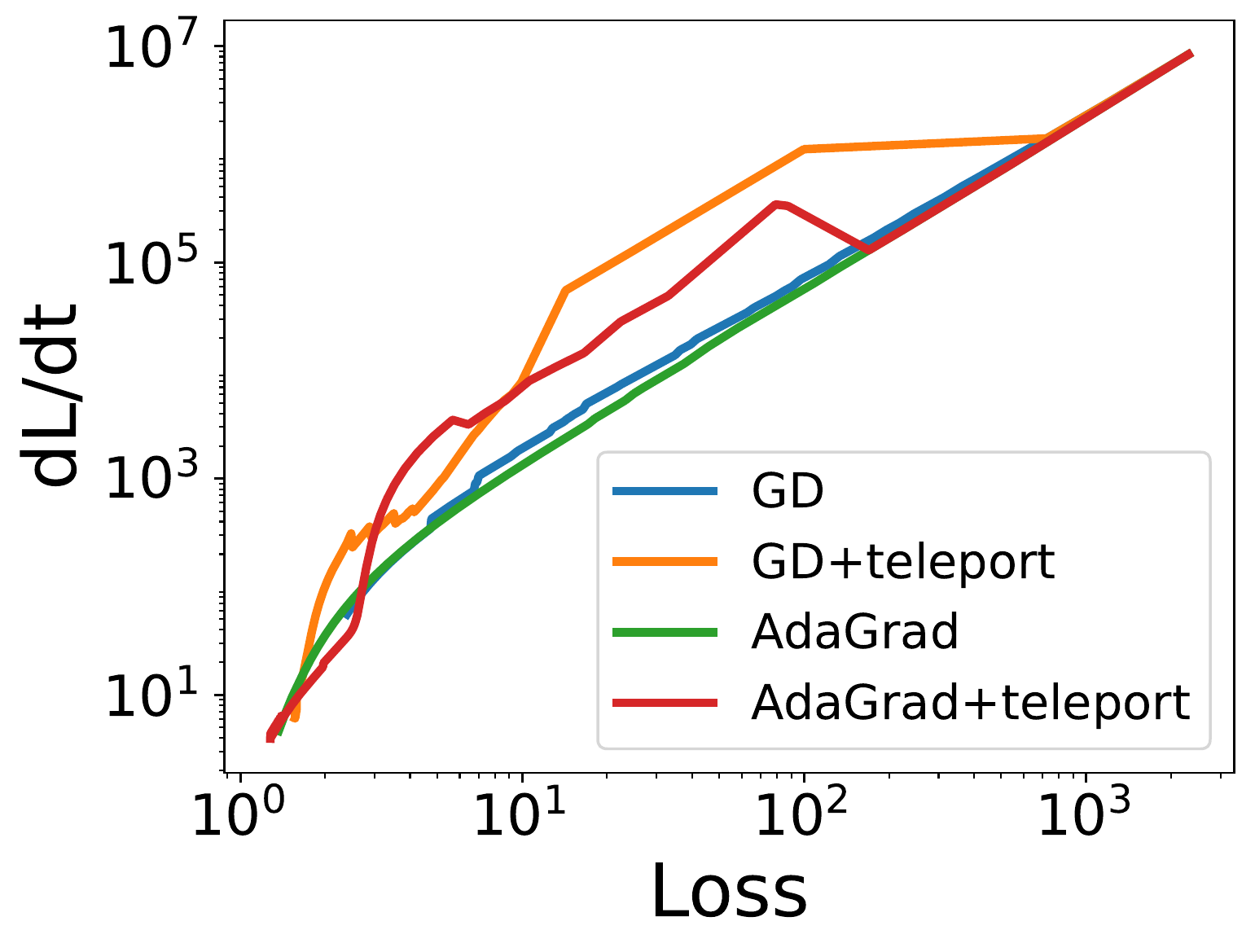}
\caption{Multi-layer network optimization  using gradient descent with and without teleportation. Loss $\L$ and convergence rate $d\L/dt$ are shown in (a) and (b). Teleportation improves the overall convergence rate. The gradient stays large after teleportation.  }
\label{fig:two-layer}
\end{figure}
}

\paragraph{MNIST classification.}
We apply symmetry teleportation on the MNIST classification task \citep{deng2012mnist}. We split the training set into 48,000 for training and 12,000 for validation. The input data has dimension $28 \times 28$ and is flattened into a vector. The output of the neural network has dimension $10$ corresponding to the 10 digit classes.  We used a three-layer neural network with hidden dimension $[512, 512]$, LeakyReLU activations, and cross-entropy loss. Learning rate is $2 \times 10^{-3}$, and learning rate for teleportation is $10^{-3}$. Each optimization algorithm is run 80 epochs with batch size of 20. Immediately after the first epoch, we apply teleportation using data from one mini-batch, and repeat for 4 different mini-batches. For each mini-batch, 10 gradient ascent steps are used to optimize $g_m$.

Figure \ref{fig:mnist}a,b shows the effect of teleportation on training and validation loss, and Figure \ref{fig:mnist}c,d shows the norm of the gradient in training. 
While teleportation significantly accelerates the decrease of the training loss in SGD, its effect on the validation loss is limited and detrimental for AdaGrad. 
Therefore, teleportation on MNIST makes training faster at the beginning but leads to earlier overfit and slightly worse validation accuracy. A possible reason is that regions with large gradients have sharp minima that do not generalize well. 

\begin{figure}[t!]
\centering
\ \ \ a\hfill b \hfill c\hfill d\hfill ~ \\
\includegraphics[width=0.24\textwidth]{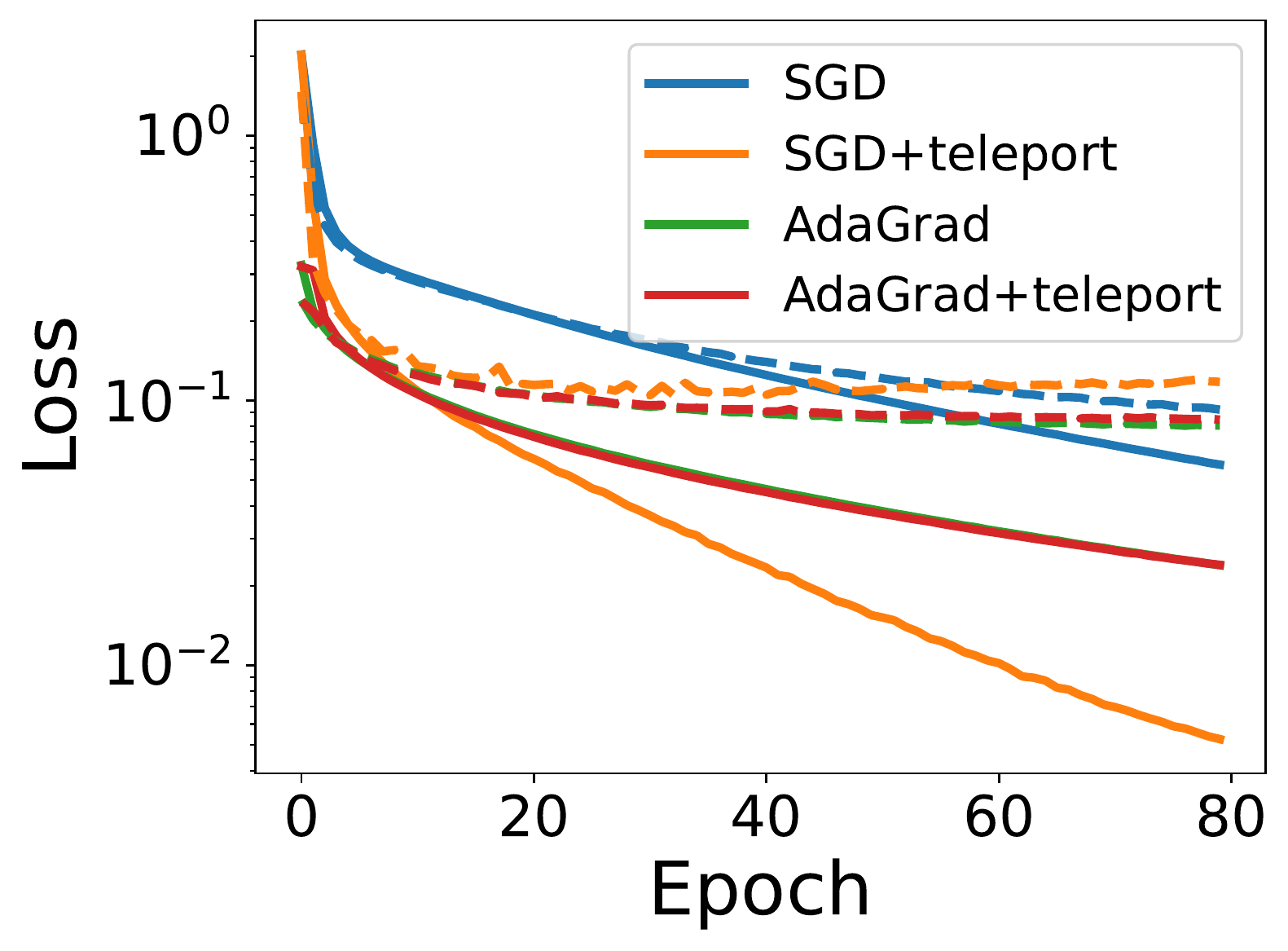}
\includegraphics[width=0.24\textwidth]{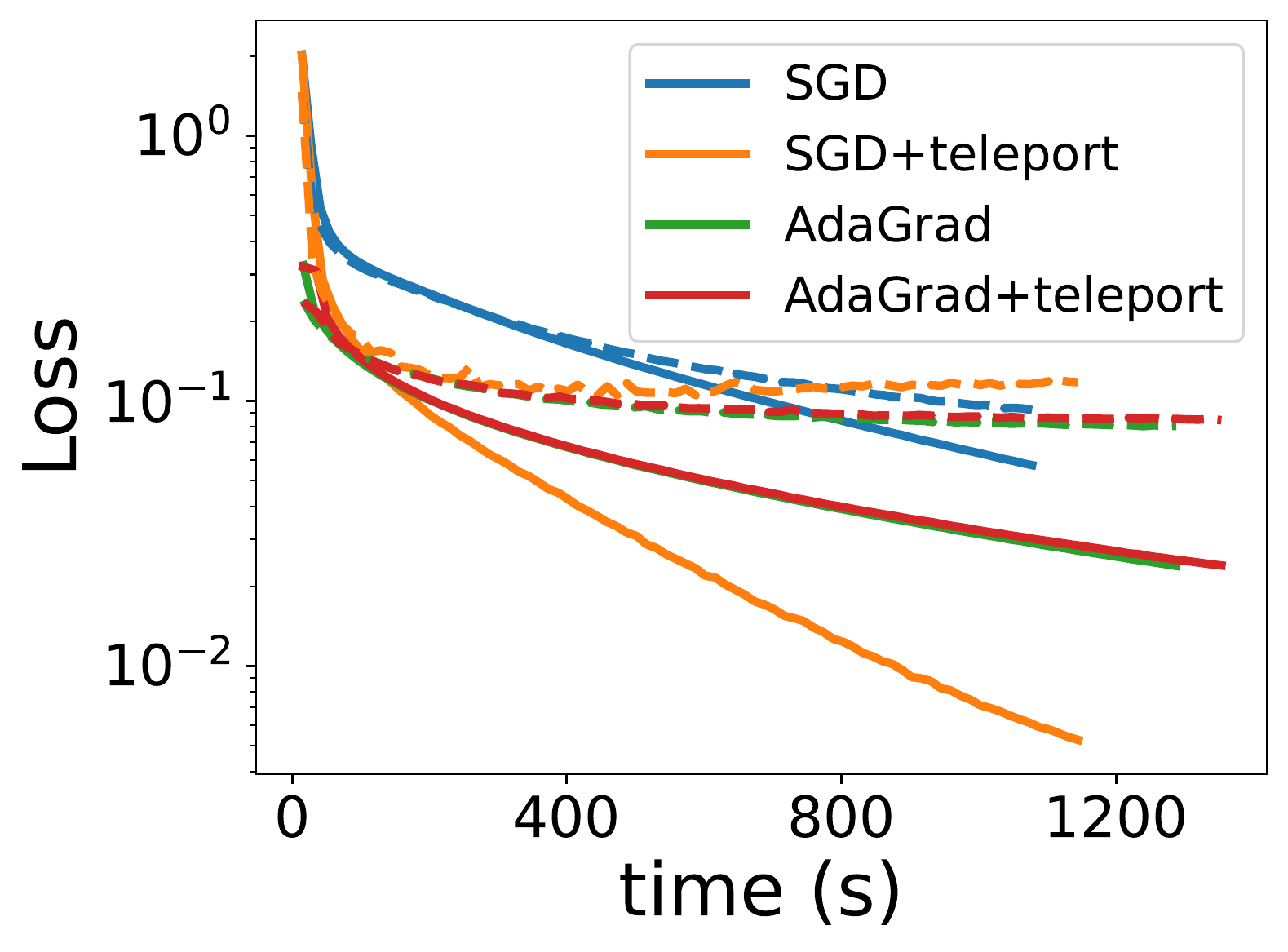}
\includegraphics[width=0.24\textwidth]{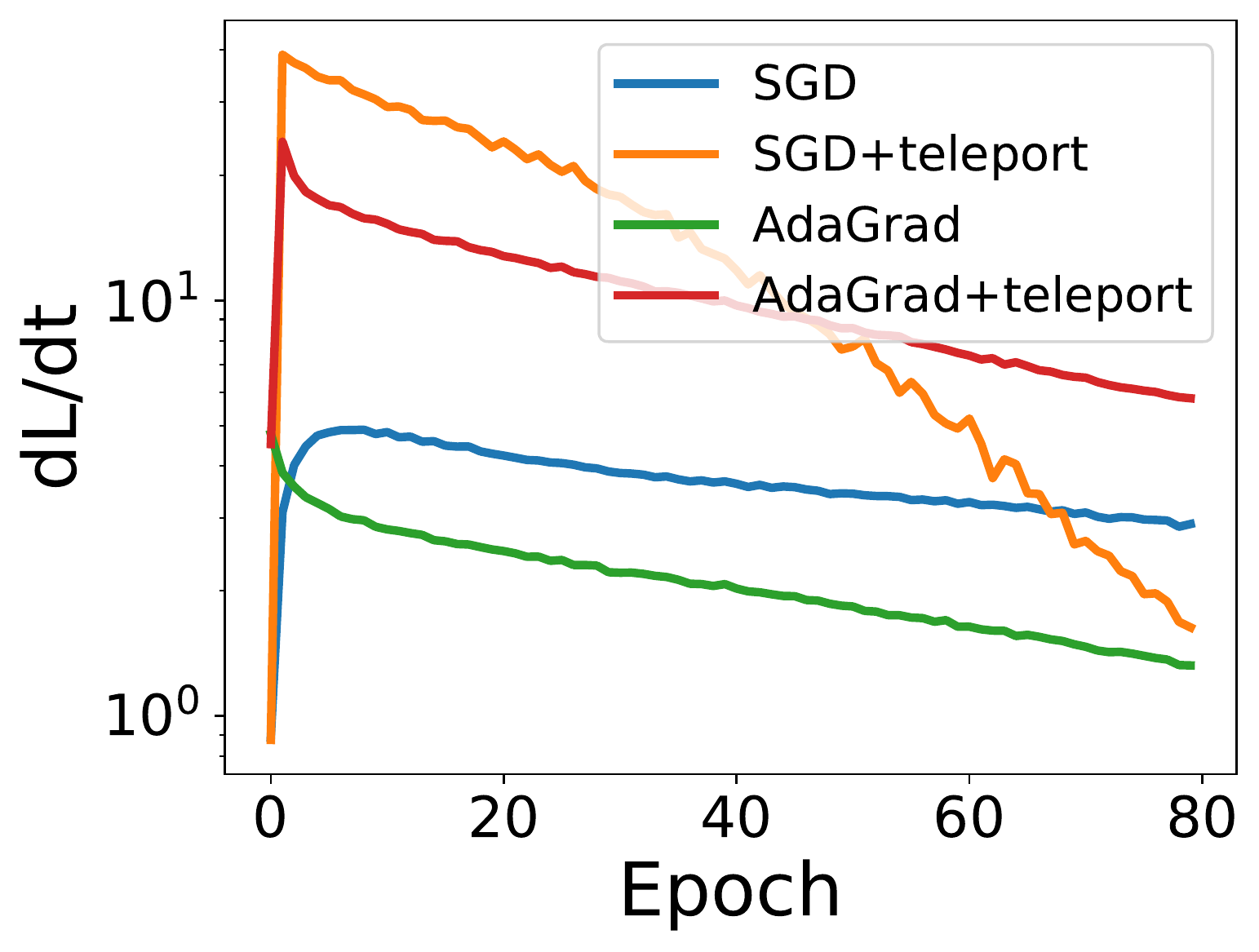}
\includegraphics[width=0.24\textwidth]{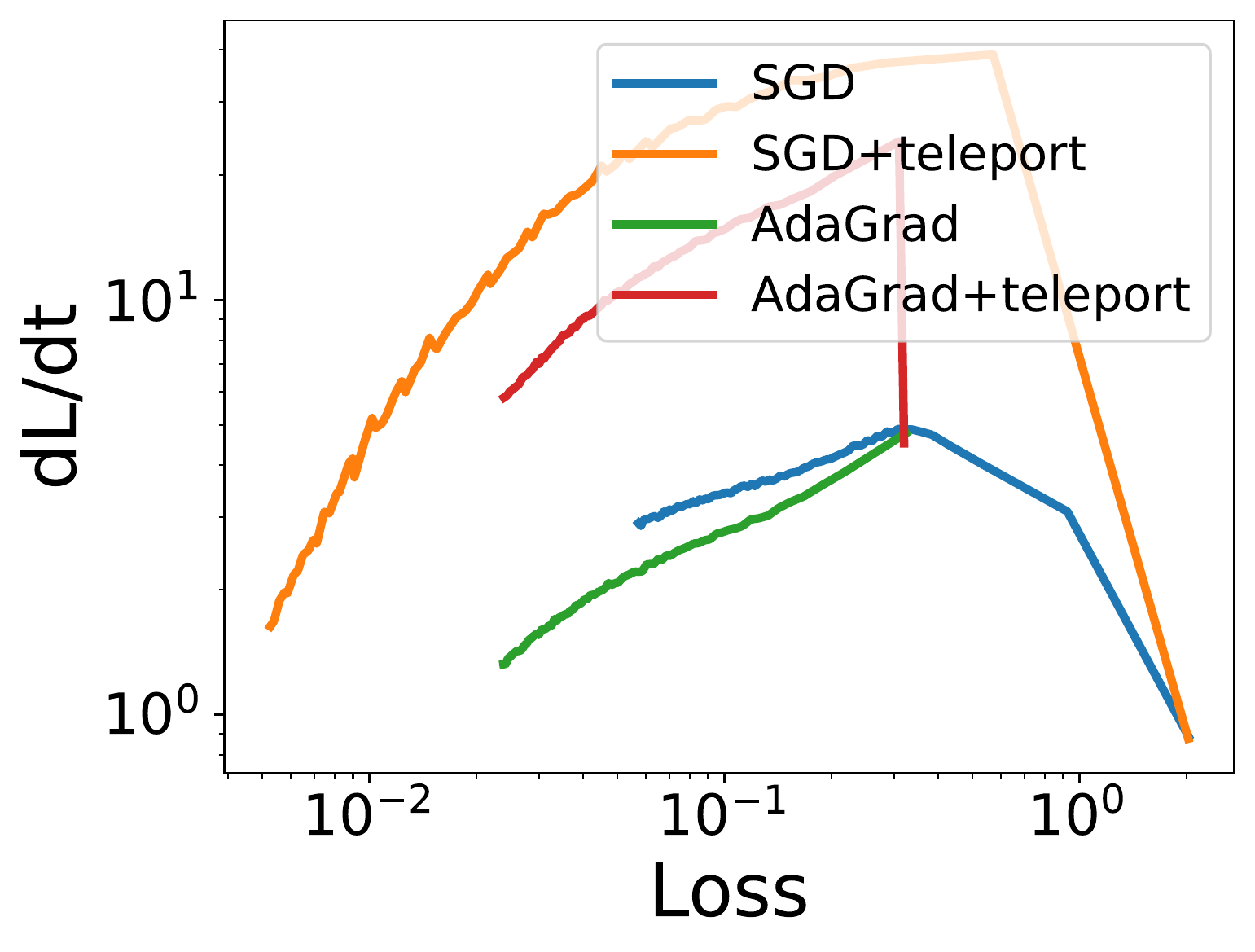}
\caption{MNIST classification using gradient descent with and without teleportation. Solid lines are training loss and dashed lines are validation loss.}
\label{fig:mnist}
\vspace{-3mm}
\end{figure}

\subsection{Teleportation schedule}
The effect of teleportation varies depending on its time and frequency. Figure \ref{fig:schedule} shows the result of teleportation on MNIST with different hyperparameters. In all experiments, we use SGD with batch size 20, learning rate $2 \times 10^{-3}$, and 10 gradient ascent steps for each teleportation. 

In Figure \ref{fig:schedule}a, we randomly select 4 different mini-batches and apply teleportation on each of them individually, but at different epochs. Teleportation before training has the worst performance. After epoch 0, the effect of teleportation is stronger when it is applied earlier. In Figure \ref{fig:schedule}b, we again apply teleportations on 4 randomly selected mini-batches, but repeat this with 5 different teleportation schedules (hyperparameter $K$ in Algorithm \ref{alg:teleport-sgd}) as shown in the legend. All schedules have the same number of teleportations. Smaller intervals between teleportations accelerate convergence more significantly. In Figure \ref{fig:schedule}c, we apply teleportation immediately after the first epoch, but use different numbers of mini-batches (hyperparameter $B$ in Algorithm \ref{alg:teleport-sgd}) and teleport using each of them individually. Using more mini-batches to teleport leads to faster decrease in training loss but is also more prone to overfitting.

\begin{figure}[t!]
\centering
\ \ \ a\hfill b \hfill c\hfill ~ \\
\includegraphics[width=0.32\textwidth]{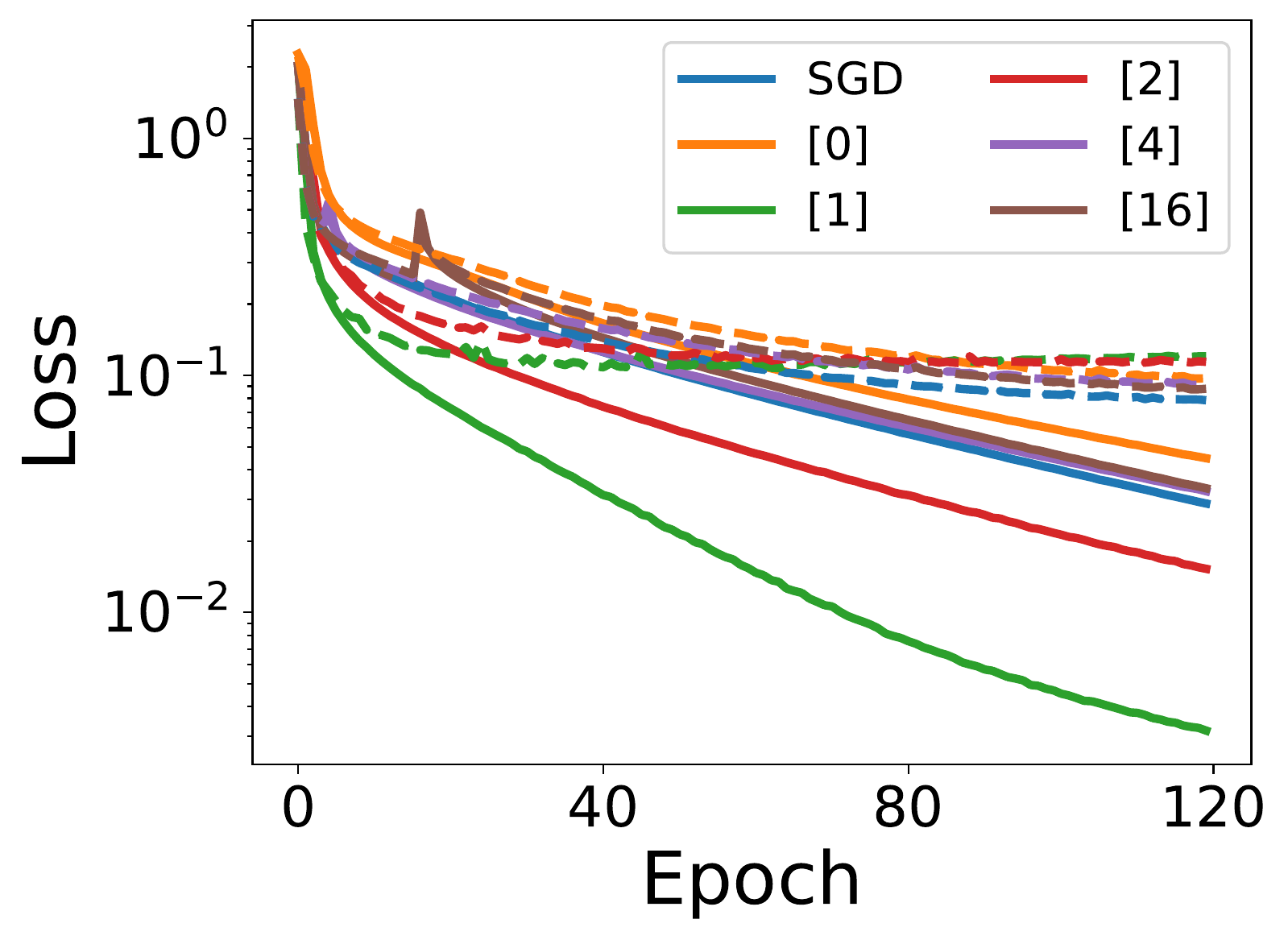}
\includegraphics[width=0.32\textwidth]{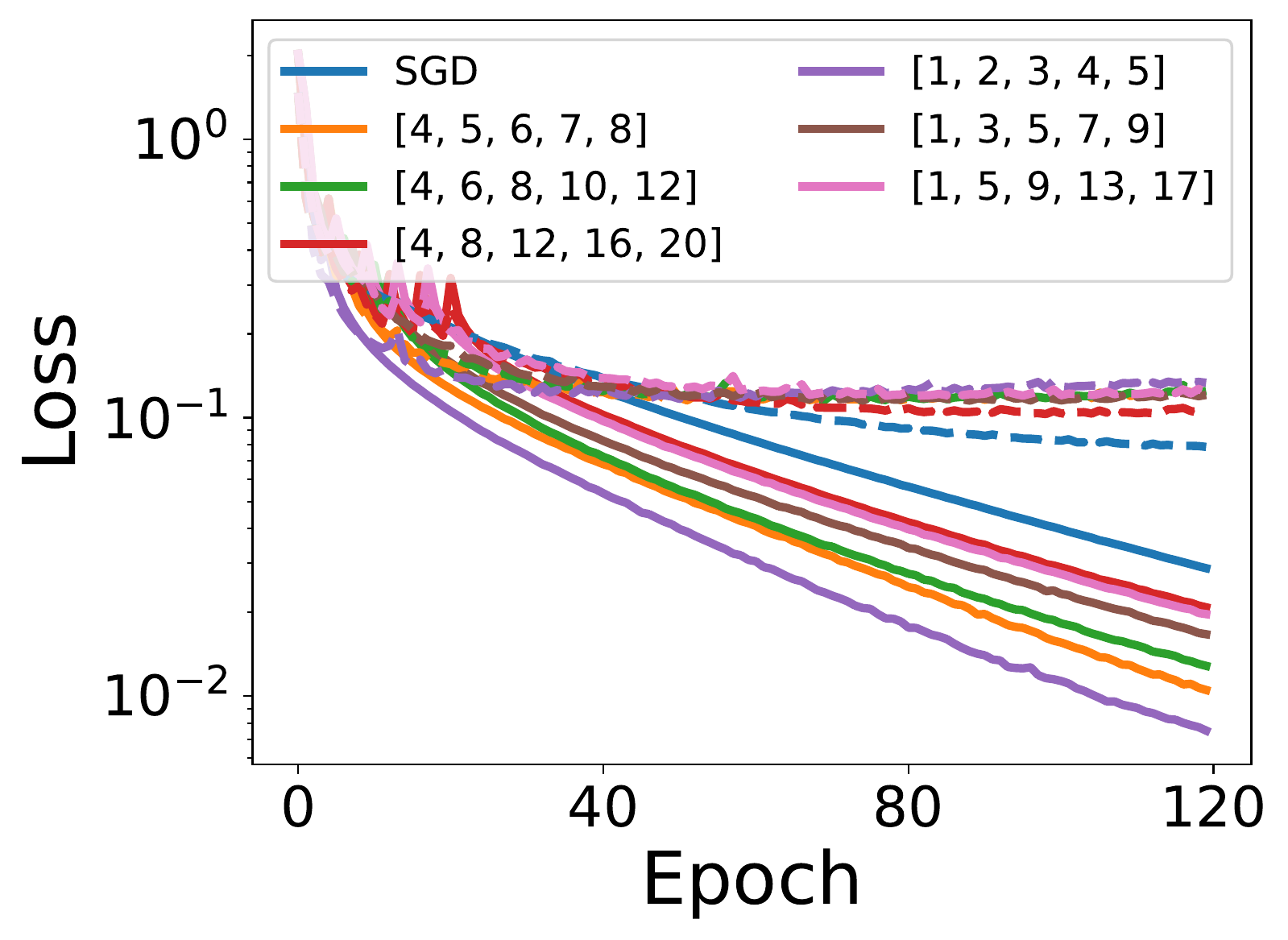}
\includegraphics[width=0.32\textwidth]{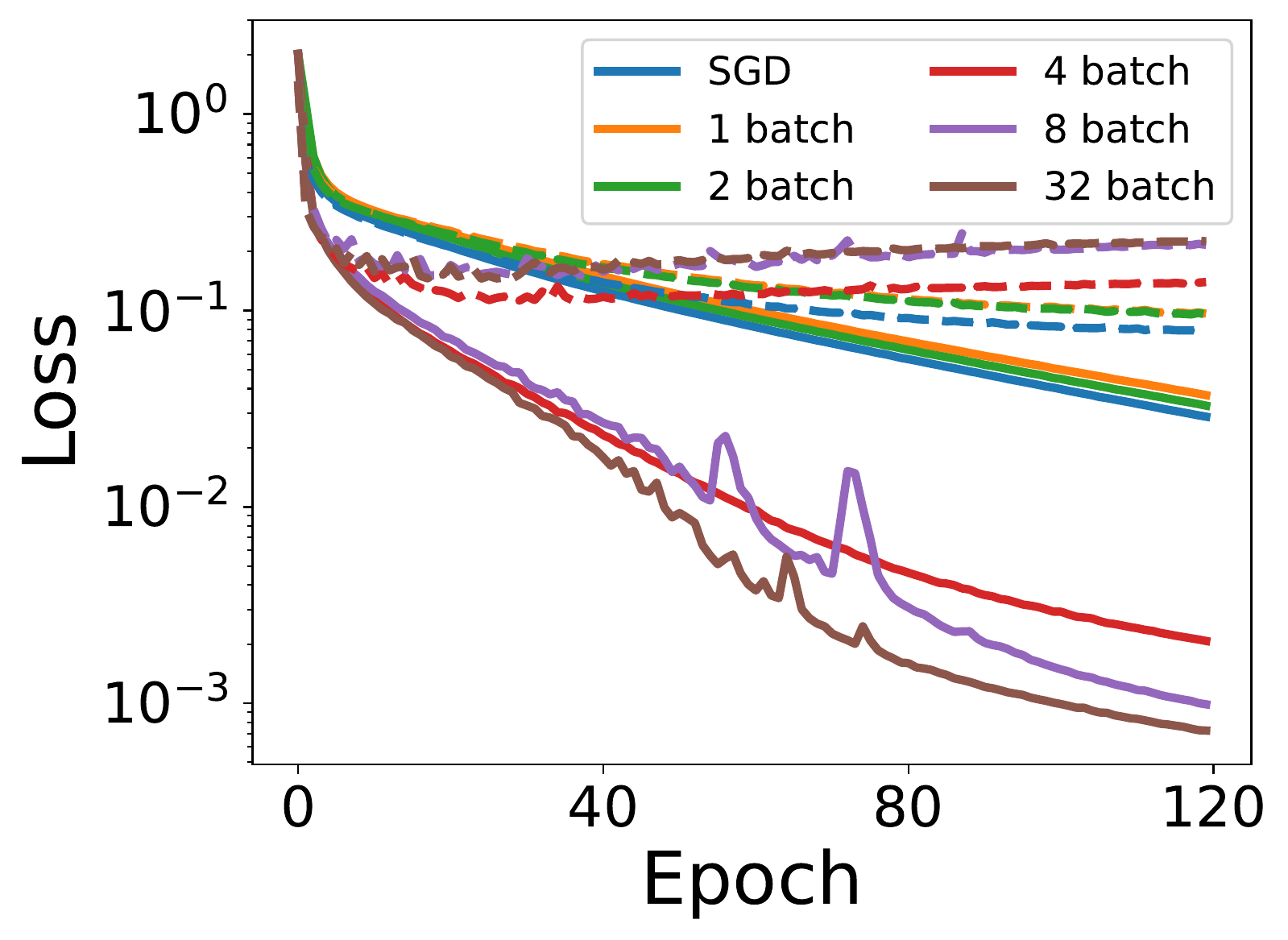}
\caption{Teleportation (a) once at different epoch, (b) 5 times with different teleportation schedules, and (c) using different number of mini-batches. The lists in the legend of (a) and (c) denote the epoch numbers in teleportation schedule where teleportation happens.}
\label{fig:schedule}
\vspace{-3mm}
\end{figure}

\subsection{Runtime analysis}
The additional amount of computation introduced by teleportation depends on the implementation of Line 2-5 of Algorithm \ref{alg:teleport}. We discuss our implementation for teleporting multi-layer neural networks as an example. Teleportating each pair of adjacent weight matrices requires computing the inverse of the output from the previous layer. 
Denote the batch size as $n$, the largest dimension of all layers as $d_{max}$, and the number of layers as $l$. Assume that $d_{max} > n$.  Computing the inverse of the output of each layer has complexity $O(d_{max}^2 n)$, and computing the pseudoinverse for all layers has complexity $O(d_{max}^2 nl)$. Note that all matrices we invert have dimensions at most $d_{max} \times n$. 

For one gradient ascent step on $g$, the forward and backward pass both have complexity $O(d_{max}^2 nl)$. This is the same as the forward and backward pass of gradient descent on $\vw$ because the architecture is the same except with approximately twice as many layers.
We perform $t$ gradient ascent steps on $g$. 
Therefore, the computation cost for one teleportation is $O(d_{max}^2 nlt)$. 

We show empirically that the runtime for teleportation scales polynomially with matrix dimensions and linearly with the number of layers. We record the runtime of gradient descent on a Leaky-ReLU neural network for 300 epochs, with a 10-step teleportation every 10 epochs. Each pair of adjacent weight matrices is transformed by a group action during teleportation. Figure \ref{fig:multi-layer-time}(a) shows the runtime for a 3-layer network using square weights and data matrices with different dimensions. Figure \ref{fig:multi-layer-time}(b) shows the runtime for 128-by-128 weight and data matrices with different number of layers. 

Although the teleportation step has the same complexity as a gradient descent step, the runtime is dominated by teleportation due to larger constants in the complexity analysis. The trade-off between teleportation time and convergence rate depends on specific problems. In our experiments, the convergence rate is improved by a small number of teleportation steps which does not add significant computational overhead.

\begin{figure}[t!]
\centering
~\hfill  \ \ a\hfill ~\hfill \ \ b \hfill ~ \hfill~ \hfill ~ \\
\includegraphics[width=0.32\textwidth]{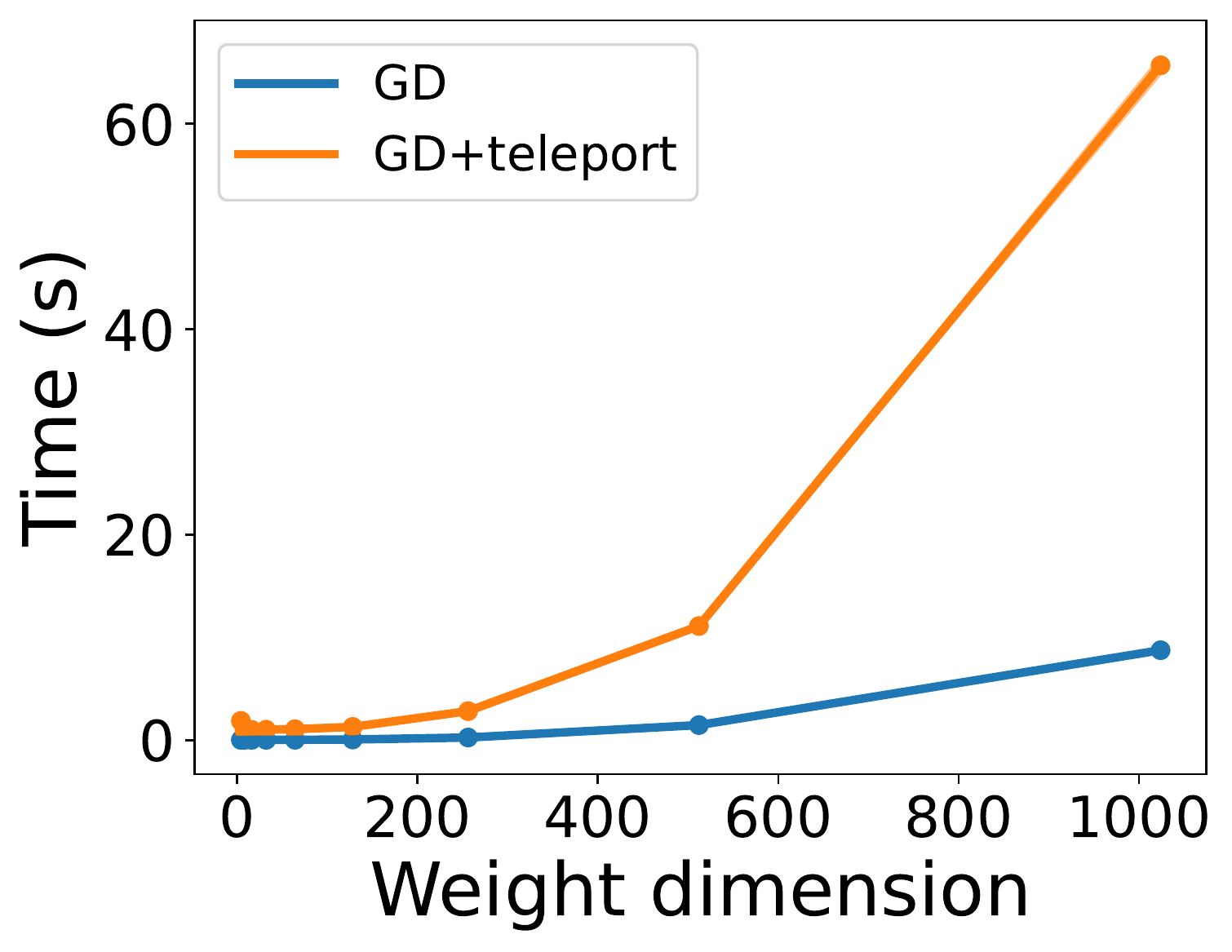}
\includegraphics[width=0.32\textwidth]{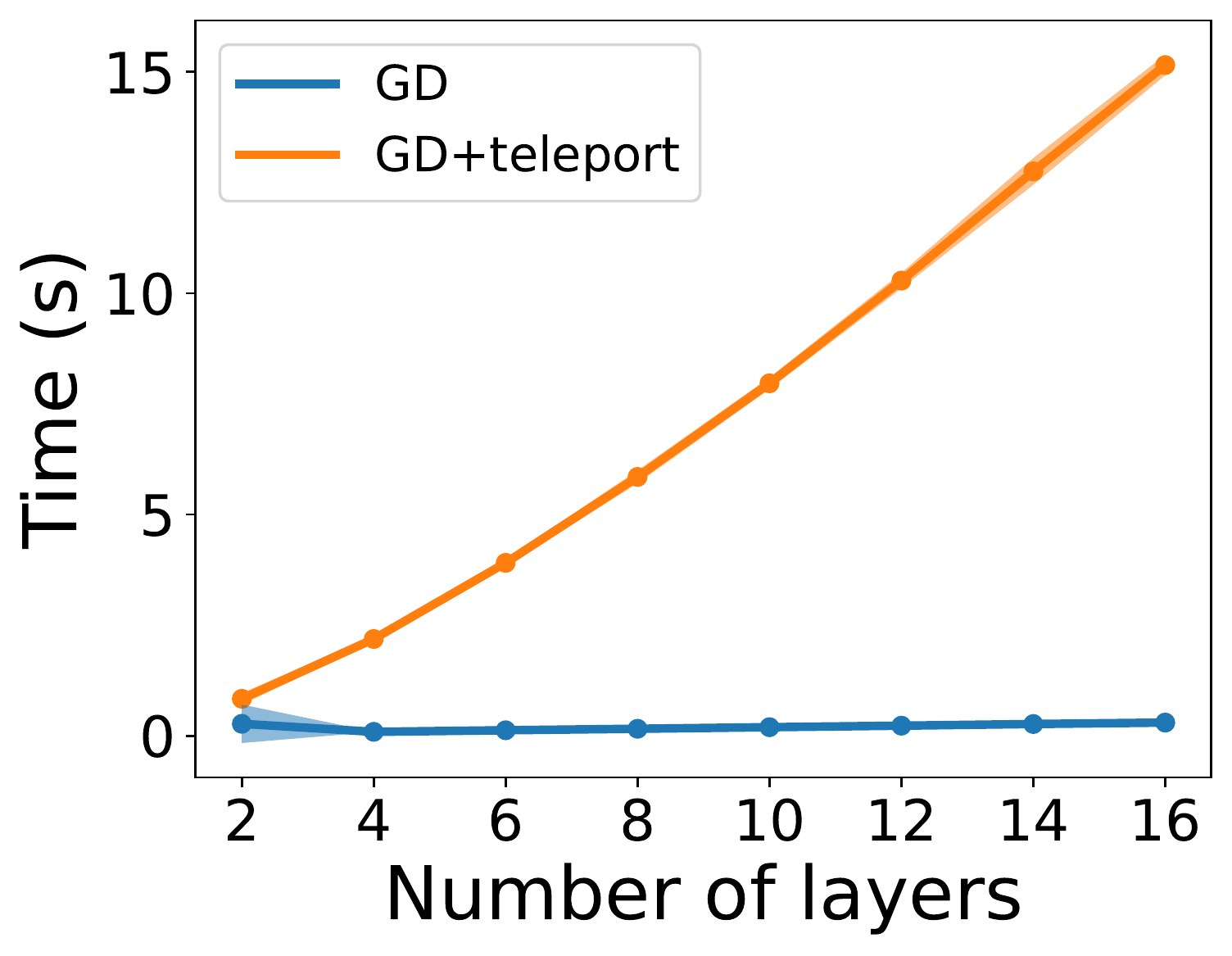}
\caption{Gradient descent training time on a Leaky-ReLU neural network for 300 epochs, with a 10-step teleportation every 10 epochs. (a) 3-layer network using square weights and data matrices, with different dimensions. (b) Weights and data are all 128 by 128 matrices, but the network has different number of layers. }
\label{fig:multi-layer-time}
\vspace{-3mm}
\end{figure}
\section{Discussion and Conclusions}
We proposed a new optimization algorithm that exploits the symmetry in the loss landscape to improve convergence. It is interesting to note that optimizing $d\L/dt$ locally sometimes leads to an improved global path. One example is the ellipse function (Figure \ref{fig:sales_pic}), where teleporting once ensures that $d\L/dt$ is optimal at every $\L$ value along the trajectory.
Another example is the matrix factorization problem $\L(U, V) = \|Y-UV\|_2^2$. \cite{tarmoun2021understanding} shows that the convergence rate increases with the imbalance $U^TU - V^TV$. Consider the transformation $U, V \xrightarrow[]{} Ug, g^{-1}V$. To optimize $d\L(U,V)/dt$ locally, we would need a large $g \in \mathrm{GL}_n$, but a large $g$ also leads to a large $U^TU - V^TV$ which is positively correlated with the overall convergence rate of the entire trajectory. Hence teleportation is guaranteed to produce a better trajectory. 

A potential future direction is to derive the exact expression for how teleportation affects the loss value at a later time in gradient flow, which may lead to a closed-form solution of the optimal teleportation destination. Additionally, inspired by the landscape view of parameter space symmetry \cite{simsek2021geometry}, teleportation using discrete (permutation) symmetries may allow us to reach a better minimum. Finally, additional theory can be developed to explain the relationship between teleportation and second-order methods under the approximations we introduced, especially to quantify the improvement on the overall convergence rate, and to derive its effect on generalization bounds. Integrating teleportation with other advanced optimizers such as Adam and RMSprop would be another interesting future step.

\begin{ack}
This work was supported in part by U.S. Department Of Energy, Office of Science, U. S. Army Research Office
under Grant W911NF-20-1-0334, Google Faculty Award, Amazon Research Award, and NSF Grants \#2134274, \#2107256 and \#2134178.
R. Walters is supported by the Roux Institute and the Harold Alfond Foundation.
We are grateful to Iordan Ganev for many helpful discussions. 

\end{ack}

\bibliography{references}
\bibliographystyle{plainnat}

\out{
\section*{Checklist}


\begin{enumerate}

\item For all authors...
\begin{enumerate}
  \item Do the main claims made in the abstract and introduction accurately reflect the paper's contributions and scope?
    \answerYes{}
  \item Did you describe the limitations of your work?
    \answerYes{}
  \item Did you discuss any potential negative societal impacts of your work?
    \answerNo{}
  \item Have you read the ethics review guidelines and ensured that your paper conforms to them?
    \answerYes{}
\end{enumerate}

\item If you are including theoretical results...
\begin{enumerate}
  \item Did you state the full set of assumptions of all theoretical results?
    \answerYes{}
        \item Did you include complete proofs of all theoretical results?
    \answerYes{}
\end{enumerate}

\item If you ran experiments...
\begin{enumerate}
  \item Did you include the code, data, and instructions needed to reproduce the main experimental results (either in the supplemental material or as a URL)?
    \answerYes{}
  \item Did you specify all the training details (e.g., data splits, hyperparameters, how they were chosen)?
    \answerYes{}
        \item Did you report error bars (e.g., with respect to the random seed after running experiments multiple times)?
    \answerYes{}
        \item Did you include the total amount of compute and the type of resources used (e.g., type of GPUs, internal cluster, or cloud provider)?
    \answerYes{}
\end{enumerate}

\item If you are using existing assets (e.g., code, data, models) or curating/releasing new assets...
\begin{enumerate}
  \item If your work uses existing assets, did you cite the creators?
    \answerNA{}
  \item Did you mention the license of the assets?
    \answerNA{}
  \item Did you include any new assets either in the supplemental material or as a URL?
    \answerNA{}
  \item Did you discuss whether and how consent was obtained from people whose data you're using/curating?
    \answerNA{}
  \item Did you discuss whether the data you are using/curating contains personally identifiable information or offensive content?
    \answerNA{}
\end{enumerate}

\item If you used crowdsourcing or conducted research with human subjects...
\begin{enumerate}
  \item Did you include the full text of instructions given to participants and screenshots, if applicable?
    \answerNA{}
  \item Did you describe any potential participant risks, with links to Institutional Review Board (IRB) approvals, if applicable?
    \answerNA{}
  \item Did you include the estimated hourly wage paid to participants and the total amount spent on participant compensation?
    \answerNA{}
\end{enumerate}

\end{enumerate} }


\appendix

\newpage
\section{Adaptations of Algorithm \ref{alg:teleport} for different problems}
\subsection{Stochastic gradient descent}
\label{appendix:algorithm-sgd}
We extend Algorithm \ref{alg:teleport} to stochastic gradient descent (SGD). We apply the group actions using data from a mini-batch $X_i$, and repeat for $B$ mini-batches each time. The gradient we optimize, $\Tilde{\grad} \L(X_i, g \cdot \vw_t)$, also uses single mini-batches. Algorithm \ref{alg:teleport-sgd} provides the framework for teleportation in SGD.
\begin{algorithm}
    \SetKwInput{Input}{Input}
    \SetKwInput{Output}{Output}

    \Input{Loss function $\L(\vw)$, learning rate $\eta$, number of epochs $t_{max}$, initialized parameters $\vw_0$, symmetry group $G$, teleportation schedule $K$, number of mini-batches used to teleport $B$.}
    \Output{$\vw_{t_{max}}$.}

    \For{$t \leftarrow 0$ \KwTo $t_{max}-1$}{
        \eIf{$t \in K$}{
            \For{$X_i$ in the first $B$ mini-batches}{
                $g \leftarrow \text{argmax}_{g \in G} \| \Tilde{\grad} \L(X_i, g \cdot \vw_t) \|^2$ \\
                $\vw_t \leftarrow g \cdot \vw_t$ \\
                $\vw_{t} \leftarrow \vw_t - \eta \Tilde{\grad} \L(X_i, \vw_t)$
            }
            \For{$X_i$ in the rest mini-batches}{
                $\vw_{t} \leftarrow \vw_t - \eta \Tilde{\grad} \L(X_i, \vw_t)$
            }
        }{
            \For{all mini-batches $X_i$}{
                $\vw_{t} \leftarrow \vw_t - \eta \Tilde{\grad} \L(X_i, \vw_t)$
            }
        }
       $\vw_{t+1} \leftarrow \vw_t$ \\
        
    }

    \KwRet{$\vw_{t_{max}}$}
    \caption{Symmetry Teleportation (SGD)}
    \label{alg:teleport-sgd}
\end{algorithm}

\subsection{Data transformation}
\label{appendix:algorithm-data-transform}
Algorithm \ref{alg:teleport-data} here modifies Algorithm \ref{alg:teleport} to allow transformations on both parameters and data. Denote $g_X$ as the group action on data only. The group actions on data at all teleportation steps can be precomposed as a function $f$ and applied to the input data at inference time.
\begin{algorithm}
    \SetKwInput{Input}{Input}
    \SetKwInput{Output}{Output}
    \SetKwInput{Initialize}{Initialize}

    \Input{Loss function $\L(\vw, X)$, learning rate $\eta$, number of epochs $t_{max}$, initialized parameters $\vw_0$, symmetry $G$, teleportation schedule $K$, data $X$.}
    \Output{$\vw_{t_{max}}$, data transformation $f$.}
    \Initialize{$f = $ the identity function.}
    \For{$t \leftarrow 0$ \KwTo $t_{max}-1$}{
        \If{$t \in K$}{
            $g \leftarrow \text{argmax}_{g \in G} \| \grad_{g \cdot \vw_t} \L (g \cdot(\vw_t, X)) \|^2$ \\
            $\vw_t, X \leftarrow g \cdot (\vw_t, X)$ \\
            $f \leftarrow g_X \circ f$
        }
        $\vw_{t+1} \leftarrow \vw_t - \eta \grad_{\vw_t} \L$
    }

    \KwRet{$\vw_{t_{max}}$, $f$}
    \caption{Symmetry Teleportation (with data transformation)}
    \label{alg:teleport-data}
\end{algorithm}

\section{Group actions}
In this section, we derive the group actions for the test functions and multi-layer neural networks. More details about group theory can be found in textbooks such as \cite{Lang}.

\subsection{Continuous symmetry in test functions}
\subsubsection{Ellipse}

Consider the following loss function with $a \in \R^{\geq 0}$:
\begin{align}
    \L(x_1, x_2) &= x_1^2 + ax_2^2
\end{align}

If we change the variables to $\L(u(x_1, x_2), v(x_1, x_2)) = u^2 + v^2$, 2D rotations leave $\L$ unchanged. Therefore $SO(2)$ is a symmetry of $\L(x_1,x_2)$. Let $g_\theta \in SO(2)$, and define the group action as 
\begin{align}
    g_\theta \cdot \begin{bmatrix}
        x_1 \\
        x_2
    \end{bmatrix} = \begin{bmatrix}
        1 & 0 \\
        0 & 1/\sqrt{a}
    \end{bmatrix}
    \begin{bmatrix}
        \cos \theta & -\sin \theta \\
        \sin \theta & \cos \theta
    \end{bmatrix}
    \begin{bmatrix}
        1 & 0 \\
        0 & \sqrt{a}
    \end{bmatrix}
    \begin{bmatrix}
        x_1 \\
        x_2
    \end{bmatrix}
\end{align}
Then 
\begin{align}
    \L(x_1, x_2) = \L(g \cdot (x_1, x_2))
\end{align}

\subsubsection{Rosenbrock function}
\label{app:rosenbrock_symmetry}
Consider the Rosenbrock function with 2 variables \cite{rosenbrock1960automatic}:
\begin{align}
    \L(x_1, x_2) = 100(x_1^2 - x_2)^2 + (x_1 - 1)^2
\end{align}

Let $u = 10(x_1^2 - x_2)$ and $v = x_1 - 1$. After changing the variables from $x$ and $y$ to $u$ and $v$, $\L$ has a rotational symmetry. Note that the function, $h: \R^2 \xrightarrow[]{} \R^2$, that maps $x_1, x_2$ to $u, v$ is bijective: 
\begin{align}
    (u, v) &= h(x_1, x_2) = (10(x_1^2 - x_2), x_1 - 1) \cr
    (x_1, x_2) &= h^{-1}(u, v) = (v + 1, (v + 1)^2 - 0.1u) \cr
    h(x_1, x_2) &= h(y_1, y_2) \Rightarrow (x_1, x_2) = (y_1, y_2)
\end{align}

Next, we show that $SO(2)$ is a symmetry of $\L(x_1,x_2)$. Let $\rho$ be a representation of $SO(2)$ acting on $\R^2$. For $g \in SO(2)$, define the following group action:
\begin{align}
    g \cdot (x_1, x_2) = 
    h^{-1} \left( \rho(g) h
    (x_1, x_2) \right)
\end{align}
Then 
\begin{align}
    \L(x_1, x_2) = \L(g \cdot (x_1, x_2))
\end{align}

For the Rosenbrock function with 2N parameters, we can construct a bijective function $h:\R^{2N} \xrightarrow[]{} \R^{2N}$ by transforming each of the $N$ pairs of variables as before, and $SO(2N)$ is a symmetry of $\L(x_1,...,x_{2N})$. However, we will only use the 2 variable version in the experiments.

\subsubsection{Booth function}
\label{app:booth_symmetry}
Consider the Booth function \cite{jamil2013literature}:
\begin{align*}
    \L(x_1, x_2) = (x_1 + 2x_2 - 7)^2 + (2x_1 + x_2 - 5)^2
\end{align*}
Similar to the Rosebrock function, a change of variables reveals a rotational symmetry of $\L$:
\begin{align}
    (u, v) &= h(x_1, x_2) = (x_1 + 2x_2 - 7, 2x_1 + x_2 - 5) \cr
    (x_1, x_2) &= h^{-1}(u, v) = (-\frac{1}{3} u + \frac{2}{3} v + 1, \frac{2}{3} u - \frac{1}{3} v + 3) \cr
\end{align}
The function $h: \R^2 \xrightarrow[]{} \R^2$ that maps $x_1, x_2$ to $u, v$ is bijective. Let $\rho$ be a representation of $SO(2)$ acting on $\R^2$. For $g \in SO(2)$, define the following group action:
\begin{align}
    g \cdot (x_1, x_2) = 
    h^{-1} \left( \rho(g) h
    (x_1, x_2) \right)
\end{align}
Then $\L(x_1,x_2)$ admits an $SO(2)$ symmetry:
\begin{align}
    \L(x_1, x_2) = \L(g \cdot (x_1, x_2))
\end{align}

\subsection{Continuous symmetry in multi-layer neural Networks}
\label{appendix:mult-layer}
In this and the following sections, we provide proofs for the theoretical results. We restate the propositions from the main text for readability.
\begin{manualproposition}{\ref{thm:mult-layer}}
A linear network is invariant under all groups $G_m \equiv \mathrm{GL}_{d_m}$ acting as 
\begin{align*}
    g \cdot (W_{m}, W_{m-1}) &= (W_{m}g^{-1}, g W_{m-1}), &
    g \cdot W_{k} = W_k, \quad \forall k \notin \{m,m-1\}. 
\end{align*}
\end{manualproposition}

\begin{proof} In the linear network $h_m = W_m h_{m-1}$, hence
\begin{align}
    g \cdot (W_{m}, h_{m-1})= (W_{m}g^{-1}, g h_{m-1}), \quad 
    g\cdot h_m = W_{m}g^{-1} g h_{m-1} = h_m
\end{align}
which means a $p$-layer linear network is invariant under all $G_m$ with $m\leq p$ as they keep the output $h_p$ invariant ($\forall g\in G_m$, $g\cdot h_p= h_p$). 
\out{
For $m\geq 1$ this is a symmetry of the architecture and is independent of the  input $X$, defining 
\begin{align}
    g \cdot (W_{m}, W_{m-1}) = (W_{m}g^{-1}, g W_{m-1}),\quad  
    g\cdot h_m = W_{m}g^{-1} g W_{m-1}h_{m-2} = h_m
\end{align}
We define the group action as identity on the weights of all other layers, i.e. for  $g \in \mathrm{GL}_{d_{m-1}}$,  $g \cdot W_k = W_k$ when $k\notin \{ m, m-1\}$. 
For a $p$-layer network, we have $g\cdot h_p =h_p $, meaning the entire network is invariant under this group action.
}
\end{proof}

\begin{manualproposition}{\ref{thm:mult-layer-nonlinear}}
Assume that $h_{m-2}$ is invertible.
A multi-layer network with bijective activation $\sigma$ has a $\mathrm{GL}_{d_{m-1}}$ symmetry.
For $g_m \in G_m = \mathrm{GL}_{d_{m-1}}(\R)$ the following group action keeps $h_p$ with $p\geq m$ invariant 
\begin{align}
    g_m \cdot W_k = \left\{
    \begin{array}{lc}
        W_m g_m^{-1} & k = m \\
        \sigma^{-1}\pa{g_m \sigma\pa{W_{m-1} h_{m-2}} } h_{m-2}^{-1} & k = m-1 \\
        W_k & k \not\in \{m, m-1\}
    \end{array}
    \right.
\label{eq:Wm-g-action-main-appendix}
\end{align}
\end{manualproposition}

\begin{proof}
From \eqref{eq:g-act-W-h-0}, we want to convert $g\cdot h_{m-1}$ into a transformation on $W_{m-1}$ instead of $h_{m-1}$. 
In other words, we want to find a set of transformed weights $W'_m,W'_{m-1}$ which yields the same network output $\tilde{h}_m$: 
\begin{align}
    \tilde{h}_m&= W'_m \sigma\pa{W'_{m-1} h_{m-2}} = W_m g^{-1} g \sigma\pa{W_{m-1} h_{m-2}}  \cr 
    \Rightarrow W'_m &= W_m g^{-1}, \qquad  \sigma\pa{W'_{m-1} h_{m-2}} = g \sigma\pa{W_{m-1} h_{m-2}} 
    \label{eq:Wm-g-derive-0}
\end{align}
Solving \eqref{eq:Wm-g-derive-0} 
we get
\begin{align}
    W'_{m-1} &=
    \sigma^{-1}\pa{g \sigma\pa{W_{m-1} h_{m-2}} } h_{m-2}^{-1}.
    \label{eq:W-act-nonlin}
\end{align}

\eqref{eq:Wm-g-action-main-appendix} follows from \eqref{eq:Wm-g-derive-0} and \eqref{eq:W-act-nonlin}. 

To verify that \eqref{eq:Wm-g-action-main-appendix} is a valid group action,
\begin{align}
    I \cdot W_k &= \left\{
    \begin{array}{lc}
        W_m I & k = m \\
        \sigma^{-1}\pa{I \sigma\pa{W_{m-1} h_{m-2}} } h_{m-2}^{-1} & k = m-1 \\
        W_k & k \not\in \{m, m-1\}
    \end{array}
    \right. \cr
    &= W_k
\end{align}
and
\begin{align}
    g_1 \cdot (g_2 \cdot W_k) &= \left\{
    \begin{array}{lc}
        W_m g_2^{-1} g_1^{-1} & k = m \\
        \sigma^{-1}\pa{g_1 \sigma\pa{
        \br{\sigma^{-1}\pa{g_2 \sigma\pa{W_{m-1} h_{m-2}} } h_{m-2}^{-1}}
        h_{m-2}} } h_{m-2}^{-1} & k = m-1 \\
        W_k & k \not\in \{m, m-1\}
    \end{array}
    \right. \cr
    &= \left\{
    \begin{array}{lc}
        W_m (g_1 g_2)^{-1} & k = m \\
        \sigma^{-1}\pa{(g_1 g_2) \sigma\pa{W_{m-1} h_{m-2}} } h_{m-2}^{-1} & k = m-1 \\
        W_k & k \not\in \{m, m-1\}
    \end{array}
    \right. \cr
    &= (g_1 g_2) \cdot W_k
\end{align}
\end{proof}

\section{Theoretical analysis of teleportation}
\subsection{What symmetries help accelerate optimization}
\label{appendix:speedup-cond-prop}

\begin{manualproposition}{\ref{prop:speedup-cond}}
Let $\vw'=g\cdot \vw$ be a point we teleport to. 
Let $J= \ro \vw' /\ro \vw $ be the Jacobian. 
Symmetry teleportation using $g$ accelerates the rate of decay in $\L$ if it satisfies
\begin{align*}
    \left\|\br{J^{-1}}^T\grad \L(\vw)\right\|^2_\eta > \left\|\grad \L(\vw)\right\|^2_\eta.
\end{align*}
\end{manualproposition}
\begin{proof}
 Let $\vw' = g \cdot \vw$. Denote the Jacobian as $J$, where $J_{ij} = \partial w_i' / \partial w_j$. Then the inverse of $J$ has entries $J^{-1}_{ij} = \partial w_i / \partial w_j'$. 

The gradient at $\vw'$ is
\begin{align}
    \frac{\ro \L(\vw')}{\ro \vw'} 
    &= \frac{\ro \L(\vw)}{\ro \vw'} 
    = \sum_j \frac{\ro \L(\vw)}{\ro \vw_j} \frac{\ro \vw_j}{\ro \vw_i'} 
    = \sum_j \frac{\ro \L(\vw)}{\ro \vw_j} J^{-1}_{ji} 
    = \left(\left(\frac{\ro \L(\vw)}{\ro \vw}\right)^T J^{-1} \right)^T
    = (J^{-1})^T \frac{\ro \L(\vw)}{\ro \vw}
\end{align}
The rate of change of $\L$ in gradient flow is
\begin{align}
    {d\L(\vw')\over dt}& = \bk{{\ro \L \over \ro \vw' },{d\vw' \over dt} } = - \left\| (J^{-1})^T \frac{\ro \L(\vw)}{\ro \vw} \right\|_\eta^2 
\end{align}

Thus we will have a speedup if 
\begin{align}
    \left\| (J^{-1})^T \frac{\ro \L(\vw)}{\ro \vw} \right\|_\eta^2  > \left\| \frac{\ro \L(\vw)}{\ro \vw} \right\|_\eta^2
    \label{eq:speedup-cond}
\end{align}
\end{proof}

\textbf{Proof of Corollary \ref{prop:corollary-cond}}
\begin{proof}
    Since $J = g$, 
    using \eqref{eq:dLdt-0} and the l.h.s. of \eqref{eq:dL-w-cond-0} we have 
    \begin{align}
        {d\L(g\cdot \vw)\over dt }& 
        = -\grad \L^T g^{-1}\eta \br{g^{-1}}^T \grad \L = \grad \L^T \br{g^T \eta^{-1} g }^{-1} \grad \L  = \|\grad \L\|^2_\eta 
    \end{align}
\end{proof}


\subsection{Improvement of subsequent steps }
\label{appendix:lipschitz}

\begin{manualproposition}{\ref{prop:lipschitz}}
Consider the gradient descent with a $G$-invariant loss $\L(\vw)$ and learning rate $\eta \in \R^+$. Let $\vw_t$ be the parameter at time $t$ and $\vw_t' = g \cdot \vw_t$ the parameter teleported by $g\in G$. Let $\vw_{t+T}$ and $\vw_{t+T}'$ be the parameters after $T$ more steps of gradient descent from $\vw_t$ and $\vw_t'$ respectively. Under Assumption \ref{thm:lipschitz}, if $\eta L < 1$, and 
\begin{align*}
    \frac{\left\| \partial \L / \partial \vw_{t}' \right\|_2}{\left\| \partial \L / \partial \vw_{t} \right\|_2} \geq \frac{(1 + \eta L)^T}{(1 - \eta L)^T},
\end{align*}
then 
\begin{align*}
    \left\| \frac{\partial \L}{\partial \vw_{t+T}'} \right\|_2 \geq \left\| \frac{\partial \L}{\partial \vw_{t+T}} \right\|_2.
\end{align*}
\end{manualproposition}
\begin{proof}
From the definition of Lipschitz continuity and the update rule of gradient descent, 
\begin{align}
    \left| \left\| \frac{\partial \L}{\partial \vw_{t+1}} \right\|_2 - \left\| \frac{\partial \L}{\partial \vw_{t}} \right\|_2 \right| \leq L \| \vw_{t+1} - \vw_t \|_2 = L\left\| \eta \frac{\partial \L}{\partial \vw_{t}} \right\|_2
\end{align}
Equivalently,
\begin{align}
    (1 - \eta L) \left\| \frac{\partial \L}{\partial \vw_{t}} \right\|_2 \leq \left\| \frac{\partial \L}{\partial \vw_{t+1}} \right\|_2 \leq (1 + \eta L) \left\| \frac{\partial \L}{\partial \vw_{t}} \right\|_2
\end{align}
By unrolling $T$ steps, we have
\begin{align}
    (1 - \eta L)^T \left\| \frac{\partial \L}{\partial \vw_{t}} \right\|_2 
    \leq \left\| \frac{\partial \L}{\partial \vw_{t+T}} \right\|_2 
    \leq (1 + \eta L)^T \left\| \frac{\partial \L}{\partial \vw_{t}} \right\|_2
\end{align}
Similarly, for a teleported point $\vw_t' = g \cdot \vw_t$,
\begin{align}
    (1 - \eta L)^T \left\| \frac{\partial \L}{\partial \vw_{t}'} \right\|_2 
    \leq \left\| \frac{\partial \L}{\partial \vw_{t+T}'} \right\|_2 
    \leq (1 + \eta L)^T \left\| \frac{\partial \L}{\partial \vw_{t}'} \right\|_2
\end{align}
Therefore, if
\begin{align}
     (1 - \eta L)^T \left\| \frac{\partial \L}{\partial \vw_{t}'} \right\|_2 
     \geq (1 + \eta L)^T \left\| \frac{\partial \L}{\partial \vw_{t}} \right\|_2
\end{align}
then it is guaranteed that 
\begin{align}
    \left\| \frac{\partial \L}{\partial \vw_{t+T}'} \right\|_2 \geq \left\| \frac{\partial \L}{\partial \vw_{t+T}} \right\|_2
\end{align}
\end{proof}

\subsection{Convergence analysis for convex quadratic functions}
\label{appendix:convex-quadratic}
We first show that starting from a point in $S_c$, all other points in $S_c$ can be reached with one teleportation. 
\begin{proposition}
$S_c$ contains a single orbit. That is, $G \cdot \vw \equiv \{g \cdot \vw: g \in G\} = S_c$~ for all $\vw \in S_c$.
\end{proposition}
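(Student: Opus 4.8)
The plan is to linearize the problem by a change of coordinates that turns the level set $S_c$ into a Euclidean sphere and the group action into the standard action of $O(n)$, for which transitivity is a classical fact. Throughout I take $\rho$ to be the defining representation, $\rho(g)=g$, which is what makes the image of $\rho$ act transitively on spheres; I would flag explicitly that a general orthogonal $\rho$ (e.g. the trivial one) need not give a single orbit.

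First I would introduce the substitution $\vu = A^{1/2}\vw$. Since $A$ is positive definite, $A^{1/2}$ is symmetric and invertible, so this is a linear bijection of $\R^n$. Under it, $\L_A(\vw) = \vw^T A \vw = \vu^T \vu = \|\vu\|_2^2$, so $S_c$ is carried bijectively onto the sphere $\Sigma_c = \{\vu \in \R^n : \|\vu\|_2^2 = c\}$. Moreover, applying $A^{1/2}$ to the action $g \cdot \vw = A^{-1/2}\rho(g) A^{1/2}\vw$ gives $A^{1/2}(g\cdot\vw) = \rho(g)\,\vu$, i.e. in $\vu$-coordinates the action is simply $\vu \mapsto \rho(g)\vu = g\vu$, the standard action of $O(n)$ on $\R^n$.

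Second, I would prove that $O(n)$ acts transitively on $\Sigma_c$. For $c = 0$ the set $S_0 = \{0\}$ is a single fixed point and the claim is immediate, so assume $c > 0$. Given $\vu_1, \vu_2 \in \Sigma_c$ with $\|\vu_1\|_2 = \|\vu_2\|_2 = \sqrt{c}$, normalize to unit vectors $\hat{\vu}_1, \hat{\vu}_2$ and, using Gram--Schmidt, extend each to an orthonormal basis; the matrices $O_1, O_2 \in O(n)$ whose columns are these bases satisfy $O_i e_1 = \hat{\vu}_i$, so $g = O_2 O_1^{-1} \in O(n)$ maps $\vu_1$ to $\vu_2$. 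This transitivity of $O(n)$ on the sphere is the only real content of the argument, and hence the main obstacle to state cleanly.

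Finally, I would transport transitivity back through the bijection $\vw \mapsto A^{1/2}\vw$: given $\vw_1, \vw_2 \in S_c$, set $\vu_i = A^{1/2}\vw_i \in \Sigma_c$, pick $g$ with $\rho(g)\vu_1 = \vu_2$, and conclude $g \cdot \vw_1 = A^{-1/2}\rho(g)A^{1/2}\vw_1 = A^{-1/2}\vu_2 = \vw_2$. Combined with the inclusion $G \cdot \vw \subseteq S_c$, which follows from the $O(n)$-invariance of $\L_A$ established just before the proposition, this yields $G \cdot \vw = S_c$ for every $\vw \in S_c$.
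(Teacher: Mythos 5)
Your proof is correct and takes essentially the same route as the paper's: both conjugate by $A^{1/2}$ to turn $S_c$ into a Euclidean sphere, establish transitivity of $O(n)$ there by completing the normalized vectors to orthonormal bases, and transport back via $g = g_2 g_1^{-1}$. Your version is marginally more careful — it treats $c=0$ separately (the paper divides by $\sqrt{c}$ without comment) and flags that $\rho$ must be the defining representation — but the underlying argument is identical.
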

\begin{proof}
Consider two points $\vw_1, \vw_2 \in S_c$. Then $\vw_1^T A \vw_1 = (A^{\frac{1}{2}} \vw_1)^T (A^{\frac{1}{2}} \vw_1) = c$ and $\vw_2^T A \vw_2 = (A^{\frac{1}{2}} \vw_2)^T (A^{\frac{1}{2}} \vw_2) = c$. Let $\vv_1 = \frac{A^{\frac{1}{2}} \vw_1}{\sqrt{c}}$, $\vv_2 = \frac{A^{\frac{1}{2}} \vw_2}{\sqrt{c}}$ and $\mathbf{e}_1 = [1, 0, ..., 0]^T$. Since $\|v_1\| = \|v_2\| = 1$, there exists $g_1, g_2 \in O(n)$, such that $g_1 \mathbf{e}_1 = \vv_1$ and $g_2 \mathbf{e}_1 = \vv_2$. One way to construct such $g_1$ is let the first column be equal to $\vv_1$ and other columns be the rest of the orthonormal basis. Let $g = g_2 g_1^{-1}$. Then $\vv_2 = g \vv_1$, $A^{\frac{1}{2}} \vw_2 = g A^{\frac{1}{2}} \vw_1$, and $\vw_2 = A^{-\frac{1}{2}} g A^{\frac{1}{2}} \vw_1 = g \cdot \vw_1$. 

We have shown that for any $\vw_1, \vw_2 \in S_c$, there exists a $g \in G$ such that $\vw_2 = g \cdot \vw_1$. Therefore, the group action of $G$ on $S_c$ is transitive. Equivalently, $S_c$ contains a single orbit. 
\end{proof}
The objective of teleportation is transforming parameters using a group element to maximize the norm of gradient:
\begin{align}
\label{eq:teleportation-obj}
    \max_{g \in G} \|\grad \L_A |_{g \cdot \vw}\|^2_2. 
\end{align}
Since all points on the level set are reachable, the target teleportation destination is the point with the largest gradient norm on the same level set. In other words, \eqref{eq:teleportation-obj} is equivalent to the following optimization problem:
\begin{align}
\label{eq:teleportation-obj2}
    \max_{\vw'} \|\grad \L_A |_{\vw'}\|^2_2 \cr
    \text{s.t. } \L_A(\vw') = \L_A(\vw).
\end{align}
Let $c = \L_A(\vw)$. Substitute in $\L_A$ and $\grad \L_A$, we have the following equivalent formulation:
\begin{align}
\label{eq:teleportation-obj3}
    \max_{\vw'} \|A\vw'\|^2_2 \cr
    \text{s.t. } \vw'^T A \vw' = c.
\end{align}
Next, we solve this optimization problem and show that the gradient norm is maximized on the gradient flow trajectory starting from its solution.
\begin{proposition}
\label{prop:teleportation-sol-quadratic}
The solution to \eqref{eq:teleportation-obj3} is an eigenvector of $A$ corresponding to its largest eigenvalue. 
\end{proposition}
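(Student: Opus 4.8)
The plan is to treat \eqref{eq:teleportation-obj3} as a quadratic objective subject to a single quadratic equality constraint and solve it by Lagrange multipliers, exploiting the fact that $A$ is symmetric positive definite. First I would observe that $\|A\vw'\|_2^2 = \vw'^T A^2 \vw'$, and that since $A \succ 0$ and $c = \L_A(\vw) > 0$, the constraint set $\{\vw' : \vw'^T A \vw' = c\}$ is a compact ellipsoid. Hence the continuous objective attains its maximum, and any maximizer is a constrained stationary point. I would then form the Lagrangian $\Lambda(\vw', \lambda) = \vw'^T A^2 \vw' - \lambda(\vw'^T A \vw' - c)$ and set $\grad_{\vw'} \Lambda = 0$, yielding the stationarity condition $A^2 \vw' = \lambda A \vw'$. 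Because $A$ is invertible, left-multiplying by $A^{-1}$ gives $A\vw' = \lambda \vw'$, so every constrained stationary point is an eigenvector of $A$ and each multiplier $\lambda$ is a corresponding eigenvalue.

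Next I would determine which eigenvector achieves the maximum by evaluating the objective at an eigenvector. If $A\vw' = \lambda \vw'$, then the constraint $\vw'^T A \vw' = c$ forces $\lambda \|\vw'\|_2^2 = c$, while the objective becomes $\vw'^T A^2 \vw' = \lambda^2 \|\vw'\|_2^2 = \lambda c$. Since $A$ is positive definite, every eigenvalue is strictly positive, so $\lambda c$ is an increasing function of $\lambda$; the objective is therefore maximized precisely when $\lambda = \lambda_{\max}$, the largest eigenvalue of $A$. Combined with the existence of a maximizer from the compactness argument, this shows the solution is an eigenvector associated with $\lambda_{\max}$.

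As an independent check I would also carry out the change of variables $\vu = A^{1/2}\vw'$, under which the constraint becomes $\|\vu\|_2^2 = c$ (a sphere) and the objective becomes $\vu^T A \vu$, reducing \eqref{eq:teleportation-obj3} to maximizing a Rayleigh quotient on a sphere; its maximizer is the top eigenvector $\vu$ of $A$, and since $A^{-1/2}$ shares $A$'s eigenbasis, $\vw' = A^{-1/2}\vu$ remains an eigenvector for $\lambda_{\max}$, giving the same conclusion. The main subtlety will be the step that singles out the \emph{largest} eigenvalue: the Lagrange condition alone only identifies the critical points as eigenvectors, so the essential work is the explicit evaluation showing the objective equals $\lambda c$ at each candidate, which makes the ordering by eigenvalue transparent. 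Everything else (existence via compactness, invertibility of $A$) is routine given that $A \succ 0$.
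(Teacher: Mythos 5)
Your proposal is correct and follows essentially the same route as the paper's proof: form the Lagrangian $\vw'^T A^2 \vw' - \lambda(\vw'^T A \vw' - c)$, use stationarity plus invertibility of $A$ to conclude $A\vw' = \lambda\vw'$, and then evaluate the objective as $\lambda c$ under the constraint to single out the largest eigenvalue. Your additions---the compactness argument guaranteeing a maximizer exists (which the paper leaves implicit) and the Rayleigh-quotient sanity check via $\vu = A^{1/2}\vw'$---are sound refinements rather than a different method.
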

\begin{proof}
We solve \eqref{eq:teleportation-obj3} using the method of Lagrangian multipliers. The Lagrangian of \eqref{eq:teleportation-obj3} is
\begin{align}
    \mathscr{L} = \vw'^T A^T A \vw' - \lambda(\vw'^T A \vw' - c).
\end{align}
Setting the derivative with respect of $\vw'$ to 0, we have
\begin{align}
    \frac{\ro \mathscr{L}}{\ro \vw} = 2 A^T A \vw' - 2\lambda A \vw' = 0,
\end{align}
which gives
\begin{align}
    A^T A \vw' = \lambda A \vw'.
\end{align}
Since $A$ is positive definite, $A = A^T$ and $A$ is invertible. Therefore,
\begin{align}
    A \vw' = \lambda \vw',
\end{align}
so the solution to \eqref{eq:teleportation-obj3} is an eigenvector of $A$. Then, the constraint is $\vw'^T A \vw' = \lambda \|\vw'\|^2 = c$, and the objective becomes $\max_{\vw'} \lambda^2 \|\vw'\|^2 = \max_{\vw'} c\lambda$. Therefore, we want $\lambda$ to be the largest eigenvalue of $A$. Hence the optimal $\vw'$ is an eigenvector of $A$ corresponding to its largest eigenvalue. 
\end{proof}

\begin{manualproposition}{\ref{prop:convex-quadratic-global-opt}}
If at point $\vw$, $\| \grad \L_A|_\vw \|^2$ is at a maximum in $S_{\L_A(\vw)}$, then for any point $\vw'$ on the gradient flow trajectory starting from $\vw$, $\| \grad \L_A|_{\vw'} \|^2$ is at a maximum in $S_{\L_A(\vw')}$.
\end{manualproposition}

\begin{proof}
From Proposition \ref{prop:teleportation-sol-quadratic}, $\vw$ is an eigenvector of $A$ corresponding to its largest eigenvalue. Then the gradient of $\L_A$ is $A\vw = \lambda \vw$. Therefore, on the gradient flow trajectory starting from $\vw$, every point has the same direction as $\vw$, meaning that the points are all eigenvectors of $A$ corresponding to its largest eigenvalue. Therefore, $\| \grad \L_A \|^2$ is always at a maximum on the loss level sets. 
\end{proof}

Finally, we show that maximizing the magnitude of gradient is equivalent to minimizing the distance to $\vw^*$ in a loss level set (Proposition \ref{prop:convex-quadratic-dist-minimum}).
\begin{proposition}
The solution to the following optimization problem is the same as the solution to \eqref{eq:teleportation-obj3}:
\begin{align}
\label{eq:teleportation-obj4}
    \min_{\vw'} \|\vw' - \vw^*\|^2_2 \cr
    \text{s.t. } \vw'^T A \vw' = c.
\end{align}
\end{proposition}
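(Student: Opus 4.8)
The plan is to reduce the stated problem to the same Lagrange-multiplier computation already used in Proposition \ref{prop:teleportation-sol-quadratic} and then to check that both objectives are extremized by the eigenvector of $A$ with the largest eigenvalue. The first step is to recall that the global minimum of $\L_A$ sits at $\vw^* = 0$, so the objective $\|\vw' - \vw^*\|^2_2$ collapses to $\|\vw'\|^2_2$; thus \eqref{eq:teleportation-obj4} asks for the point of minimal Euclidean norm on the ellipsoid $\{\vw' : \vw'^T A \vw' = c\}$.

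Next I would form the Lagrangian $\mathscr{L} = \vw'^T \vw' - \mu(\vw'^T A \vw' - c)$ and set $\partial \mathscr{L}/\partial \vw' = 2\vw' - 2\mu A \vw' = 0$. Since $A$ is positive definite, hence invertible, this gives $A \vw' = \mu^{-1}\vw'$, so every stationary point is an eigenvector of $A$; write $A\vw' = \lambda \vw'$. Substituting into the constraint yields $\vw'^T A \vw' = \lambda \|\vw'\|^2_2 = c$, so the objective evaluated at an eigenvector equals $\|\vw'\|^2_2 = c/\lambda$. Because $A \succ 0$ and $c = \L_A(\vw) > 0$, every admissible $\lambda$ is positive and $c/\lambda$ is strictly decreasing in $\lambda$; therefore the minimum of $\|\vw'\|^2_2$ is attained when $\lambda$ is the largest eigenvalue of $A$.

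Finally I would compare with Proposition \ref{prop:teleportation-sol-quadratic}, where the same eigenvector analysis shows that the maximizer of $\|A\vw'\|^2_2 = c\lambda$ over the ellipsoid is again the eigenvector associated with the largest eigenvalue. Since both \eqref{eq:teleportation-obj3} and \eqref{eq:teleportation-obj4} single out an eigenvector of $A$ for the top eigenvalue, normalized to lie on the level set $\vw'^T A \vw' = c$, their solution sets coincide, which is the claim.

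The only point demanding genuine care is the \emph{direction} of the extremization: minimizing the Euclidean norm and maximizing the gradient norm must both select the \textbf{largest} eigenvalue, and this is exactly what the two explicit expressions $c/\lambda$ and $c\lambda$ guarantee, being respectively decreasing and increasing in $\lambda$. A secondary subtlety is non-uniqueness when the top eigenvalue is degenerate, in which case any unit eigenvector in the top eigenspace (rescaled onto the level set) solves both problems, so the two solution sets still agree.
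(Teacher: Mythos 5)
Your proposal is correct and follows essentially the same route as the paper's own proof: Lagrange multipliers on the constraint $\vw'^T A \vw' = c$, showing stationary points are eigenvectors of $A$, and comparing the objective values $c/\lambda$ and $c\lambda$ to conclude both problems select the top eigenvector. Your care in distinguishing the multiplier $\mu$ from the eigenvalue $\lambda$ (the paper conflates them, writing $A\vw' = \lambda\vw'$ directly) and your remark on degenerate top eigenspaces are small refinements, not a different argument.
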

\begin{proof}
Similar to Proposition \ref{prop:teleportation-sol-quadratic}, we solve this optimization problem using the method of Lagrangian multipliers. Note that $\vw^* = 0$. The Lagrangian is
\begin{align}
    \mathscr{L} = \vw'^T \vw' - \lambda(\vw'^T A \vw' - c).
\end{align}
Setting the derivative with respect of $\vw'$ to 0, we have
\begin{align}
    \frac{\ro \mathscr{L}}{\ro \vw} = 2 \vw' - 2\lambda A \vw' = 0,
\end{align}
which gives
\begin{align}
    A \vw' = \lambda \vw',
\end{align}
so the solution to \eqref{eq:teleportation-obj4} is an eigenvector of $A$. Then, the constraint is $\vw'^T A \vw' = \lambda \|\vw'\|^2 = c$, and the objective becomes $\min_{\vw'} \|\vw'\|^2 = \min_{\vw'} \frac{c}{\lambda}$. Therefore, we want $\lambda$ to be the largest eigenvalue of $A$. Hence the optimal $\vw'$ is an eigenvector of $A$ corresponding to its largest eigenvalue, which is the same as the solution to \eqref{eq:teleportation-obj3}. 
\end{proof}

For a more concrete example, consider a diagonal matrix $A$ with positive diagonal elements. Then the level sets of $\L_A$ are $n$-dimensional ellipsoids centered at the origin 0, with axes in the same directions as the standard basis.
The point with largest $\|\nabla \L_A\|^2$ on a level set is in the eigendirection of $A$ corresponding to its largest eigenvalue, or equivalently, a point on the smallest semi-axes of the ellipsoid. 
Note that this point has the smallest distance to the global minimum $\vw^* = 0$ among all points in the same level set.
In addition, the gradient flow trajectory from this point always points to $\vw^*$.
Therefore, like the 2D ellipse function, one teleportation on the $n$-dimensional ellipsoid also guarantees optimal gradient norm at all points on the trajectory. 

\subsection{Relation to second-order optimization methods}
\label{appendix:newton-direction-prop}

To prove Proposition \ref{prop:newton-direction}, we first note that when the norm of the gradient is at a critical point on the level set of the loss function, the gradient is an eigenvector of the Hession. 

\begin{lemma}
If $\ro_\vv \left\|\frac{\ro \L}{\ro \vw}\right\|_2^2 = 0$ for all unit vector $\vv$ that is orthogonal to $\frac{\ro \L}{\ro \vw}$, then $\frac{\ro \L}{\ro \vw}$ is an eigenvector of the Hessian of $\L$.
\label{lemma:newton-direction}
\end{lemma}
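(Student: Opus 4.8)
The plan is to reduce the hypothesis, which concerns directional derivatives of the squared gradient norm, to a single orthogonality condition on the Hessian applied to the gradient. Throughout, write $H = \ro^2 \L / \ro \vw^2$ for the Hessian (which is symmetric) and set $\vu = \grad \L$ for the gradient, so that the scalar field of interest is $\|\grad \L\|_2^2 = \vu^T \vu$.

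First I would compute the Euclidean gradient of this scalar field with respect to $\vw$. By the chain rule, together with the symmetry of $H$, one obtains $\grad_\vw \|\grad \L\|_2^2 = 2 H \grad \L$. The directional derivative appearing in the statement is then $\ro_\vv \|\grad \L\|_2^2 = \vv^T \grad_\vw \|\grad \L\|_2^2 = 2\, \vv^T H \grad \L$, which is the only real computation needed.

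Next I would invoke the hypothesis, which asserts that this quantity vanishes for every \emph{unit} vector $\vv$ orthogonal to $\grad \L$. Since $\vv^T H \grad \L$ is linear (hence homogeneous) in $\vv$, it in fact vanishes for \emph{all} $\vv$ in the orthogonal complement $(\grad \L)^\perp$, not only the unit ones. This says precisely that $H \grad \L$ is orthogonal to every vector in $(\grad \L)^\perp$, so $H \grad \L \in \left( (\grad \L)^\perp \right)^\perp = \mathrm{span}(\grad \L)$. Membership in $\mathrm{span}(\grad \L)$ means $H \grad \L = \lambda \grad \L$ for some scalar $\lambda$, which is exactly the claim that $\grad \L$ is an eigenvector of the Hessian.

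The argument is essentially routine, so there is no genuine obstacle; the only points requiring care are the chain-rule identity $\grad_\vw \|\grad \L\|_2^2 = 2 H \grad \L$, where symmetry of $H$ must be used, and the homogeneity step that upgrades the condition from unit vectors orthogonal to $\grad \L$ to the whole subspace $(\grad \L)^\perp$. I expect the verification of these two points to be the full content of the proof.
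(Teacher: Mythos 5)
Your proposal is correct and follows essentially the same route as the paper's proof: compute $\grad_\vw \left\|\grad\L\right\|_2^2 = 2H\grad\L$, observe that the hypothesis forces $H\grad\L$ to be orthogonal to all of $(\grad\L)^\perp$, and conclude $H\grad\L \in \mathrm{span}(\grad\L)$. The only differences are cosmetic: you phrase the computation via the chain rule rather than in indices, and you make explicit the homogeneity step (unit vectors to all of $(\grad\L)^\perp$) that the paper passes over silently.
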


\begin{proof}
From the definition of the directional derivative,
\begin{align}
    \ro_\vv \left\|\frac{\ro \L}{\ro \vw}\right\|_2^2 
    &= \vv \cdot \frac{\ro}{\ro \vw} \left\|\frac{\ro \L}{\ro \vw}\right\|_2^2
\end{align}
Writing the last term in indices, 
\begin{align}
    \frac{\ro}{\ro \vw_i} \left\|\frac{\ro \L}{\ro \vw}\right\|_2^2
    &= \frac{\ro}{\ro \vw_i} \sum_j \left(\frac{\ro \L}{\ro \vw_j}\right)^2 \cr
    &= \sum_j \frac{\ro}{\ro \vw_i} \left(\frac{\ro \L}{\ro \vw_j}\right)^2 \cr
    &= \sum_j 2 \frac{\ro \L}{\ro \vw_j} \frac{\ro^2 \L}{\ro \vw_i \ro \vw_j} \cr
    &= 2 \left(H \frac{\ro \L}{\ro \vw}\right)_i
\end{align}
Removing the indices,
\begin{align}
    \frac{\ro}{\ro \vw} \left\|\frac{\ro \L}{\ro \vw}\right\|_2^2
    &= 2 H \frac{\ro \L}{\ro \vw}
\end{align}
Substitute back and we have
\begin{align}
    \ro_\vv \left\|\frac{\ro \L}{\ro \vw}\right\|_2^2 
    &= \vv \cdot \left(2 H \frac{\ro \L}{\ro \vw}\right)
\end{align}
Since $\ro_\vv \left\|\frac{\ro \L}{\ro \vw}\right\|_2^2 = 0$ for all vector $\vv$ that is orthogonal to $\frac{\ro \L}{\ro \vw}$, $\vv \cdot \left(2 H \frac{\ro \L}{\ro \vw}\right) = 0$ for all vector $\vv$ that is orthogonal to $\frac{\ro \L}{\ro \vw}$.
In other words, $2 H \frac{\ro \L}{\ro \vw}$ is orthogonal to all vectors that are orthogonal to $\frac{\ro \L}{\ro \vw}$. Therefore, $2 H \frac{\ro \L}{\ro \vw}$ has the same direction of $\frac{\ro \L}{\ro \vw}$, and $\frac{\ro \L}{\ro \vw}$ is an eigenvector of the Hessian of $\L$.
\end{proof}

Proposition \ref{prop:newton-direction} is a direct consequence of Lemma \ref{lemma:newton-direction}.
\begin{manualproposition}{\ref{prop:newton-direction}}
Let $S_{\L_0} = \{\vw: \L(\vw) = \L_0\}$ be a level set of $\L$. 
If at a particular $\vw \in S_{\L_0}$ we have $ \|\grad\L(\vw)\|_2\geq  \|\grad\L(\vw')\|_2 $  for all $\vw'$ in a small neighborhood of $\vw$ in $S_{\L_0}$, then the gradient $\grad \L(\vw)$ has the same direction as the Newton's direction $H^{-1} \grad\L(\vw) $.
\end{manualproposition}

\begin{proof}
From Lemma \ref{lemma:newton-direction}, $\frac{dL}{dw}$ is an eigenvector of $H$. Therefore, it is also an eigenvector of $H^{-1}$. Hence $\frac{dL}{dw}$ has the same direction as $H^{-1} \frac{dL}{dw}$.
\end{proof}

\section{Experiment details and additional results}
\subsection{Test functions}
\label{appendix:exp-test-function}

We compare the gradient at different loss values for gradient descent with and without teleportation. Figure \ref{fig:test_functions_supp} shows that the trajectory with teleportation has a larger $d\L/dt$ value than the trajectory without teleportation at the same loss values. Therefore, the rate of change in the loss is larger in the trajectory with teleportation, which makes it favorable. 

\begin{figure}[t!]
\centering
\hspace{80pt} a \hspace{115pt} b \hfill ~ \\
\includegraphics[width=0.32\textwidth]{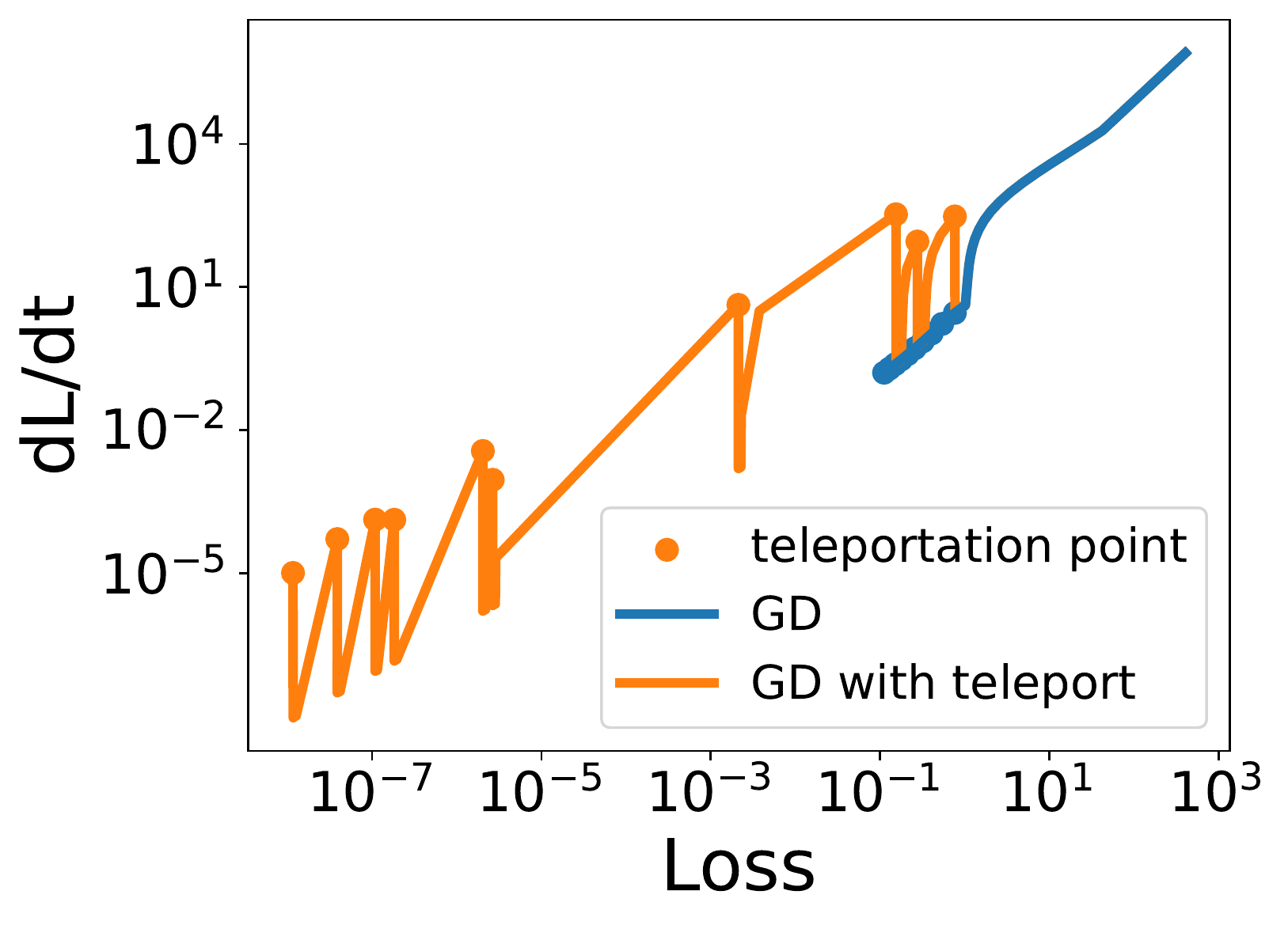}
\includegraphics[width=0.32\textwidth]{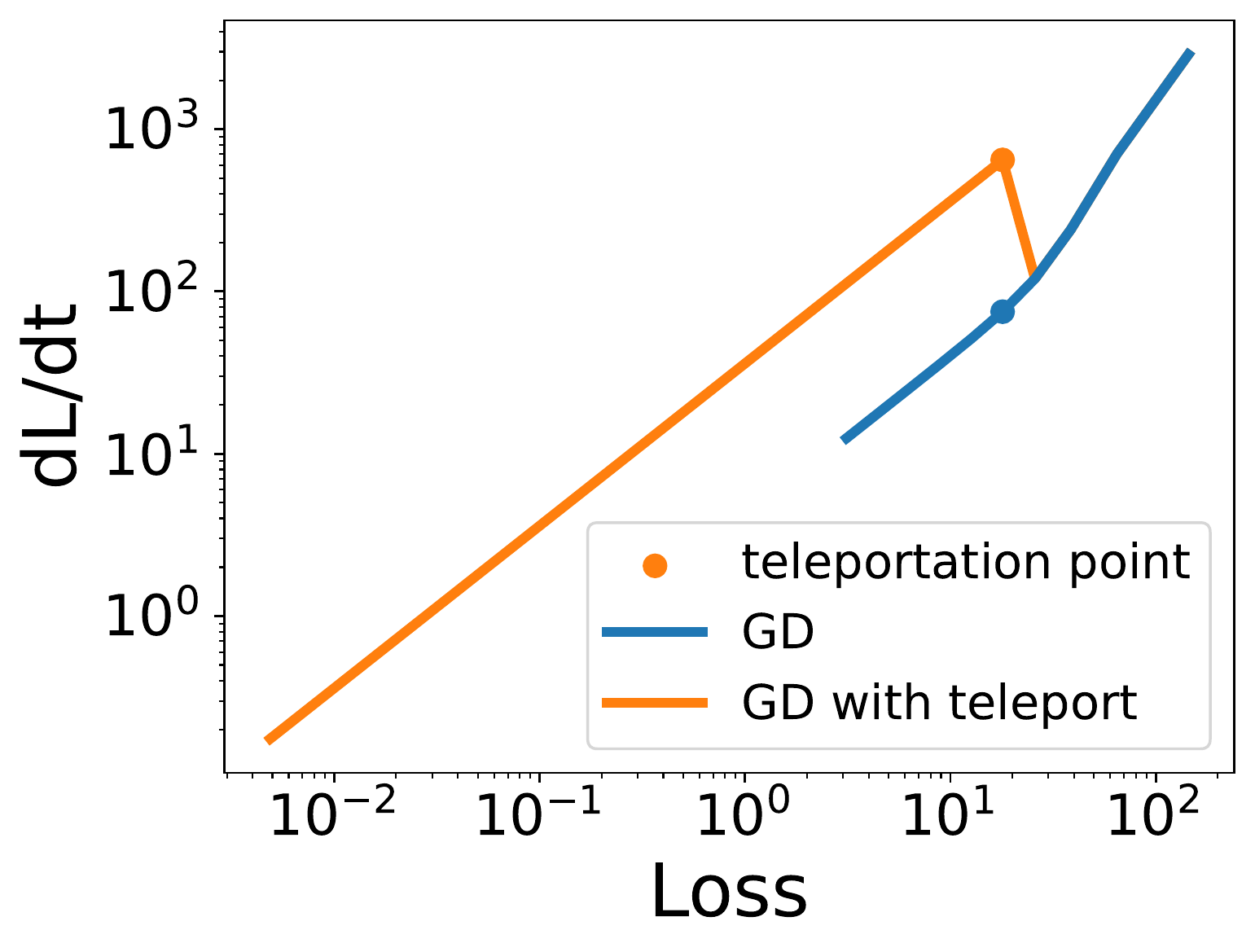}
\caption{Gradient on the trajectory of optimizing the Rosenbrock function (left) and Booth function (right). At the same loss value, the graident is larger on the trajectory with teleportation, indicating a better descent path.}
\label{fig:test_functions_supp}
\end{figure}

\subsection{Multilayer neural network}
\label{appendix:exp-multilayer}
\paragraph{Additional training details} Data $X, Y$ and initialization of parameters $W$ are set uniformly at random over $[0, 1]$. 
GD uses learning rate $10^{-4}$ and AdaGrad uses $10^{-1}$. 
Each algorithm is run 300 steps.
When using teleportation, we perform symmetry transform on the parameters once at epoch 5.
In GD, the group elements used for these transforms are found by gradient ascent on $T$ for 8 steps, with learning rate $10^{-7}$. In AdaGrad, the group elements are found by gradient ascent for 2 steps, with learning rate $10^{-5}$. The choice of hyperparameters comes from a grid search described in the next section.


\paragraph{Hyperparameter tuning} To observe the effect of hyperparameters on the speedup in computation time, we did a hyperparameter sweep on the number of steps and the learning rate used in each teleportation. The speedup of teleportation on SGD and AdaGrad is defined by $t_{sgd} / t_{sgd+teleport}$ and $t_{adagrad} / t_{adagrad+teleport}$ respectively, where $t_{sgd},t_{adagrad}$ are the wall-clock time required to reach convergence using SGD or AdaGrad, and $t_{sgd+teleport},t_{adagrad+teleport}$ are convergence time with teleportation. We consider the optimization algorithm converged if the difference between the loss of two consecutive steps is less than $10^{-3}$. This experiment is run on one CPU.

Figure \ref{fig:two-layer-sweep} shows the speedup of the same multilayer neural network regression problem defined in Section \ref{sec:experiment}, but teleporting only once at epoch 5. We did a grid search for teleportation learning rates in $[10^{-9}, 10^{-8}, 10^{-7}, 10^{-6}, 10^{-5}]$ and number of teleportation steps in $[1, 2, 4, 8, 16, 32]$. Omitted points in the figure indicate that the gradient descent fails to converge within 2000 steps or diverges.

When converged, most hyperparameter combinations improve the convergence speed in wall-clock time (speedup $> 1$). There are trade-offs in both the number of steps used for each teleportation and the teleportation learning rate. Increasing the number of steps used to optimize teleportation target allows us to find a better point in the parameter space but increases the cost of one teleportation. Increasing the learning rate of optimizing the group element improves $\|\ro \L / \ro \vw\|$ but is more likely to lead to divergence since $\|\ro \L / \ro \vw\|$ can become too large for the gradient descent learning rate. 



\begin{figure}[t!]
\centering
\ \ \ a\hfill b \hfill c\hfill d \hfill ~ \\
\includegraphics[width=0.24\textwidth]{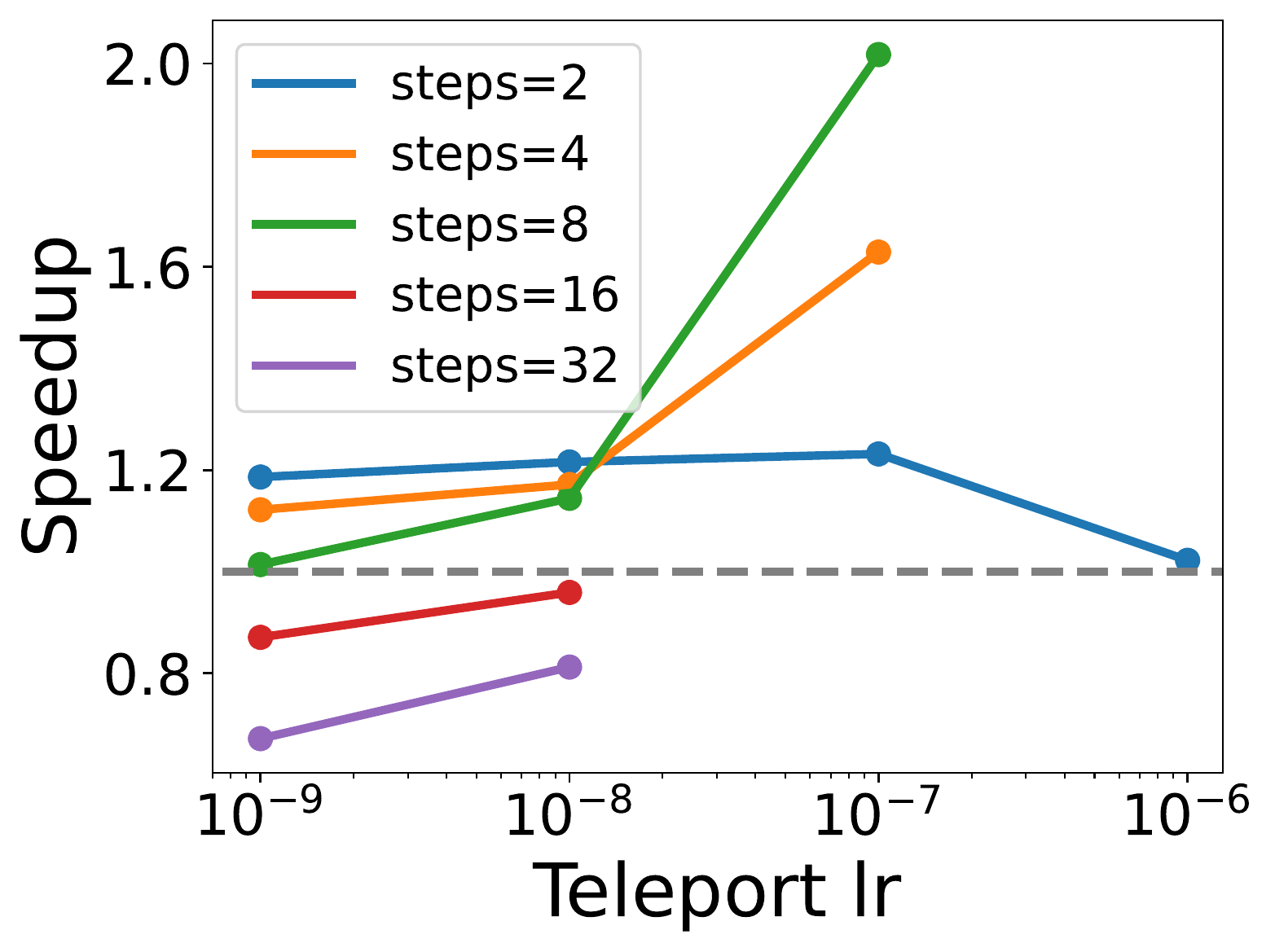}
\includegraphics[width=0.24\textwidth]{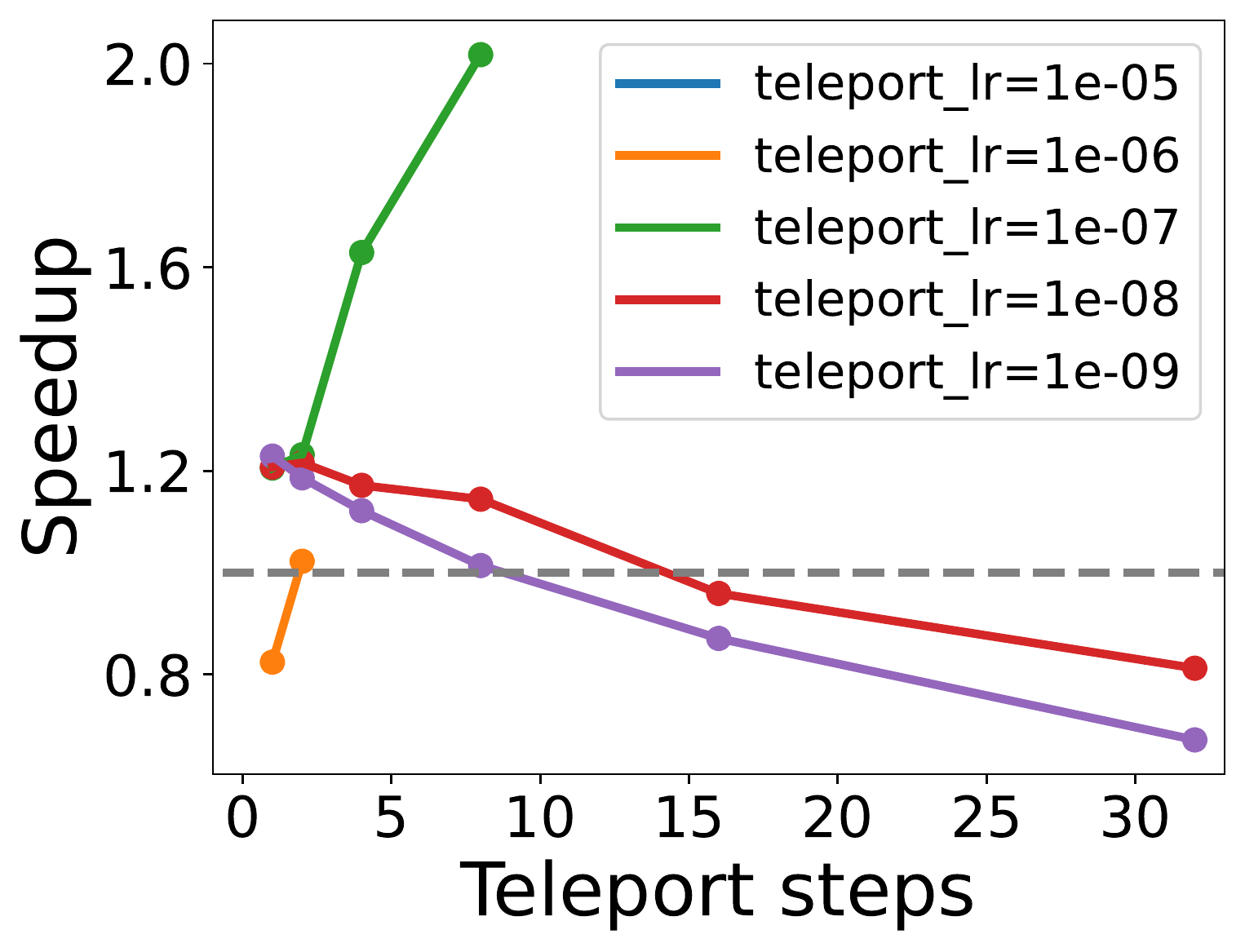}
\includegraphics[width=0.24\textwidth]{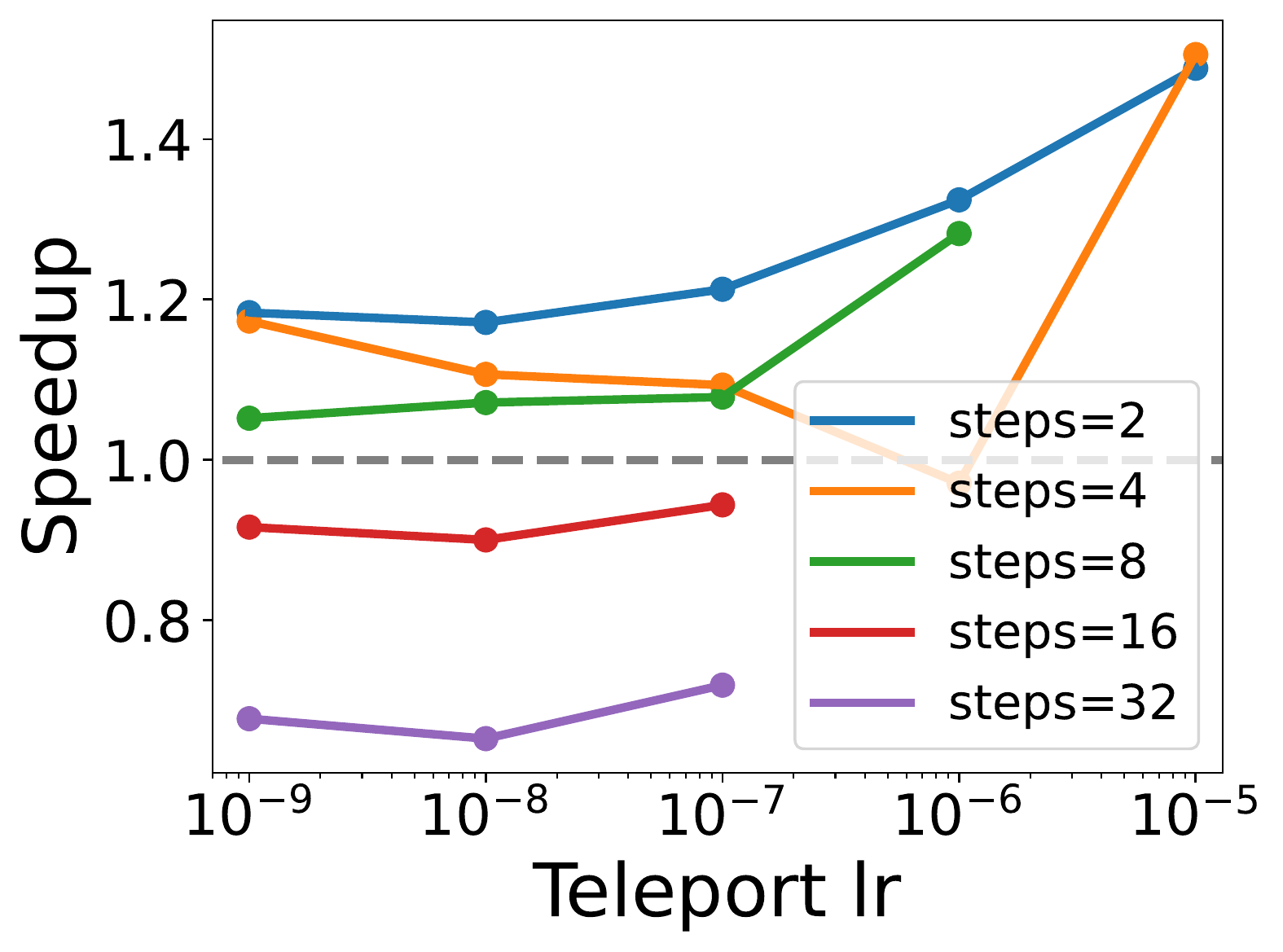}
\includegraphics[width=0.24\textwidth]{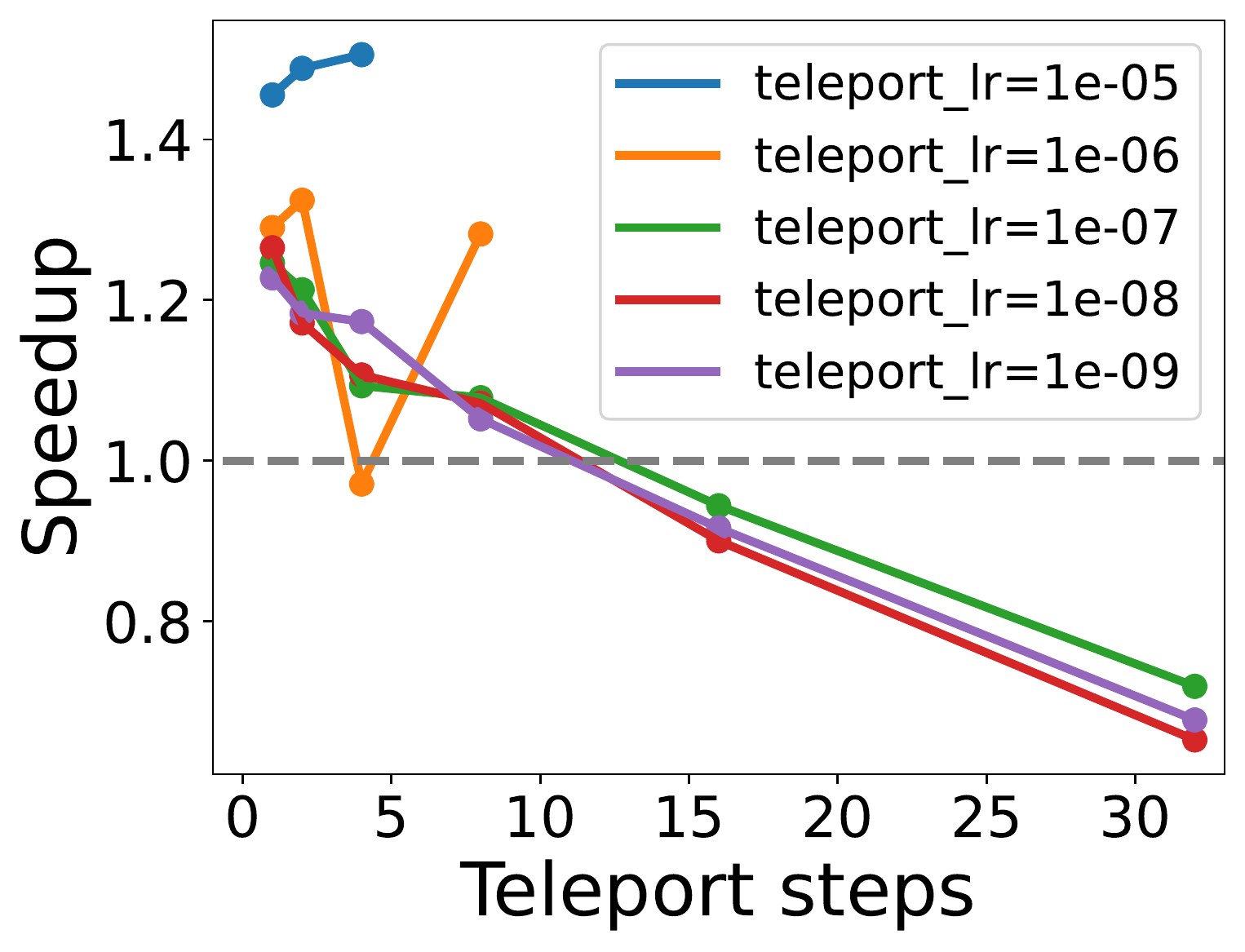}
\caption{Hyperparameter sweeps of the number of steps and the learning rate used to find the optimal group element in teleportation. The wall-clock speedup of applying teleportation is shown separately for gradient descent (a)(b) and AdaGrad (c)(d). The dashed line represents speedup $=1$.}
\label{fig:two-layer-sweep}
\end{figure}

\end{document}